\def\eqref#1{equation~\ref{#1}}
\def\1{\bm{1}}
\DeclareMathAlphabet{\mathsfit}{\encodingdefault}{\sfdefault}{m}{sl}
\SetMathAlphabet{\mathsfit}{bold}{\encodingdefault}{\sfdefault}{bx}{n}
\definecolor{OliveGreen}{rgb}{0.27, 0.52, 0.13}
\newtheorem{theorem}{Theorem}[section]
\title{Conformal Structured Prediction}
\author{Botong Zhang, Shuo Li \& Osbert Bastani  \\
Computer and Information Science\\
University of Pennsylvania \\
\texttt{\{bzhang16, lishuo1, obastani\}@seas.upenn.edu}
}
\begin{document}

\maketitle

\begin{abstract}
Conformal prediction has recently emerged as a promising strategy for quantifying the uncertainty of a predictive model; these algorithms modify the model to output sets of labels that are guaranteed to contain the true label with high probability. However, existing conformal prediction algorithms have largely targeted classification and regression settings, where the structure of the prediction set has a simple form as a level set of the scoring function. However, for complex structured outputs such as text generation, these prediction sets might include a large number of labels and therefore be hard for users to interpret. In this paper, we propose a general framework for conformal prediction in the structured prediction setting, that modifies existing conformal prediction algorithms to output structured prediction sets that implicitly represent sets of labels. In addition, we demonstrate how our approach can be applied in domains where the prediction sets can be represented as a set of nodes in a directed acyclic graph; for instance, for hierarchical labels such as image classification, a prediction set might be a small subset of coarse labels implicitly representing the prediction set of all their more fine-descendants. We demonstrate how our algorithm can be used to construct prediction sets that satisfy a desired coverage guarantee in several domains.
\end{abstract}

\section{Introduction}

Deep neural networks (DNNs) have recently proven to be highly effective at solving challenging prediction problems. Despite this progress, a key challenge is the difficulty building reliable systems out of DNNs since they are intrinsically prone to error. One way to address this challenge is to use uncertainty quantification to determine when the model's predictions may be unreliable. As a consequence, uncertainty quantification has been a key strategy for improving reliability when integrating DNNs into broader systems or when interfacing with human experts.

Conformal prediction has emerged as a promising strategy for uncertainty quantification~\citep{vovk2005algorithmic,angelopoulos2023conformal}. This technique replaces a model $f:\mathcal{X}\to\mathcal{Y}$ with a \textit{conformal predictor} $h:\mathcal{X}\to2^{\mathcal{Y}}$; given an input $x\in\mathcal{X}$, $h(x)\subseteq\mathcal{Y}$ is a set of labels that captures the uncertainty of the model. One of the key advantages of conformal prediction is that it provides a \textit{coverage guarantee}---roughly speaking, $h(x)$ is guaranteed to contain the ground truth label $y^*$ corresponding to $x$ with high probability under standard assumptions. This strategy provides the user with an interpretable form of uncertainty quantification. For instance, suppose a robot is using an object detector to identify obstacles; if we apply conformal prediction to the object detector, then the robot could avoid the entire prediction set of obstacles to ensure safety with high probability.

Most existing conformal prediction algorithms target the classification and regression settings, where prediction sets have simple structures---e.g., a subset of labels in classification, or prediction intervals in regression. However, many practical prediction problems involve far more complex structured outputs. While we can na\"{i}vely apply existing algorithms to these problems, the resulting prediction sets ignore the structure of the label space, which can lead to prediction sets that are large and uninterpretable. As a consequence, there has been recent interest in developing conformal prediction algorithms that are tailored to structured prediction; however, existing approaches have all targeted specific domains, such as code generation~\citep{khakhar2023pac} and question answering~\citep{mohri2024language, quach2024conformal}, and do not provide general algorithms.

We propose a novel framework for conformal structured prediction. Our goal is to generate \textit{structured prediction sets}, which are interpretable representations of a potentially large number of concrete labels. For instance, in image classification, a structured prediction set might be a small set of coarse-grained labels that implicitly represent the set of all fine-grained labels that are leaf nodes of the coarse-grained labels in the label hierarchy (see Figure~\ref{fig:example}). Alternatively, for a question answering task where the answer is a year, a structured prediction set might be a small set of intervals.

\begin{figure}[t]
\centering
\begin{subfigure}[t]{0.47\textwidth}
\centering
\includegraphics[width=\textwidth]{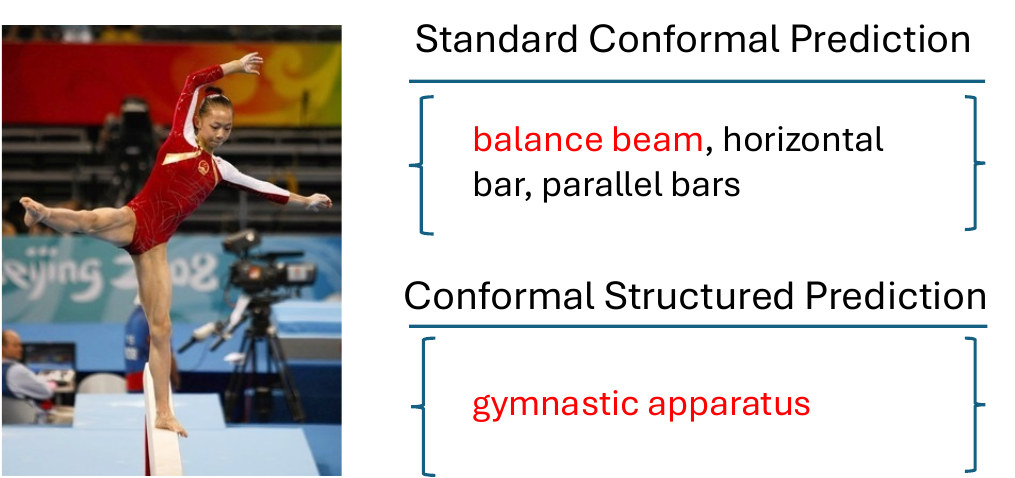}
\caption{Example}
\label{fig:example}
\end{subfigure}
\hfill
\begin{subfigure}[t]{0.5\textwidth}
\centering
\includegraphics[width=\textwidth]{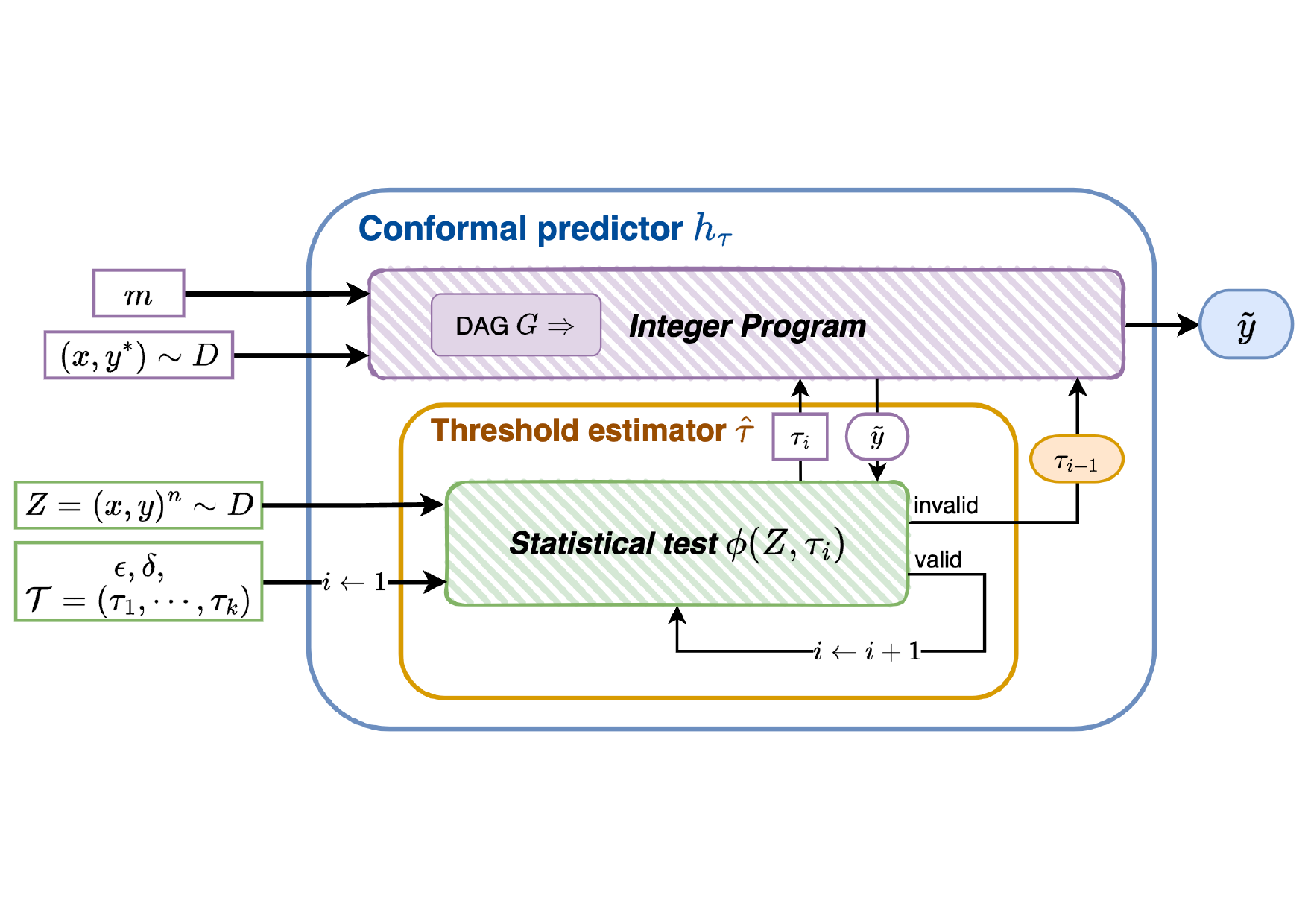}
\caption{Overview}
\label{fig:workflow}
\end{subfigure}
\caption{(a) Structured prediction sets improve interpretability while maintaining the coverage guarantee. In this example, the standard conformal prediction set (top) is guaranteed to include the true label ``balance beam'' with high probability, but may be more difficult to interpret for someone without gymnastics knowledge. In contrast, the structured prediction set (bottom) can be more interpretable since it contains only a single coarse-grained label ``gymnastic apparatus'', while guaranteeing that the true label is descendant of this label in the label hierarchy with high probability. The error level for both the standard conformal prediction and conformal structured prediction is $0.05$ (i.e., the desired coverage level is $0.95$). See Section~\ref{sec:experimental_results} and Appendix~\ref{sec:additional_qualitative_examples} for more examples. (b) An overview of our framework. To estimate the conformal predictor parameter $\tau$, our algorithm uses a statistical test $\phi$ designed to either establish marginal or PAC coverage guarantees based on the given calibration set. It iterates until an invalid $\tau_i$ is identified, and returns the last valid threshold $\tau_{i-1}$. This computation assumes given a subroutine to compute the optimal prediction set $\tilde{y}$. In general, any optimizer can be used in conjunction with our framework; for the case where the prediction sets are derived from a DAG structure (including hierarchical labels), we show how the optimization problem can be encoded as an integer program.}
\end{figure}

As with traditional conformal prediction, our framework considers a search space over conformal predictors parameterized by a single real number $\tau\in\mathbb{R}$, which intuitively says that the prediction set should include labels with predicted probability at least $\tau$. However, whereas in traditional conformal prediction, the search space over $\tau$ is monotone (i.e., coverage always decreases in $\tau$ whereas prediction set size always decreases in $\tau$), this is no longer the case in the structured prediction set setting. Existing conformal prediction algorithms make significant use of this monotonic structure to enable efficient estimation of the optimal choice of $\tau$. Our framework modifies these existing algorithms using an approach inspired by techniques from the learn-then-test framework~\citep{angelopoulos2022learntestcalibratingpredictive} (which is designed to handle potential lack of monotonicity due to non-standard choice of loss function). Roughly speaking, our algorithm searches sequentially over the space of candidate $\tau$ and returns as soon as it finds one that is invalid. For achieving marginal coverage guarantees, the correctness guarantee follows directly from learn-then-test. We extend these ideas to the setting of probably approximately correct (PAC) (or training conditional) coverage guarantees~\citep{vovk2012conditional,park2020pac}. In particular, we provide a strategy that can be applied to obtaining PAC prediction sets in the structured prediction setting.

Next, we instantiate our framework in the context of a general class of structured prediction sets. We consider structured prediction sets that are subsets of a directed acyclic graph (DAG). For instance, the DAG might represent the label hierarchy, where the internal nodes are coarse-grained labels and the leaf nodes are fine-grained labels. Then, a structured prediction set might be a small subset of coarse-grained labels, which corresponds to the prediction set of fine-grained labels that are descendants of those coarse-grained labels. In this setting, we describe how to compute the optimal structured prediction set for a given value of $\tau$ by expressing the optimization problem as an integer program. This algorithm can then be used both in conjunction with our framework to estimate the optimal parameter $\tau$, as well as during inference to compute structured prediction sets.

We empirically evaluate our approach in five domains: (i) predicting integers represented by a list of MNIST digits~\citep{lecun-mnisthandwrittendigit-2010}, where the structured prediction sets are ranges, (ii) hierarchical image classification using the ImageNet dataset~\citep{deng2009imagenet}, (iii) a question answering benchmark based on the SQuAD dataset~\citep{rajpurkar2016squad100000questionsmachine} restricted to questions where the answers are years, and the structured prediction sets are unions of a small number of intervals, (iv) a Python code generation task based on the MBPP dataset~\citep{austin2021program}, where the structured prediction sets consist of partial programs (i.e., programs with \emph{holes} indicating portions that remain to be implemented), and (v) predicting emotion labels in a given piece of text based on the GoEmotions dataset~\citep{demszky-etal-2020-goemotions}. Our experiments demonstrate how our approach can be used to construct small prediction sets while satisfying a desired coverage guarantee (marginal or PAC).

\textbf{Contributions.}
We propose the first general framework (summarized in Figure~\ref{fig:workflow}) for conformal prediction applied to structured label spaces. We assume the user provides a search space of interpretable structures that implicitly represent prediction sets of labels. Then, our framework constructs a conformal predictor that is guaranteed to achieve the desired coverage guarantee (either marginal or PAC) while attempting to minimize prediction set size.

\textbf{Related work.}
Traditional conformal prediction algorithms provide a \textit{marginal coverage guarantee}~\citep{vovk2005algorithmic,angelopoulos2023conformal}, which says that assuming examples are drawn i.i.d. from the data distribution, then the prediction set contains the true label with high probability:
\begin{equation*}
\mathbb{P}_{Z\sim D^n,(x,y^*)\sim D}[y^*\in h_Z(x)] \ge 1-\epsilon, 
\end{equation*}
where $x$ is the input, $y^*$ is the true label, $D$ is the data distribution, $Z$ is the calibration set provided to the conformal prediction algorithm, $h_Z$ is the conformal predictor constructed using $Z$, and $\epsilon\in\mathbb{R}_{>0}$ is a given error bound. Alternate algorithms disentangle the probability over $Z$ from the one over $(x,y^*)$ to provide \textit{probably approximately correct (PAC)} (or \textit{training conditional}) coverage:
\begin{equation*} 
\mathbb{P}_{Z \sim D^n} \left[\mathbb{P}_{(x,y^*)\sim D}[y^*\in h_Z(x)] \ge 1-\epsilon \right] \geq 1-\delta.
\end{equation*}
Conformal prediction has recently been applied to deep learning, including image classification~\citep{angelopoulos2023conformal, park2020pac}, anomaly detection~\citep{Li_2022}, object detection~\citep{li2022pacmultiobjectdetectiontracking, andéol2023confidentobjectdetectionconformal}, semantic segmentation~\citep{angelopoulos2022learntestcalibratingpredictive}, and question answering~\citep{li2024traqtrustworthyretrievalaugmented}. However, in these domains, the label space is either a finite set (i.e., classification) or a real number (i.e., regression), or a relatively simple product of the two. For instance, in object detection, the label is a list of tuples $(x_1,y_1,x_2,y_2,c)$, where $(x_1,y_1,x_2,y_2)\in\mathbb{R}^4$ are real-valued coordinates and $c\in\mathcal{C}$ is an object category. We can straightforwardly quantify uncertainty separately for each component using traditional conformal prediction.

However, in many domains, we may require alternative representations of the prediction sets to ensure interpretability. For instance, a natural way to represent uncertainty for hierarchical label spaces is to use coarse-grained labels (see Figure~\ref{fig:example} for an example). \citet{mortier2022setvaluedpredictionhierarchicalclassification} takes this approach, but unlike our method, they do not offer formal coverage guarantees. Furthermore, their goal is to find the prediction set with the highest probability mass, whereas our goal is to find the smallest possible prediction sets under a constraint that the coverage rate meets a desired level. Therefore, their approach sometimes obtains significantly lower coverage (e.g., $\le50\%$), whereas our results demonstrate that our approach achieves the desired coverage rate. \citet{khakhar2023pac} provides a conformal prediction algorithm where the true label is a program, and the prediction set is implicitly represented as a ``partial program'' where certain portions have been omitted. \citet{quach2024conformal} and \citet{mohri2024language} adopt similar strategies for certain question answering tasks. However, these existing approaches are highly specialized to a specific form of prediction set, whereas our approach is general. Moreover, \citet{quach2024conformal} still use sets of samples, which may suffer from lack of interpretability. \citet{mohri2024language}'s algorithm is closely related to the learn-then-test algorithm. However, they do not consider a tree structure of structured prediction sets; instead, they only consider a single sequence of increasingly coarse-grained labels. Thus, their approach is not applicable to our problem. 

Next, \citet{angelopoulos2024conformalriskcontrol} considers hierarchical image classification in ImageNet. However, their goal is to minimize more general risk functions instead of prediction set size, whereas our goal is to extend conformal prediction to structured prediction. In particular, their purpose is to control an alternative risk function based on the hierarchical label structure, while still constructing traditional prediction sets. Indeed, we believe that our approach could be combined with theirs to further improve interpretability of the resulting prediction sets. For this specific task, their algorithm can be viewed as producing a structured prediction set; however, even in this context, their search is constrained to only parent nodes of the class with the highest estimated probability $\hat{y}$, whereas our search space is much more general (and can be much more flexibly specified by the user).
\citet{angelopoulos2021RAPS} proposes an algorithm that introduces regularization to encourage smaller and stable sets. However, the goal of their paper is to reduce the average prediction set size by adding a regularization term, whereas our goal is to compute structured prediction sets.

More broadly, conformal prediction has been worked with other kinds of structure. For instance, \citet{lee2024distributionfreeinferencehierarchicaldata} and \citet{dunn2022distributionfreepredictionsetstwolayer} address cases where inputs are grouped and where repeated measurements are present in the dataset, respectively. However, these are examples of structure in the input space instead of the label space. Finally, \citet{angelopoulos2022learntestcalibratingpredictive} studies minimizing risk functions other than prediction set size; though they are studying a very different problem, their techniques can also be applied to our problem when aiming to achieve marginal guarantees. To the best of our knowledge, we are the first to adapt these techniques to provide PAC guarantees.

\section{Problem Formulation}

As with existing conformal prediction algorithms, we assume that the model $f:\mathcal{X}\to\mathcal{Y}$ is implicitly represented by a \textit{scoring function} $g:\mathcal{X}\times\mathcal{Y}\to\mathbb{R}$, so $f(x)=\operatorname*{\arg\max}_{y\in\mathcal{Y}}g(x,y)$. Typically, $g(x,y)$ is the predicted probability that the true label for input $x$ is $y$, but we make no assumptions about $g$.

Given $g$, we consider a space of \textit{structured prediction sets} $\tilde{\mathcal{Y}}$, along with a mapping $\gamma:\tilde{\mathcal{Y}}\to2^{\mathcal{Y}}$ such that $\gamma(\tilde{y})\subseteq\mathcal{Y}$ is a prediction set of labels, and a \textit{size function} $\sigma:\tilde{\mathcal{Y}}\to\mathbb{R}$ measuring the size of a structured prediction set. We consider conformal predictors $h_{\tau}:\mathcal{X}\to\tilde{\mathcal{Y}}$ of the form
\begin{align}
\label{eqn:conformalpredictor}
h_{\tau}(x)=\operatorname*{\arg\min}_{\tilde{y}\in\tilde{\mathcal{Y}}}\sigma(\tilde{y})~\text{subj. to}~\sum_{y\in\mathcal{Y}}g(x,y)\cdot\mathbbm{1}(y\in\gamma(\tilde{y}))\ge\tau,
\end{align}
where $\tau\in\mathbb{R}$ is a real-valued parameter that we need to estimate. In other words, we want the smallest prediction set according to the size function $\sigma$ that achieves cumulative score at least $\tau$. Intuitively, if $g(x,y)$ is the predicted probability of $y$ given $x$, then the constraint says that $\tilde{y}$ covers $\tau$ fraction of labels $y\in\mathcal{Y}$ weighted by their probability $g(x,y)$. In general, while $g$ does not need to be a predicted probability, our approach may be more effective when this is the case.

The main challenge is estimating the parameter $\tau$. We assume given a held-out \textit{calibration set} $Z\subseteq(\mathcal{X}\times\mathcal{Y})^n$ of i.i.d. samples $(x,y^*)\sim D$ from the underlying data distribution $D$. Then, we want to choose the smallest possible $\tau$ subject to some kind of coverage guarantee; i.e., $y^*\in h_{\tau}(x)$ with high probability assuming $(x,y^*)\sim D$. Specifically, we consider two coverage guarantees. First, we consider a \textit{marginal guarantee}
\begin{align*}
\mathbb{P}_{(x,y^*)\sim D,Z\sim D^n}[y^*\in\gamma(h_{\hat\tau(Z)}(x))]\ge1-\epsilon,
\end{align*}
where $\hat\tau:(\mathcal{X}\times\mathcal{Y})^*\to\mathbb{R}$ is an estimator outputting a choice of $\tau$ for a given $Z\in(\mathcal{X}\times\mathcal{Y})^*$, and $\epsilon\in\mathbb{R}_{>0}$ is a given error bound (note that $\tau$ depends implicitly on $\epsilon$). Second, we consider a \textit{probably approximate correct (PAC) guarantee} (or \textit{training-conditional guarantee})
\begin{align*}
\mathbb{P}_{Z\sim D^n}[\mathbb{P}_{(x,y^*)\sim D}[y^*\in\gamma(h_{\hat\tau(Z)}(x))]\ge1-\epsilon]\ge1-\delta,
\end{align*}
where $\epsilon,\delta\in\mathbb{R}_{>0}$ are given error bounds. Intuitively, PAC guarantees ``disentangle'' the probability over the calibration set $Z$ and the current example $(x,y^*)$; they are useful when the goal is to provide a $1-\epsilon$ guarantee across all predictions with high probability.

Finally, given $\tau$, computing $h_{\tau}$ is nontrivial in general. We can typically formulate the computation of $h_{\tau}$ as a constraint solving problem such as an integer program; we describe how to do so for a special case in Section~\ref{sec:optimization}. Our estimators $\hat{\tau}$ require that $h_{\tau}$ is computed using the same algorithm on the calibration examples as on the new examples, but do not make any other assumptions; for instance, it could rely on heuristics instead of finding the global optimum.

\section{Algorithms for Structured Conformal Prediction}
\label{sec:optimization}

Our algorithm considers a finite set of candidate thresholds $\mathcal{T}_{\epsilon}^*\subseteq\mathbb{R}$. We assume that $\mathcal{T}=(\tau_1,\tau_2,...,\tau_k)$ is ordered by $\tau_1>\tau_2>...>\tau_k$. We assume these thresholds are in descending order (i.e., from least to most desirable). At a high level, our algorithm will consider each candidate threshold $\tau_i$ in sequence, using a statistical test in conjunction with the calibration set $Z$ to determine if $\tau_i$ is valid. We use $\phi:(\mathcal{X}\times\mathcal{Y})^*\times\mathbb{R}\to\{0,1\}$ to denote this statistical test; its first argument is the calibration set $Z$, its second argument is a candidate threshold $\tau$, and its output is $\phi(Z,\tau)=1$ if it determines $\tau$ is valid and $\phi(Z,\tau)=0$ otherwise. We provide tests for marginal and PAC coverage.

Now, our algorithm considers increasingly desirable candidate thresholds $\tau_i$; for each one, it runs the statistical test $\phi(Z,\tau_i)$ to determine whether $\tau_i$ is valid. The search halts when $\phi(Z,\tau_i)=0$ for the first time at $\tau_i$, and the algorithm returns $\tau_{i-1}$. Formally, given an error level $\epsilon\in[0,1]$ and $m\in\mathbb{N}$, our algorithm returns the threshold $\hat{\tau}(Z)=\tau_{\hat{i}(Z,\phi)}$, where
\begin{align*}
\hat{i}(Z,\phi)=\operatorname*{\arg\max}_{i\in[k]}i
~\text{subj. to}~\bigwedge_{i'=1}^i\mathbbm{1}(\phi(Z,\tau_{i'})=1).
\end{align*}
In other words, $\hat{i}$ is the largest $i\in[k]$ before which all $\tau_{i'}$'s are valid. Below, we describe specific implementations of the statistical test $\phi$ for marginal and PAC guarantees.

\textbf{Marginal guarantees.}
Given $\epsilon\in[0,1]$, we use the following statistical test:
\begin{align*}
\phi_{\text{marginal}}^{\epsilon}(Z,\tau)=\mathbbm{1}\left(\sum_{(x,y^*)\in Z}\mathbbm{1}(y^*\not\in\gamma(h_{\tau}(x)))\le(n+1)\epsilon\right).
\end{align*}
We have the following marginal coverage guarantee:
\begin{theorem}
The estimator $\hat\tau(Z,\epsilon)=\tau_{\hat{i}(Z,\phi_{\text{marginal}}^\epsilon)}$ satisfies
\begin{align*}
\mathbb{P}_{Z\sim D^n,(x,y^*)\sim D}[y^*\in\gamma(h_{\hat\tau(Z,\epsilon)}(x))]\ge1-\epsilon.
\end{align*}
\end{theorem}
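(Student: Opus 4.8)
The plan is to run the standard conformal-prediction exchangeability argument, adapted---following the learn-then-test idea of replacing a monotone threshold search by a sequential test---to cope with the fact that $\hat\tau(Z,\epsilon)$ is not a quantile of any nonconformity score, so the usual rank argument does not apply verbatim. First I would augment the calibration set with the fresh test point: write $(X_1,Y_1),\dots,(X_{n+1},Y_{n+1})\stackrel{\text{iid}}{\sim}D$ with $Z=\{(X_j,Y_j)\}_{j=1}^{n}$ and $(X_{n+1},Y_{n+1})$ the test point, and for $\tau\in\mathbb{R}$, $j\in[n+1]$ set $M_j(\tau)=\mathbbm{1}(Y_j\notin\gamma(h_\tau(X_j)))$; the target probability is $\mathbb{E}[M_{n+1}(\tau_{\hat i(Z)})]$, writing $\hat i(Z)$ for $\hat i(Z,\phi_{\text{marginal}}^{\epsilon})$.

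The structural facts I would extract are: (i) the returned threshold always passes its own test, i.e.\ $\sum_{j=1}^{n}M_j(\tau_{\hat i(Z)})\le(n+1)\epsilon$ whenever $\hat i(Z)\ge 1$---and one should fix a ``full-coverage'' sentinel $\tau_0$ (with $\gamma(h_{\tau_0}(x))=\mathcal{Y}$, hence $M_{n+1}(\tau_0)=0$) for the case where even $\tau_1$ fails; and (ii) by exchangeability of the $n+1$ points, the index $\hat i$ computed from the $n$ points $\{(X_l,Y_l)\}_{l\ne j}$ and then evaluated at point $j$ has a distribution not depending on $j$. Symmetrizing over which point is held out then gives
\begin{align*}
\mathbb{P}\bigl[Y_{n+1}\notin\gamma(h_{\tau_{\hat i(Z)}}(X_{n+1}))\bigr]
=\mathbb{E}\!\left[\frac{1}{n+1}\,\bigl|\{\,j\in[n+1]:M_j(\tau_{\hat i^{(-j)}})=1\,\}\bigr|\right],
\end{align*}
where $\hat i^{(-j)}$ is the algorithm's output on the $n$ points other than $j$. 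It then suffices to bound, for every fixed realization, the size of $\mathcal{B}=\{\,j:M_j(\tau_{\hat i^{(-j)}})=1\,\}$ by essentially $(n+1)\epsilon$.

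To bound $|\mathcal{B}|$ I would condition on the unordered multiset of the $n+1$ points, so that the global counts $V_i=\sum_{j=1}^{n+1}M_j(\tau_i)$ and the miscoverage sets $\mathcal{S}_i=\{\,j:M_j(\tau_i)=1\,\}$ are fixed, and then trace the sequential test through. A threshold $\tau_i$ with $V_i\le(n+1)\epsilon$ is accepted in \emph{every} leave-one-out run, so whenever $\hat i^{(-j)}$ lands on such an $i$ the point $j$ contributes a member of $\mathcal{S}_i$, a set of size $\le(n+1)\epsilon$; the delicate case is a ``boundary'' threshold $\tau_i$ with $V_i=(n+1)\epsilon+1$, where deleting point $j$ makes the test pass iff $j\in\mathcal{S}_i$, so only the points of $\mathcal{S}_i$ can survive to (or past) $\tau_i$, and one must show these contributions do not stack across successive boundary thresholds. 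I expect this boundary bookkeeping---tracking how holding out the test point can flip the acceptance of a candidate, exactly the effect that the sequential (rather than union-bounded) test is designed to absorb---to be the main obstacle, and it is where non-monotonicity makes the argument genuinely different from the textbook conformal proof: a naive union bound over $i\in[k]$ would instead yield a bound scaling with $\sum_i\alpha(\tau_i)$, which is vacuous. Once the requisite bound on $|\mathcal{B}|$ is in hand, dividing by $n+1$ and taking expectation over the multiset closes the argument.
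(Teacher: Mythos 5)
Your reduction of the target probability to $\mathbb{E}\bigl[\tfrac{1}{n+1}|\mathcal{B}|\bigr]$ via leave-one-out symmetrization is set up correctly, but the combinatorial step you defer---bounding $|\mathcal{B}|$ by roughly $(n+1)\epsilon$ for every realization of the pooled multiset---is false, so the proof cannot be closed along these lines. Concretely, take $n=9$ and $\epsilon=0.2$, so the test threshold is $(n+1)\epsilon=2$, and two candidates whose miscoverage sets among the $n+1=10$ pooled points are $\mathcal{S}_1=\{1,2\}$ (so $V_1=2$) and $\mathcal{S}_2=\{3,4,5\}$ (so $V_2=3$). Since $V_1\le 2$, $\tau_1$ is accepted in every leave-one-out run; since $V_2=3$, $\tau_2$ is accepted in run $-j$ exactly when $j\in\mathcal{S}_2$. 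Hence each $j\in\mathcal{S}_2$ survives to $\tau_2$ and is miscovered there, while each $j\in\mathcal{S}_1$ halts at $\tau_1$ and is miscovered there, so $|\mathcal{B}|=5>2$ and the conditional miscoverage fraction is $0.5\gg\epsilon$. The contributions of a boundary threshold and its predecessors do stack---already with a single boundary threshold sitting after one fully accepted threshold---and this is structural: conditional on the pooled sample, the leave-one-out miscoverage fraction genuinely can exceed $\epsilon$, because exactly the held-out points whose miscoverage at $\tau_{i_0}$ would have triggered rejection are the ones for which the test passes. No refinement of the bookkeeping rescues a per-multiset bound; any valid argument must average over multisets, at which point the symmetrization buys nothing.

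This is also not the paper's route: the paper proves this theorem by a one-line appeal to the learn-then-test framework, whose argument (mirroring the paper's own proof of Theorem 3.2) is distributional rather than conditional on the pooled sample. There one lets $i_0$ be the first invalid index, uses independence of the fresh test point from $Z$ to write the miscoverage probability as $\sum_i \mathbb{P}(\hat i(Z)=i)\,R(\tau_i)$ with $R(\tau_i)\le\epsilon$ for every $i<i_0$, and controls the overshoot event $\{\hat i(Z)\ge i_0\}$ via the binomial tail $\mathbb{P}(\mathrm{Binomial}(n,R(\tau_{i_0}))\le(n+1)\epsilon)$, which is small precisely because $R(\tau_{i_0})>\epsilon$. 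If you want a self-contained proof, that is the decomposition to start from (and the residual term from the overshoot event still requires care); the augmented-sample rank argument you propose is the textbook conformal proof for the monotone, single-score setting that this construction deliberately leaves behind.
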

\begin{proof}
This result follows from the learn-then-test algorithm~\citep{angelopoulos2022learntestcalibratingpredictive}.
\end{proof}

\textbf{PAC guarantees.}
Given $\epsilon,\delta\in[0,1]$, consider the statistical test
\begin{align*}
\phi_{\text{PAC}}^{\epsilon,\delta}(Z,\tau)=\mathbbm{1}\left(\sum_{(x,y^*)\in Z}\mathbbm{1}(y^*\not\in\gamma(h_{\tau}(x)))\le\hat{\ell}\right),
\end{align*}
where
\begin{align*}
\hat{\ell}=\operatorname*{\arg\max}_{\ell\in\mathbb{N}}h
~\text{subj. to}~F(\ell;n,\epsilon)<\delta,
\end{align*}
where $F(\ell;n,p)=\sum_{j=0}^{\ell}\binom{n}{j}p^j(1-p)^{n-j}<\delta$ is the cumulative distribution function (CDF) of the random variable $\text{Binomial}(n,p)$. Then, we have the following guarantee:
\begin{theorem}
\label{thm:pac_guarantee}
The estimator $\hat{\tau}(Z,\epsilon,\delta)=\tau_{\hat{i}(Z,\phi_{\text{PAC}}^{\epsilon,\delta})}$ satisfies
\begin{align*}
\mathbb{P}_{Z\sim D^n}[\mathbb{P}_{(x,y^*)\sim D}[y^*\in\gamma(h_{\hat\tau(Z,\epsilon,\delta)}(x))]\ge1-\epsilon]\ge1-\delta.
\end{align*}
\end{theorem}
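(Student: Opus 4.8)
The plan is to bound the probability that the returned threshold has true miscoverage exceeding $\epsilon$ by reducing the sequential search to a single, non-random hypothesis test, in the spirit of fixed-sequence testing. First I would introduce the true miscoverage function $L(\tau)=\mathbb{P}_{(x,y^*)\sim D}[y^*\notin\gamma(h_\tau(x))]$ and call an index $i\in[k]$ \emph{bad} if $L(\tau_i)>\epsilon$. The crucial observation is that $L$, and hence the set of bad indices, is deterministic: it depends only on $D$ and the fixed scoring function $g$, not on the random calibration set $Z$. Since the target probability is exactly $\mathbb{P}_{Z\sim D^n}[L(\hat\tau(Z,\epsilon,\delta))\le\epsilon]$, it suffices to show $\mathbb{P}_Z[L(\tau_{\hat{i}})>\epsilon]\le\delta$, where $\hat{i}=\hat{i}(Z,\phi_{\text{PAC}}^{\epsilon,\delta})$.

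The key combinatorial step is as follows. Suppose on some draw of $Z$ the realized $\hat{i}$ is bad, i.e. $L(\tau_{\hat{i}})>\epsilon$. Let $i^\star=\min\{i\in[k]:L(\tau_i)>\epsilon\}$, which is well-defined (the bad index $\hat{i}$ witnesses that the set is nonempty) and, importantly, non-random. Then $i^\star\le\hat{i}$, and by the definition of $\hat{i}$ every threshold $\tau_1,\dots,\tau_{\hat{i}}$ passed its test; in particular $\phi_{\text{PAC}}^{\epsilon,\delta}(Z,\tau_{i^\star})=1$. Hence $\{Z:L(\tau_{\hat{i}})>\epsilon\}\subseteq\{Z:\phi_{\text{PAC}}^{\epsilon,\delta}(Z,\tau_{i^\star})=1\}$, and it remains to bound the probability of this last event for the \emph{single fixed} threshold $\tau_{i^\star}$, with no union bound over the $k$ candidates required.

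Next I would analyze that one test. Because $g$ is fixed and the $n$ examples in $Z$ are i.i.d. from $D$, the count $V=\sum_{(x,y^*)\in Z}\mathbbm{1}(y^*\notin\gamma(h_{\tau_{i^\star}}(x)))$ is distributed as $\text{Binomial}(n,p)$ with $p=L(\tau_{i^\star})$, and $p>\epsilon$ by the choice of $i^\star$. Thus $\mathbb{P}_Z[\phi_{\text{PAC}}^{\epsilon,\delta}(Z,\tau_{i^\star})=1]=\mathbb{P}[V\le\hat{\ell}]=F(\hat{\ell};n,p)$. Using the standard stochastic monotonicity of the binomial family — $F(\ell;n,\cdot)$ is nonincreasing in its probability argument, provable by a coupling — together with $p>\epsilon$, we get $F(\hat{\ell};n,p)\le F(\hat{\ell};n,\epsilon)$, and $F(\hat{\ell};n,\epsilon)<\delta$ holds by the very definition of $\hat{\ell}$ as the largest $\ell\in\mathbb{N}$ with $F(\ell;n,\epsilon)<\delta$. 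Chaining the inequalities gives $\mathbb{P}_Z[L(\tau_{\hat{i}})>\epsilon]<\delta$, which is the claim. The one remaining detail is the degenerate case $\hat{i}=0$ (even $\tau_1$ fails, or no admissible $\hat{\ell}$ exists because $n$ is too small); this is handled by the convention that $\tau_0$ — say a threshold forcing $\gamma(h_{\tau_0}(x))=\mathcal{Y}$ — is always valid, so it never contributes to the bad event.

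The step I expect to be the main point worth spelling out is precisely the reduction to $i^\star$: the threshold the algorithm returns is random, but because the candidates are processed in a \emph{fixed} order and $L$ is deterministic, any ``bad'' return must have passed the deterministic first-bad threshold $\tau_{i^\star}$. Everything after that is the routine binomial-tail bookkeeping that defines $\hat{\ell}$. It is worth noting explicitly that no monotonicity of $L$ in $\tau$ is used anywhere — which is exactly why this argument works in the structured setting even though the search space over $\tau$ is no longer monotone.
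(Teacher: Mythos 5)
Your proof is correct and follows essentially the same route as the paper's: your $i^\star$ is exactly the paper's first invalid index $i_0$, the reduction of the sequential search to the single fixed test at that deterministic threshold is the same key step, and the binomial-CDF monotonicity argument concluding $F(\hat{\ell};n,p)\le F(\hat{\ell};n,\epsilon)<\delta$ is identical. If anything, your writeup is slightly more careful about the edge cases (the possibly empty set of bad indices and the $\hat{i}=0$ convention) and correctly states the direction $p>\epsilon$ where the paper's text contains a sign typo.
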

\begin{proof}
Let $i_0$ be the smallest index of $\tau_i$ such that $\tau_i$ is invalid (i.e., $\tau\not\in\mathcal{T}_{\epsilon}^*$), and let $\tau_0=\tau_{i_0}$. Also, let $z=\mathbbm{1}(y^*\not\in h_{\tau_0}(x))$; note that $z$ is a function of the random variable $(x,y^*)\sim D$, and in particular $z\sim\text{Bernoulli}(\mu)$ with $\mu=\mathbb{P}[y^*\not\in h_{\tau_0}(x)]$. Since $\tau_0$ is invalid, we have $\mu>\epsilon$. 

Next, the sum in $\phi_{\text{PAC}}^{\epsilon,\delta}$ is a sum of i.i.d. samples from $\text{Bernoulli}(\mu)$. In particular, let $Z=\{(x_i,y_i^*)\}_{i=1}^n$, let $z_i=\mathbbm{1}(y_i^*\not\in\gamma(h_{\tau_0}(x_0)))$, and let $b=\sum_{i=1}^nz_i$; note that $b$ is a sum of $n$ i.i.d. Bernoulli random variables with mean $\mu$, so $b\sim\text{Binomial}(n,\mu)$. Now, $\phi_{\text{PAC}}^{\epsilon,\delta}$ has form
\begin{align*}
\mathbbm{1}(\phi_{\text{PAC}}^{\epsilon,\delta}(Z,\tau)=1)
=\mathbbm{1}(b\le\hat{\ell}),
\end{align*}
so
\begin{align*}
\mathbb{P}(\phi_{\text{PAC}}^{\epsilon,\delta}(Z,\tau)=1)
=\mathbb{P}(b\le\hat{\ell})
=F(\hat{\ell};n,\mu)
\le F(\hat{\ell};n,\epsilon)
<\delta,
\end{align*}
where the first inequaltiy follows since the CDF of the $\text{Binomial}(n,p)$ is monotonically decreasing in $p$ and $\mu\le\epsilon$ (this follows since $\frac{\partial}{\partial p}F(\ell;n,p)\le0$).

Finally, note that our algorithm returns a valid parameter $\tau_i$ as long as $\phi_{\text{PAC}}^{\epsilon,\delta}(Z,\tau)=0$, since it returns $\tau_i$ for some $i<i_0$, and by definition of $i_0$, all of these $\tau_i$ are valid. Thus, our algorithm returns a valid $\tau_i$ with probability at least $1-\delta$, as claimed.
\end{proof}

\section{Application to DAG Structured Prediction Sets}

\textbf{Problem formulation.}
We consider a prediction problem where the prediction set consists of selecting a subset of nodes in a directed acyclic graph (DAG). For example, the DAG may be a tree representing hierarchical labels. Consider a DAG $G=(V,E)$ with vertices $V=[k]=\{1,...,k\}$ and directed edges $E\subseteq V\times V$. For example, for image classification, $G$ might encode a hierarchy of image categories. We let $L\subseteq V$ denote the leaf nodes of $G$; we assume that each $v\in L$ is labeled with a probability $p_v$, such that $\sum_{v\in L}p_v=1$. In our image classification example, a leaf $v$ corresponds to a fine-grained label, and $p_v$ might be the predicted probability of that label.

Now, we consider structured prediction sets of the form $\tilde{y}\subseteq V$, where $|\tilde{y}|\le m$ for a given hyperparameter $m$. Furthermore, we assume that the size function is $\sigma(\tilde{y})=|\text{leaves}(\tilde{y})|$, where
\begin{align*}
\text{leaves}(\tilde{y})=\{v\in L\mid\exists v'\in\tilde{y}\;.\;v\text{ is a descendant of }v'\}.
\end{align*}
We say a leaf node $v\in L$ is \textit{covered} by $\tilde{y}$ if $v\in\text{leaves}(\tilde{y})$. In other words, the size of $\tilde{y}$ is the number of leaf nodes $v$ that are the descendant of some node $v'\in\tilde{y}$. For instance, if $\tilde{y}=\{\text{cat},\text{dog}\}$, then $\sigma(\tilde{y})$ would be the number of kinds of cats and dogs in the label space. We also assume that the cumulative label probability in (\ref{eqn:conformalpredictor}) has the form
\begin{align*}
\sum_{y\in\mathcal{Y}}g(x,y)\cdot\mathbbm{1}(y\in\gamma(\tilde{y}))=\sum_{v\in\text{leaves}(\tilde{y})}p_v.
\end{align*}
In other words, to obtain the cumulative label probability for $\tilde{y}$, we simply sum the probabilities $p_v$ of the leaf nodes that are covered by $\tilde{y}$. In our image classification example, this value would simply be the sum of all the fine-grained label probabilities that are subsumed by the labels in $\tilde{y}$. With these assumptions, the definition of $h_{\tau}(x)$ in (\ref{eqn:conformalpredictor}) becomes
\begin{align*}
h_{\tau}(x)=\operatorname*{\arg\min}_{\tilde{y}\in\tilde{\mathcal{Y}}}|\text{leaves}(\tilde{y})|~\text{subj. to}~\sum_{v\in\text{leaves}(\tilde{y})}p_v\ge\tau.
\end{align*}
If we additionally unroll the definition of $\tilde{\mathcal{Y}}$, then the objective becomes
\begin{align}
\label{eqn:conformalpredictordag}
h_{\tau}(x)=\operatorname*{\arg\min}_{\tilde{y}\subseteq V}|\text{leaves}(\tilde{y})|~\text{subj. to}~\sum_{v\in\text{leaves}(\tilde{y})}p_v\ge\tau\wedge|\tilde{y}|\le m,
\end{align}
i.e., we need to compute a subset of nodes $\tilde{y}$ of size at most $m$ such that the leaf nodes covered by $\tilde{y}$ have cumulative probability at least $\tau$, while minimizing the number of leaf nodes covered by $\tilde{y}$.

Our framework can also be used to represent qualitatively very different domains from our image classification example. For instance, suppose we want to represent uncertainty in a predicted date; then, we might consider a conjunction of prediction intervals of years---e.g., 1968-1972 or 2010-2012. To do so, each node in $G$ can represent an interval $I=[\ell,u]$, with the children of a node being the intervals $I'$ that are immediately contained in $[\ell,u]$ (i.e., there is no $I''$ in the search space such that $I'\subsetneq I''\subsetneq I$). Furthermore, we assume that each leaf node of $G$ corresponds to an interval where $\ell=u$, so it represents a single year. Then, $\tilde{y}=\{I_1,...,I_{m'}\}$ (for some $m'\le m$) would be a conjunction of intervals, and $\text{leaves}(\tilde{y})$ would be the years contained in at least one of these intervals. We can also handle instances where $G$ varies from one input to another.

We emphasize that our DAG structure is a structure on the space of prediction sets, and can differ from the structure of the label space. In applications where the label space has a tree or DAG structure, we can naturally consider structured prediction sets that conform to this structure, but this is not a requirement. For instance, if the label space has a graph structure, we could still construct prediction sets representing sets of labels, and impose a DAG structure on these prediction sets (e.g., based on set inclusion). Thus, our approach can be flexibly applied to more complex domains.

\textbf{Integer programming algorithm.}
We describe an integer program to solve (\ref{eqn:conformalpredictordag}). For each node $v\in V$, our optimization problem includes two variables $\alpha_v,\beta_v\in\{0,1\}$. Intuitively, $\alpha_v = 1$ indicates that $v\in\tilde{y}$, and $\beta_v = 1$ indicates that $v$ is covered by $\tilde{y}$ (in general, $v\in V$ is covered by $\tilde{y}$ if there is some $v'\in\tilde{y}$ such that $v$ is a descendant of $v'$). Then, our integer program is 
\begin{align}
\min_{\alpha,\beta}&\sum_{v\in L}\beta_v \label{eqn:opt1} \\
\text{subj. to}&\sum_{v\in V}\alpha_v\le m \label{eqn:opt2} \\
&\alpha_v\to\beta_v
\qquad(\forall v\in V) \label{eqn:opt3} \\
&\beta_v\to\beta_{v'}
\qquad(\forall(v,v')\in E) \label{eqn:opt4} \\
&\beta_{v'}\to\alpha_{v'}\vee\bigvee_{(v,v')\in E}\beta_v
\qquad(\forall v'\in V) \label{eqn:opt5} \\
&\sum_{v\in L}p_v\cdot\beta_v\ge\tau \label{eqn:opt6}
\end{align}
We have included some Boolean constraints for clarity; in general, a Boolean constraint of the form $\alpha\to\beta$ is equivalent to the linear constraint $\alpha\le\beta$, and $\alpha\to\beta\vee\beta'$ is equivalent to $\alpha\le\beta+\beta'$. The objective constraints in our integer program have the following intuition:
\begin{itemize}
\item Eq.~(\ref{eqn:opt1}): The objective is to minimize the number of leaf nodes covered by $\tilde{y}$.
\item Eq.~(\ref{eqn:opt2}): The prediction set $\tilde{y}$ contains at most $m$ nodes.
\item Eq.~(\ref{eqn:opt3}): If $v$ is contained in $\tilde{y}$, then $v$ is covered by $\tilde{y}$.
\item Eq.~(\ref{eqn:opt4}): If $v$ is covered by $\tilde{y}$ and $v'$ is a child of $v$, then $v'$ is also covered $\tilde{y}$.
\item Eq.~(\ref{eqn:opt5}): If $v'$ is covered by $\tilde{y}$, then either $v'$ is contained in $\tilde{y}$ or at least one parent of $v'$ is covered by $\tilde{y}$.
\item Eq.~(\ref{eqn:opt6}): The cumulative probability of leaf nodes covered by $\tilde{y}$ is at least $\tau$.
\end{itemize}
Given the solution $\alpha^*,\beta^*$ to this integer program, our algorithm returns $\tilde{y}=\{v\in V\mid\alpha_v^*=1\}$.

\section{Experiments}
\label{sec:experiment}

We empirically validate our approach by demonstrating that it constructs prediction sets that satisfy the desired coverage guarantees while producing reasonably sized prediction sets.\footnote{The implementation is available at \\
\url{https://github.com/botong516/Conformal-Structured-Prediction}.} We evaluate our approach on five tasks: (i) predicting numbers represented as lists of MNIST digits~\citep{lecun-mnisthandwrittendigit-2010}, (ii) ImageNet classification~\citep{deng2009imagenet} with hierarchical label space tasks, (iii) SQuAD question answering~\citep{rajpurkar2016squad100000questionsmachine} where the answer is a year, (iv) Python code generation based on the MBPP dataset~\citep{austin2021program}, similar to the task studied in \citet{khakhar2023pac}, and (v) predicting emotions on the GoEmotions dataset~\citep{demszky-etal-2020-goemotions}.

\subsection{Experimental Setup}

\textbf{MNIST digits.}
The goal in this task is to predict a number represented as a list of $k$ MNIST digits (we use $k\in\{2,3\}$). Each digit is classified using a standard feedforward network $f_{\theta}$. We consider prediction sets inspired by significant figures for representing measurements---namely, a prediction set is represented by a confident prediction in the first $h\le k$ digits, and uncertain in the remaining ones. In more detail, a prediction set has the form $\tilde{y}=[d_1,...,d_k]$, where $d_i\in\{0,1,...,9,\varnothing\}$ for each $i\in[k]$, and where $d_i=\varnothing$ implies that $d_{i'}=\varnothing$ for all $i'\ge i$. Then, we have
\begin{align*}
\gamma(\tilde{y})=\left\{\sum_{i=1}^kd_i'\cdot10^{k-i}\biggm\vert d_i'\in\gamma(d_i)\right\}
\qquad\text{where}\qquad
\gamma(d)=\begin{cases}
\{d\}&\text{if }d\neq\varnothing \\
\{0,1,...,9\}&\text{otherwise}.
\end{cases}
\end{align*}
For example, $[1,2,\varnothing]$ represents the prediction set $\{120,121,...,129\}$. According to our optimization problem setup, the prediction set consists of at most $m$ intervals, which together form a non-continuous digit interval as the final prediction. Finally, the DAG is constructed by having $[d_1,...,d_i,\varnothing,\varnothing,...,\varnothing]\to[d_1,...,d_i,d_{i+1},\varnothing,...,\varnothing]$ for all $d_1,...,d_i,d_{i+1}\neq\varnothing$. In other words, one node is the child of another if it predicts the same prefix and exactly one additional digit. A leaf node of this DAG is a sequence of $k$ digits $[d_1,...,d_k]$; we associate with this leaf node with the probability $\prod_{i=1}^kf_{\theta}(d_i\mid x_i)$, where $[x_1,...,x_k]$ is the sequence of input images and $f_{\theta}(d_i\mid x_i)$ is the predicted probability that $x_i$ is an image of digit $d_i$ according to $f_{\theta}$.

\textbf{Image classification with hierarchical labels.}
We consider image classification using ResNet-50~\citep{he2015deepresiduallearningimage} on ImageNet~\citep{deng2009imagenet}. In this domain, the DAG is a tree; we take the original 1000 ImageNet labels to be leaves of the tree, and a standard set of coarse-grained labels (e.g., animal, living thing, etc.) as the internal nodes. We include an edge $(v,v')\in E$ if $v$ is an immediate hypernym of $v'$ in the WordNet lexical database~\citep{miller-1994-wordnet}. A prediction set $\tilde{y}=\{\ell_1,...,\ell_{m'}\}$ is a set of $m'\le m$ internal nodes, representing the set of fine-grained labels corresponding to leaf nodes that are descendants of some $\ell\in\tilde{y}$ (e.g., the prediction set might be ``dog or cat''). We associate with each leaf node the probability $f_{\theta}(\ell\mid x)$, where $\ell$ is the fine-grained label associated with the leaf node, $x$ is the input image, and $f_{\theta}$ is the pretrained ResNet-50 model.

\textbf{Question answering about dates.}
We consider a question answering task based on the Stanford Question Answering Dataset (SQuAD)~\citep{rajpurkar2016squad100000questionsmachine}, but focusing on questions where the answer is a year in the range from 1970 to 2020. The DAG is constructed as the set of intervals $[\ell,u]$, where $\ell,u\in\{$1970, 1971, \dots, 2020$\}$. We have edge $[\ell,u]\to[\ell',u']$ if $[\ell',u']\subsetneq[\ell,u]$, and there is no interval $[\ell'',u'']$ such that $[\ell',u']\subsetneq[\ell'',u'']\subsetneq[\ell,u]$. Each leaf node is a year $[\ell,u]$ where $\ell=u$; we associate it with the probability of the answer being $\ell$ as predicted by the Llama-3.1-70B-Instruct model~\citep{dubey2024llama3herdmodels}---i.e., $f_{\theta}(\ell\mid x)$, where $x$ is the question, $f_{\theta}$ is the Llama model, and $f_{\theta}(\ell\mid x)$ is the probability of the sequence of tokens representing the year $\ell$ with a standard question answering prompt asking the model for the answer to question $x$. We provide additional details on the dataset, prompt, and the DAG structure in Appendix~\ref{sec:additional_experiment}.

\textbf{Python code generation.}
We investigate the problem of code generation in Python using the MBPP dataset~\citep{austin2021program}. In our experiments, we provided the gpt-4o-mini model with a natural language prompt along with $k$ lines of code from the original ground truth program in the dataset, instructing the model to complete the program to solve the prompt. In this context, the DAG structure is defined by the AST parsed from the generated program. We assign each node a probability equal to the sum of the probabilities of the tokens contained with that node.

\textbf{Emotion label prediction for text.}
Finally, we use the GoEmotions dataset~\citep{demszky-etal-2020-goemotions}, which consists of 27 emotion categories annotated on 58,000 English Reddit comments, to demonstrate the application of our framework to predict the emotion labels in a given piece of text. Similar to the ImageNet task, the DAG in this domain is also a tree, following the structure proposed in the GoEmotions dataset~\citep{demszky-etal-2020-goemotions}. We use the provided set of concrete emotion labels (e.g., amusement, fear, grief, etc.) as the leaves of the tree. The definitions of prediction sets and leaf node probabilities are consistent with those from the ImageNet task, with $f_{\theta}$ being a pretrained RoBERTa base model~\citep{sam_lowe_2024}.

\textbf{Hyperparameters.}
We use $m\in\{1, 2, 4, 8\}$ (default of $m=4$), $\epsilon\in\{0.05, 0.1, 0.15, 0.2\}$ (default of $\epsilon=0.1$), and $\delta\in\{0.1, 0.01, 0.001\}$ (default of $\delta=0.01$).

\textbf{Baseline.}
We compare our approach with a baseline strategy adapted from \citet{khakhar2023pac}. While the strategy proposed is specialized to the code domain, we generalize it to apply to arbitrary DAG structures. Unlike our approach, this baseline leverages existing PAC prediction set algorithms, which require that the monotonicity assumption holds. Thus, their algorithm restricts the structure of the prediction sets across different values of $\tau$ to enforce monotonicity. In contrast, our approach proves a novel conformal prediction bound (Theorem~\ref{thm:pac_guarantee}) to avoid the need for monotonicity.

\textbf{Metrics.}
We report empirical coverage rate and average prediction set size on a held-out test set:
\begin{align*}
\text{Coverage Rate} &= \frac{1}{|Z_{\text{test}}|} \sum_{(x,y^*) \in Z_{\text{test}}} \mathbbm{1}(y^* \in \gamma(\tilde{y})) \\
\text{Average Prediction Set Size} &= \frac{1}{|Z_{\text{test}}|} \sum_{(x,y^*) \in Z_{\text{test}}} \sigma(\tilde{y}).
\end{align*}
Our goal is for the coverage rate to exceed $\epsilon$ while minimizing average prediction set size. We show averages and standard deviations over 5 runs.

\subsection{Results}
\label{sec:experimental_results}

\begin{figure}[t]
\centering
\begin{subfigure}[t]{0.32\textwidth}
\centering
\includegraphics[width=\textwidth]{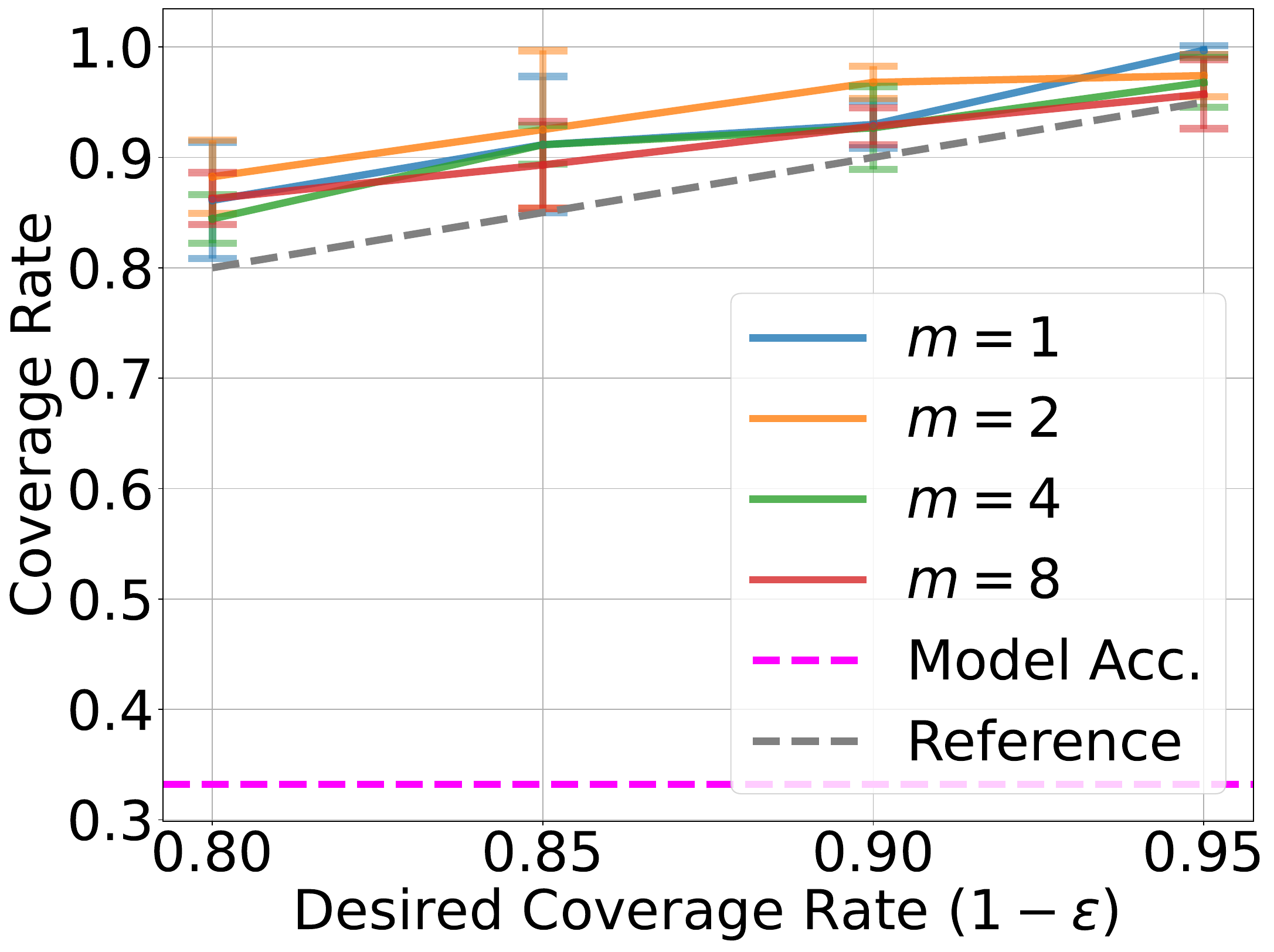}
\caption{Marginal}
\label{fig:squad_coverage_a}
\end{subfigure}
\begin{subfigure}[t]{0.32\textwidth}
\centering
\includegraphics[width=\textwidth]{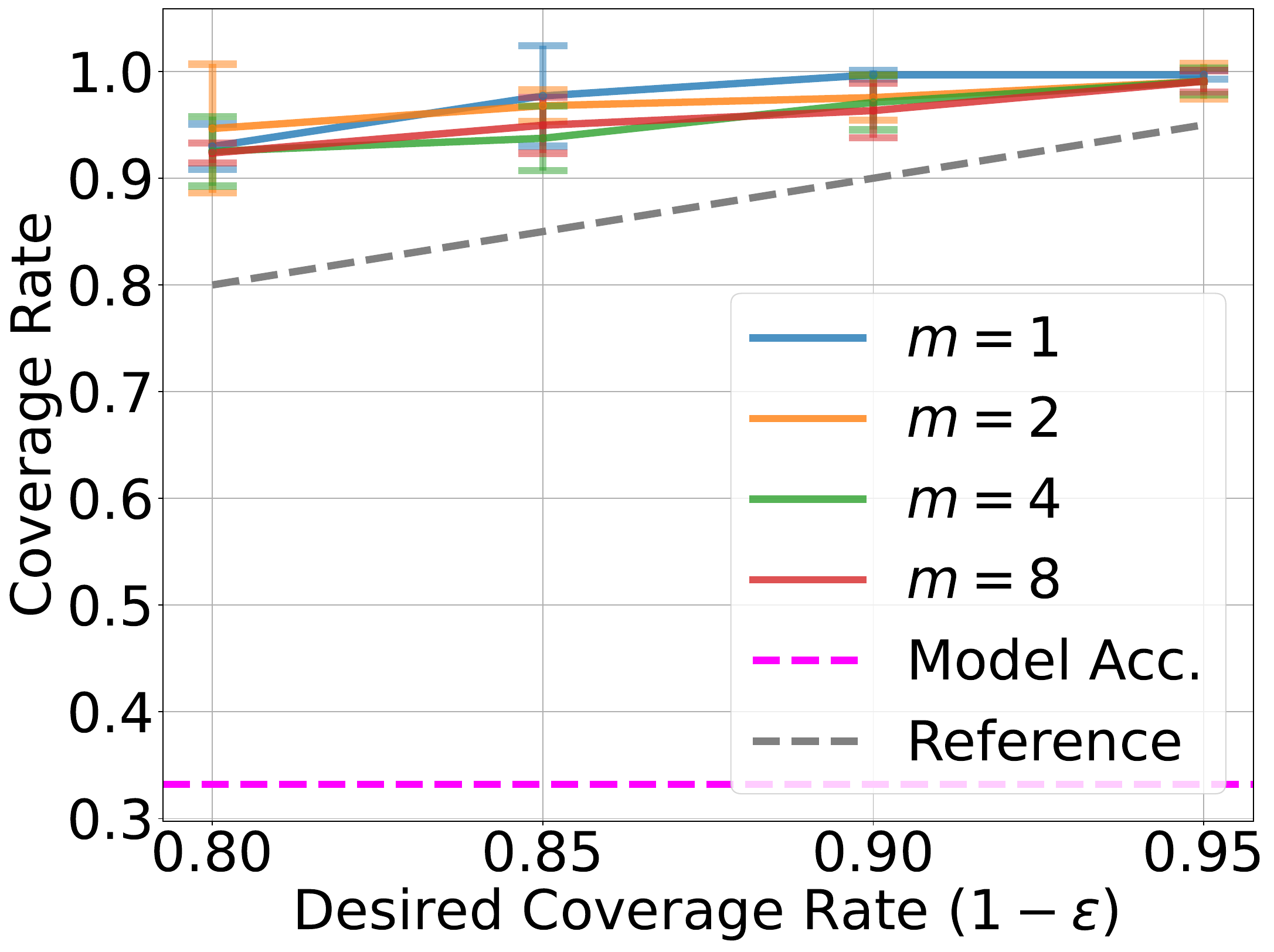}
\caption{PAC ($\delta=0.01$)}
\label{fig:squad_coverage_b}
\end{subfigure}
\begin{subfigure}[t]{0.32\textwidth}
\centering
\includegraphics[width=\textwidth]{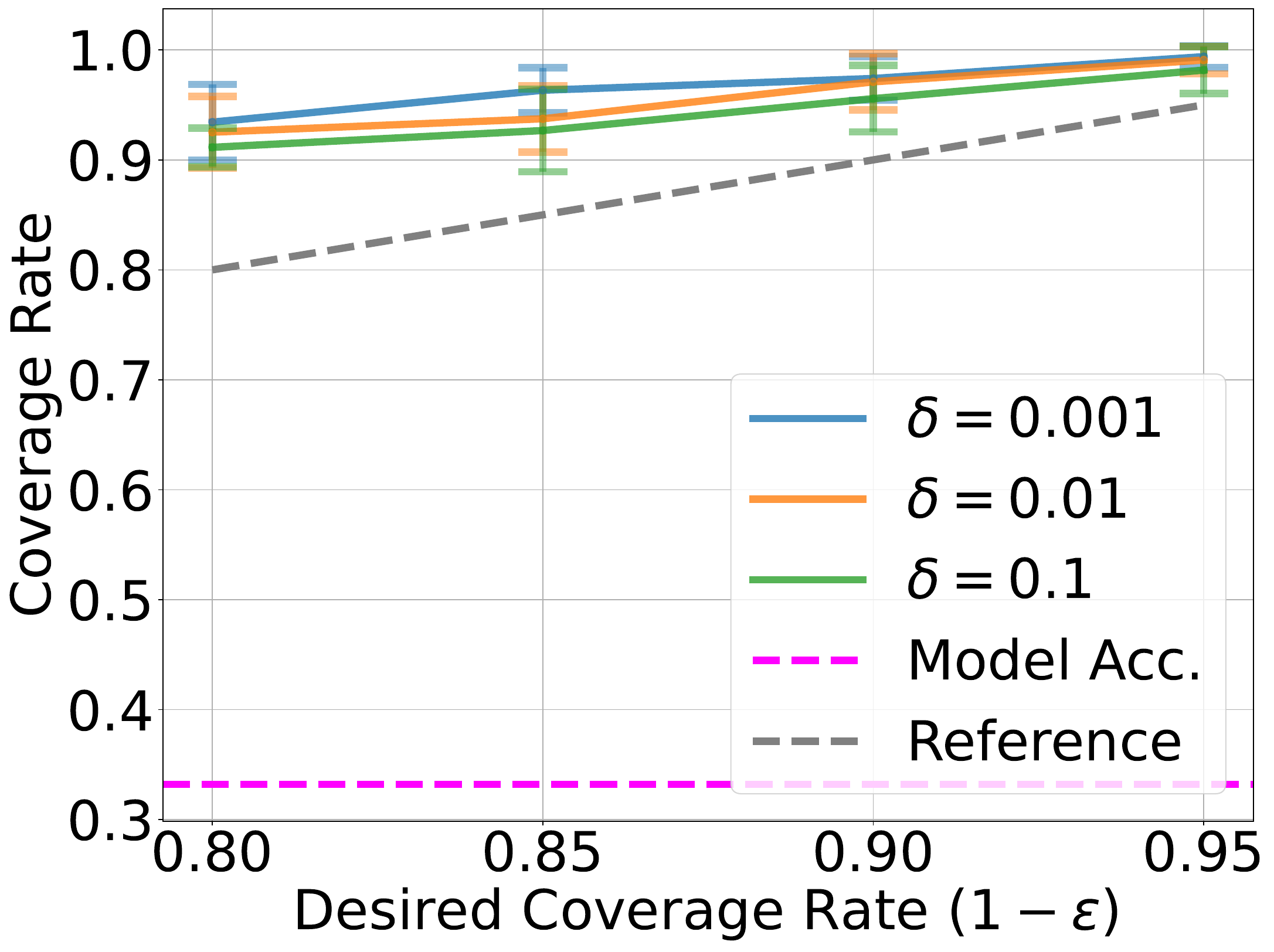}
\caption{PAC ($m=4$)}
\label{fig:squad_coverage_c}
\end{subfigure}
\caption{Prediction set coverage rates for the question answering task, for (a) marginal guarantee, (b) PAC guarantee with fixed $\delta$ and varying $m$, and (c) PAC guarantee with fixed $m$ and varying $\delta$.}
\label{fig:squad_coverage}
\end{figure}

\begin{figure}[t]
\centering
\begin{subfigure}[h]{0.32\textwidth}
\centering
\includegraphics[width=\textwidth]{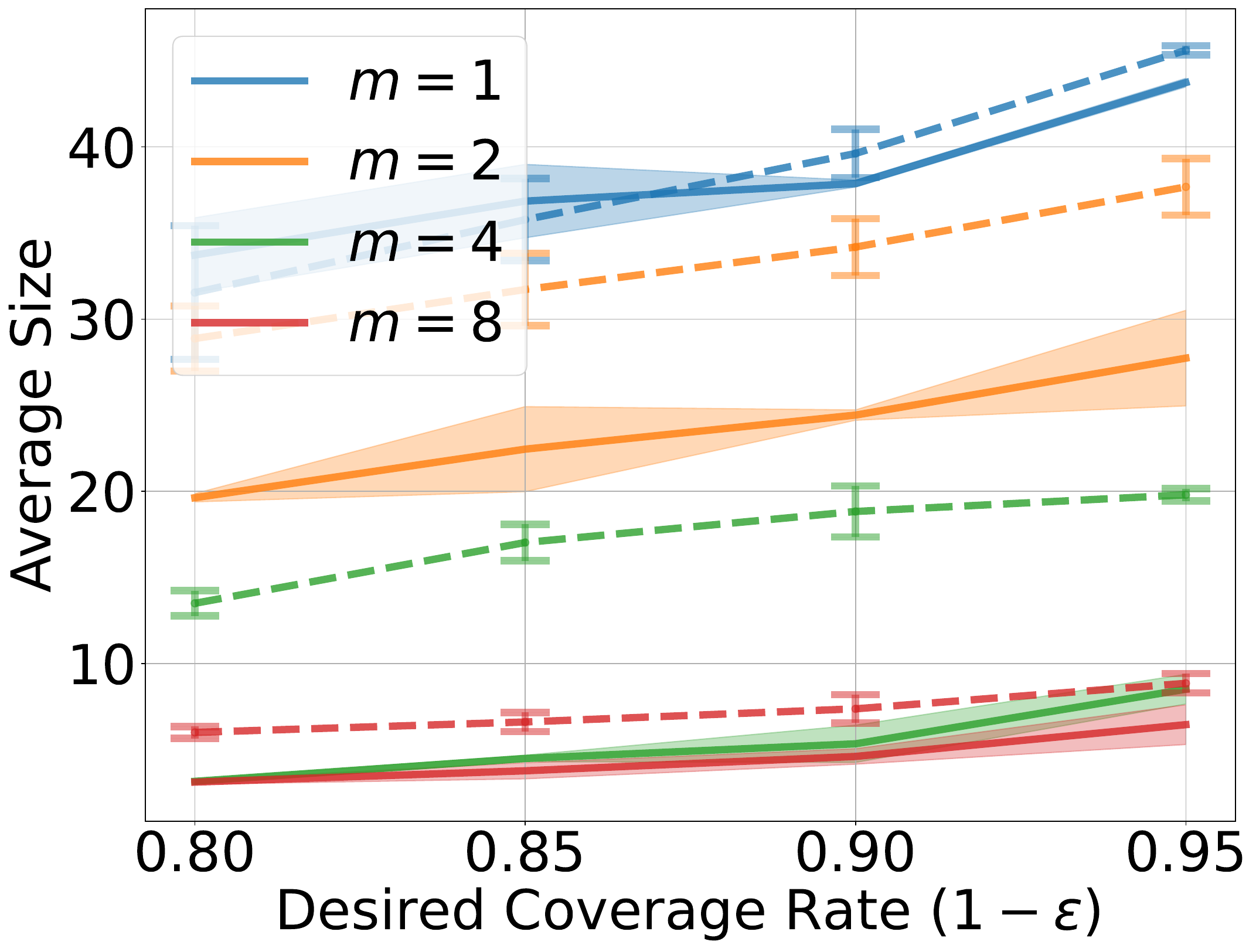}
\caption{Marginal}
\label{fig:squad_baseline_size_a}
\end{subfigure}
\begin{subfigure}[h]{0.32\textwidth}
\centering
\includegraphics[width=\textwidth]{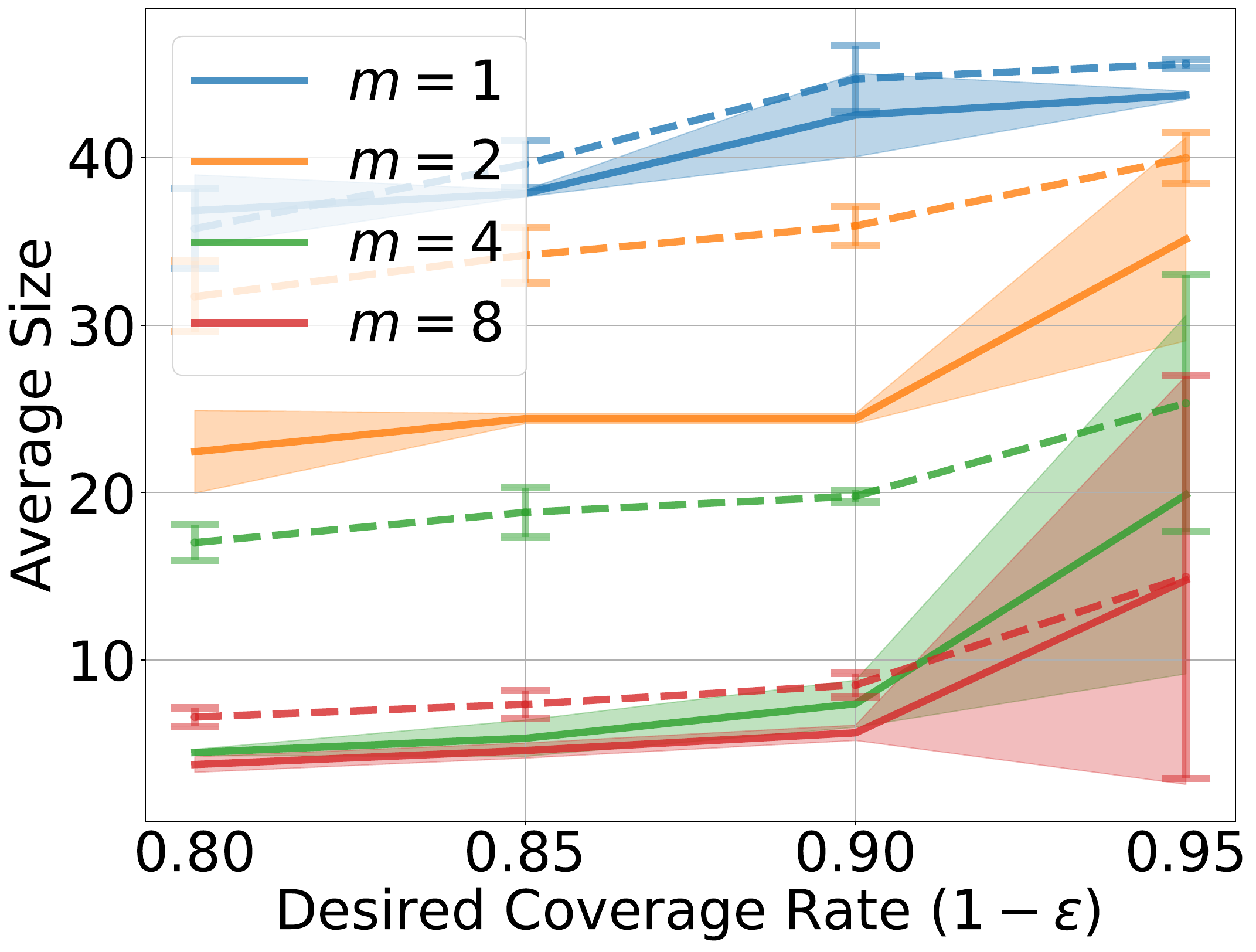}
\caption{PAC ($\delta=0.1$)}
\label{fig:squad_baseline_size_b}
\end{subfigure}
\begin{subfigure}[h]{0.32\textwidth}
\centering
\includegraphics[width=\textwidth]{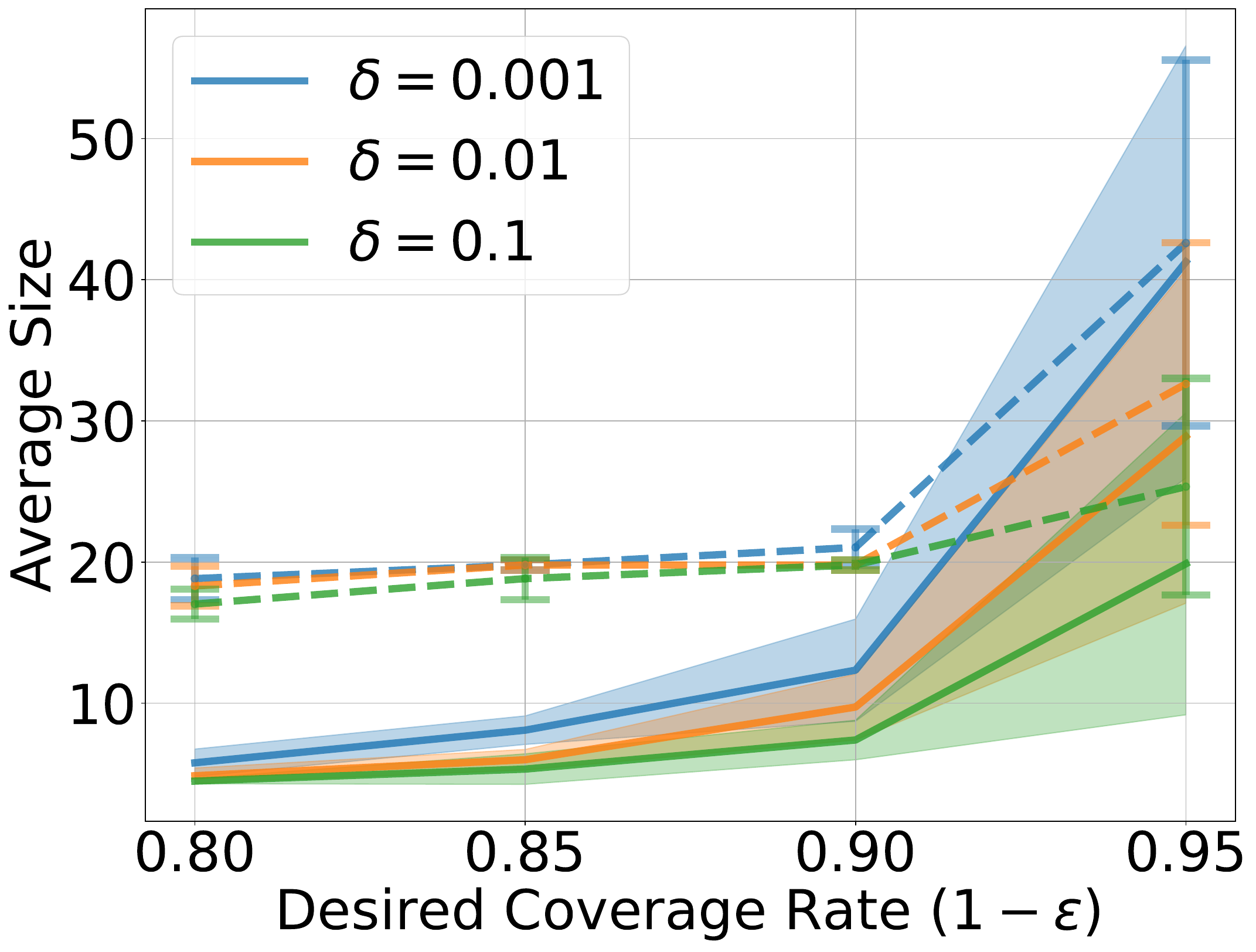}
\caption{PAC ($m=4$)}
\label{fig:squad_baseline_size_c}
\end{subfigure}
\caption{Prediction set sizes for the question answering task, with the baseline represented by dashed lines, for (a) marginal guarantee, (b) PAC guarantee with fixed $\delta$ and varying $m$, and (c) PAC guarantee with fixed $m$ and varying $\delta$.}
\label{fig:squad_baseline_size}
\end{figure}

Here, we show results only for the question answering task; results for the MNIST, ImageNet, code generation, and GoEmotions tasks can be found in Appendix~\ref{sec:additional_quantitative}, and exhibit similar trends.

\textbf{Coverage guarantees.}
First, we study the coverage rates achieved by our approach. Results are shown in Figure~\ref{fig:squad_coverage}; Figure~\ref{fig:squad_coverage_a} shows results with the marginal guarantee for different $m$ values across the given error levels $\epsilon$, Figure~\ref{fig:squad_coverage_b} shows results with the PAC guarantee for different $m$ and fixed $\delta$, and Figure~\ref{fig:squad_coverage_c} shows results with the PAC guarantee for different $\delta$ and fixed $m$; coverage rates for the baseline are shown in Figure~\ref{fig:squad_baseline_coverage} in Appendix~\ref{sec:additional_quantitative}. In general, for both our algorithm and for the baseline, the empirical coverage rates are above the desired coverage level (i.e., the ``Reference'' line). As expected, as $\epsilon$ increases, the coverage rate tends to decrease (but remains above the desired coverage rate); $m$ and $\delta$ do not significantly affect coverage.

For the marginal guarantee, mean coverage rates remain close to the desired rate, while for the PAC guarantee, coverage for all values holds within one standard deviation. Note that marginal guarantees are on average over both the training set and the new examples; thus, the average coverage across different random seeds is above the desired coverage level, but any individual random seed may fall above or below this level. In contrast, PAC prediction sets hold with high probability over the training set; thus, the coverage is above the desired level for almost all random seeds. Intuitively, PAC guarantees are more conservative than marginal guarantees but provide greater reliability. In domains where it is critical for the deployed model to satisfy the coverage guarantee, PAC prediction sets should be used; otherwise, marginal guarantees may suffice.

\textbf{Prediction set size.}
Next, we study the average size of the prediction sets constructed using our approach in terms of number of concrete labels and compare it to the prediction sets obtained using the baseline. Results are shown in Figure~\ref{fig:squad_baseline_size}; Figure~\ref{fig:squad_baseline_size_a} shows results with the marginal guarantee for different $m$ values across the given error levels $\epsilon$, Figure~\ref{fig:squad_baseline_size_b} shows results with the PAC guarantee for different $m$ and fixed $\delta$, and Figure~\ref{fig:squad_baseline_size_c} shows results with the PAC guarantee for different $\delta$ and fixed $m$. Results for the baseline are represented by dashed lines. As expected, prediction set size decreases as $\epsilon$ and $\delta$ increase and as $m$ increases; its dependence on $\delta$ is much less pronounced than its dependence on $\epsilon$ and $m$. Our approach outperforms the baseline in terms of prediction set size for almost every parameter setting, and significantly so for $m=2$ and $m=4$.

In general, the sensitivity of the hyperparameter $m$ depends on the DAG structure and the performance of the underlying model. For the question answering task, the decrease in size with $m$ is especially significant, likely due to the low accuracy of the underlying model ($\approx33.2\%$). Since all nodes tend to have similar probability masses, usually without any dominant ones, changes in $m$ can significantly affect the nodes selected when $m$ is small (e.g., 1 or 2). As $m$ becomes larger (e.g., 4 or 8), the algorithm gains greater flexibility in selecting which nodes to include in the prediction set; thus, the prediction set size typically becomes less sensitive to $m$ as it becomes larger.

\begin{table}[t]
\centering
\small
\begin{tabular}{cccc} \toprule
& $\epsilon=0.05$ & $\epsilon=0.1$ & $\epsilon=0.2$ \\ \midrule
$m=1$ 
& $\left\{ \makecell{\textcolor{red}{\text{[1979, 2019]}}} \right\}$    
& $\left\{ \makecell{\textcolor{red}{\text{[1979, 2019]}}} \right\}$     
& $\left\{ \makecell{\text{[1987, 2020]}} \right\}$       \\

$m=2$ 
& $\left\{ \makecell{\textcolor{red}{{\text{[1979, 1980]}}}, \\ \text{[1997, 2019]}} \right\}$ 
& $\left\{ \makecell{\textcolor{red}{\text{[1979]}}, \\ \text{[1997, 2019]}} \right\}$
&  $\left\{ \makecell{\text{[1996, 2001]}, \\ \text{[2007, 2020]}} \right\}$   \\ 

$m=4$ 
& $\left\{ \makecell{\textcolor{red}{{\text{[1979, 1980]}}}, \\ \text{[1997]}, \\ \text{[2007]}, \\ \text{[2015, 2019]} } \right\}$
& $\left\{ \makecell{\textcolor{red}{\text{[1979]}}, \\ \text{[1997]}, \\ \text{[2007]}, \\ \text{[2015, 2019]}} \right\}$
&  $\left\{ \makecell{\text{[1997]}, \\ \text{[2015]}, \\ \text{[2016, 2017]}, \\ \text{[2019]}} \right\}$   \\\bottomrule
\end{tabular}
\caption{Structured prediction sets for the question, \textit{``When was 7 Lincoln Square completed?''} with the ground truth answer ``1979''. Intervals in \textcolor{red}{red} contain the ground truth year.}
\label{tab:qual_2}
\end{table}

\textbf{Qualitative examples.}
We provide a qualitative example from the question answering task to show: (i) how structured sets differ from standard conformal prediction sets, and (ii) how hyperparameters influence the sets. In the example presented in Table~\ref{tab:qual_2}, standard conformal prediction yields a prediction set containing six years---$\{$1979, 1997, 2007, 2016, 2017, 2019$\}$---spanning a broad range of possibilities. We then construct structured prediction sets with $m \in \{1, 2, 4\}$ and $\epsilon \in \{0.05, 0.1, 0.2\}$. Note that increasing $m$ increases the number of intervals but reduces the number of labels in the concrete set. Also, for a fixed $m$, increasing $\epsilon$ generally produces fewer and narrower intervals, albeit at the expense of higher miscoverage rates. In this example, $m=4$ provides a good tradeoff, summarizing $\{$2016, 2017, 2019$\}$ into the interval [2015, 2019] while preserving the singleton years 1979, 1997, and 2007. See Appendix~\ref{sec:additional_qualitative_examples} for additional examples.

In general, the choice of $m$ depends on the needs of the given application domain. It governs the trade-off between the interpretability and granularity of the resulting prediction sets. In particular, larger values of $m$ allow more labels to be included in the set, often capturing finer-grained categories such as ``attire” or ``Blenheim spaniel”. Conversely, smaller values of $m$ result in sets with fewer or coarse-grained labels such as ``artifact" or ``dog", which can be more interpretable for users. We suggest that practitioners try different values of $m$, and manually examine the resulting prediction sets to determine which choices offer the best tradeoff between interpretability and coverage.

\textbf{Computational cost.}
We also evaluate the running time of our approach compared to our baseline. We find that 
(i) our approach outperforms the baseline in terms of running time for most domains, (ii) increasing the DAG size does not necessarily lead to higher computation time, (iii) running time usually remains consistent across hyperparameters, (iv) the overall running time is smaller for fixed DAGs where we can leverage warm-starting; see Appendix~\ref{sec:runtime_scalability} for details.

\section{Conclusion}

We have proposed a novel algorithm for conformal structured prediction that constructs structured prediction sets that satisfy either a marginal or PAC coverage guarantee. Our algorithm enables uncertainty quantification in settings where the label space cannot be represented simply by a collection of regression and/or classification outputs. Furthermore, we have demonstrated how our approach can be applied to domains where the structured prediction sets are defined by a DAG, which includes settings such as hierarchical labels and text generation. Finally, in our empirical evaluation, we have demonstrated that our algorithm can construct reasonable prediction sets that satisfy the desired coverage guarantee across several application domains.

\subsubsection*{Acknowledgements}

This work was supported by NSF Award CCF-1917852 and ARO Award W911NF20-1-0080.

\bibliography{iclr2025_conference}
\bibliographystyle{iclr2025_conference}

\appendix
\clearpage

\section{Appendix}
\subsection{Experimental Details on Question Answering Task}
\label{sec:additional_experiment}

The SQuAD dataset contains 691 questions where the set of possible answers include at least one that is a four-digit number. From this subset, we conducted experiments on questions with answers between 1970 and 2020, resulting in a total of 262 examples. Each example is a \textit{(question, context, answer)} triplet, where the context provides background information to answer the question. We used a two-shot prompting technique to obtain the log probability from the Llama-3.1-70B-Instruct model~\citep{dubey2024llama3herdmodels}. Figure~\ref{fig:squad_prompt} shows the prompt template used, where \textbf{\{question\}} and \textbf{\{context\}} are replaced based on the current example. Then, for each year in [1970, 2020], we compute the model's probability of generating \textbf{\{year\}} as a response to the \textbf{\{question\}} and \textbf{\{context\}}. Finally, Figure~\ref{fig:squad_dag} illustrates the DAG used in this task.

\begin{figure}[h]
\centering
\includegraphics[width=0.8\textwidth]{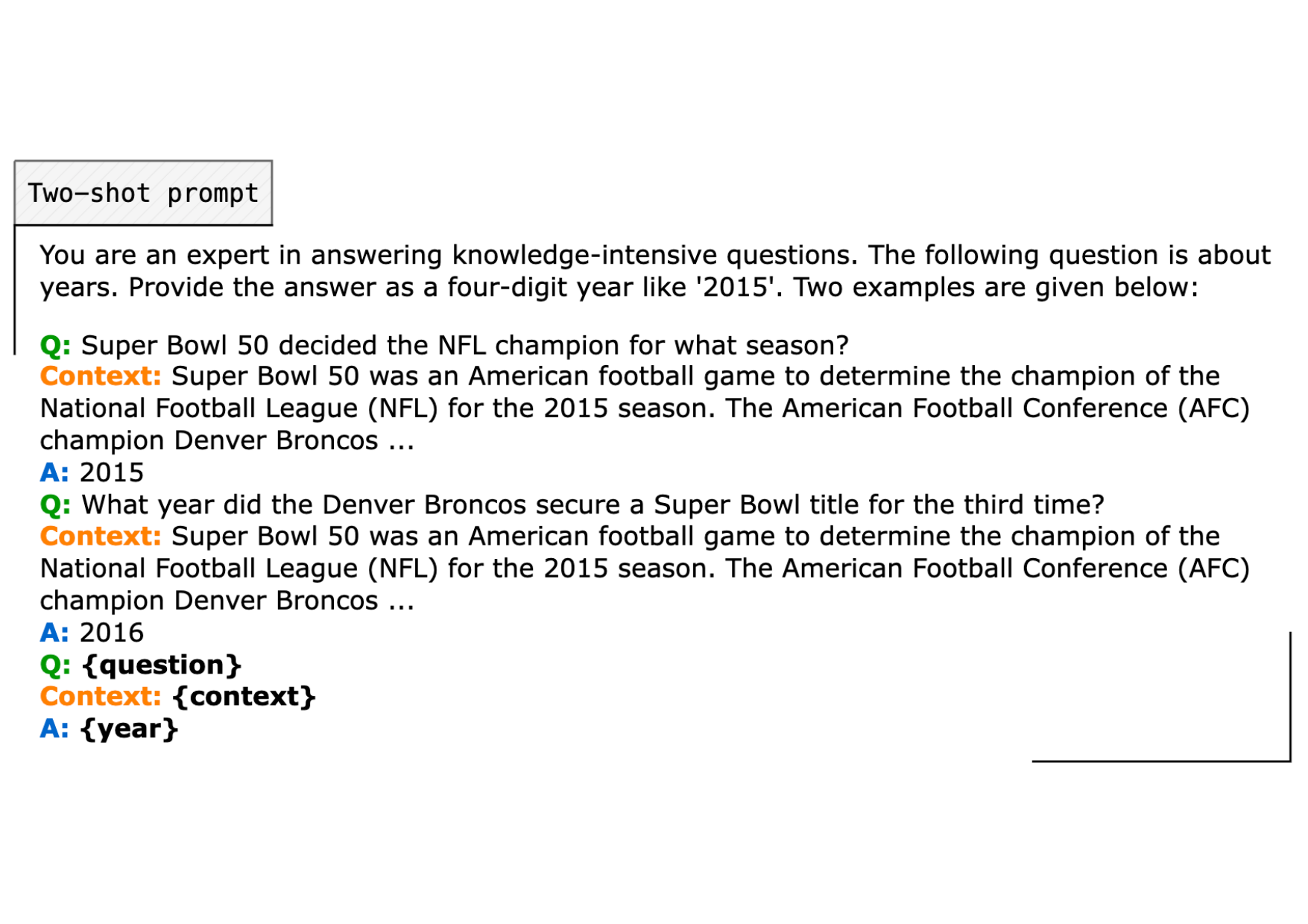}
\caption{The two-shot prompt used for our question answering task; here, \textbf{\{question\}} and \textbf{\{context\}} are replaced with the corresponding data from each example in our dataset, and \textbf{\{year\}} is replaced with each year between 1970 and 2020.}
\label{fig:squad_prompt}
\end{figure}

\begin{figure}[h]
\centering
\includegraphics[width=0.7\textwidth]{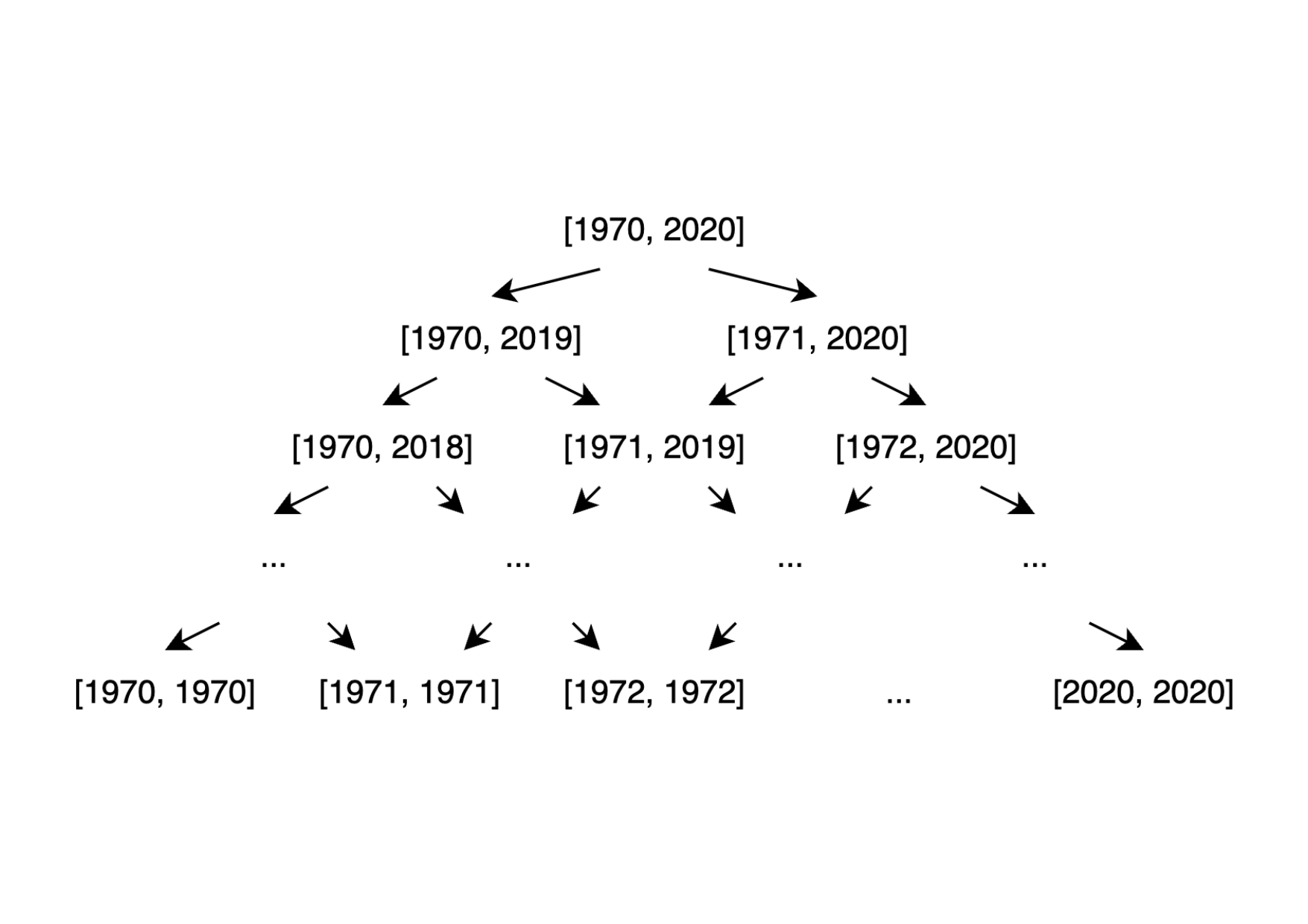}
\caption{Illustration of the DAG structure for our question answering task.}
\label{fig:squad_dag}
\end{figure}

\subsection{Computational Cost and Scalability}
\label{sec:runtime_scalability}

\begin{figure}[t]
\centering
\begin{subfigure}[h]{0.45\textwidth}
\centering
\includegraphics[width=\textwidth]{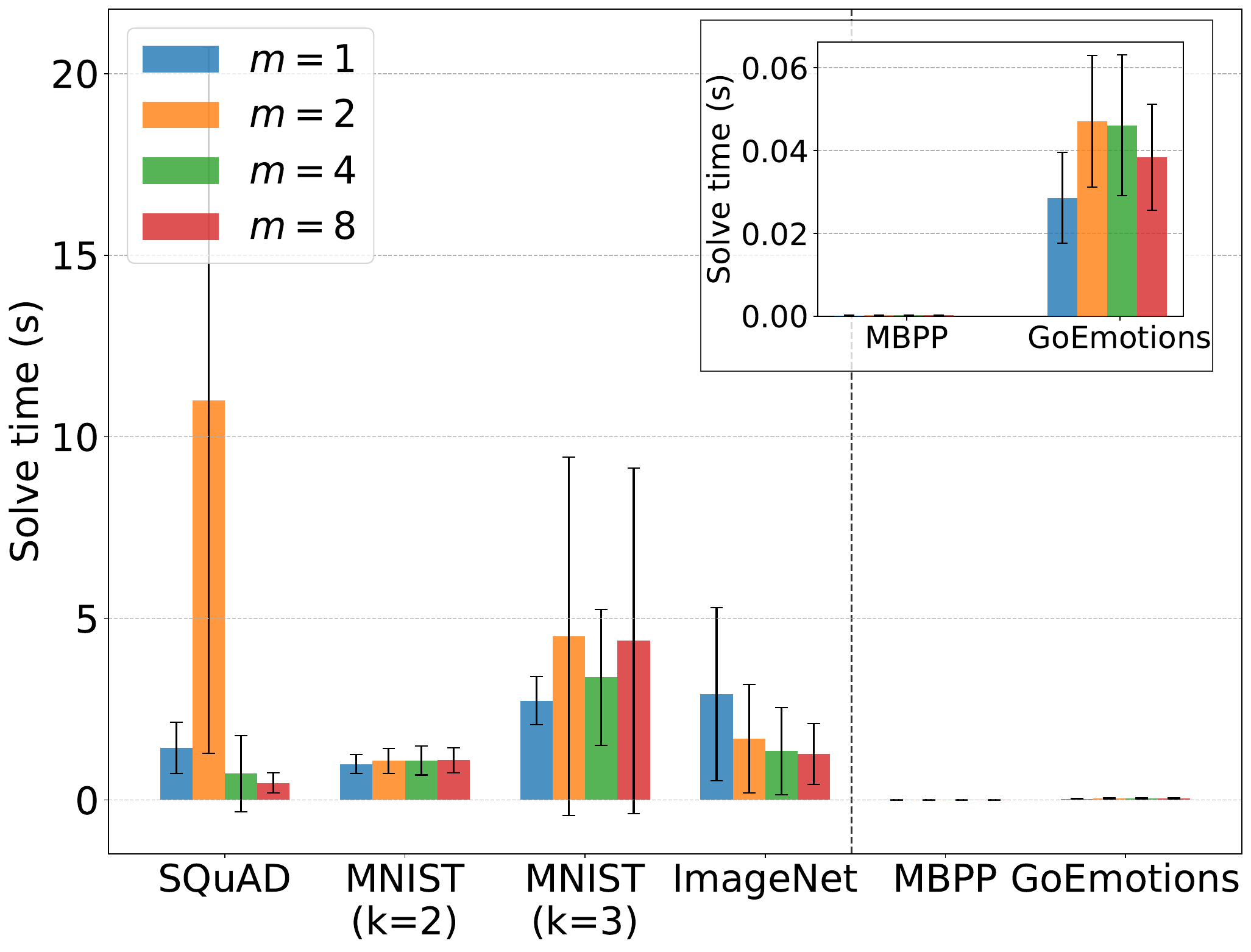}
\caption{Marginal}
\label{fig:avg_time_per_test_comparison_a}
\end{subfigure}
\begin{subfigure}[h]{0.45\textwidth}
\centering
\includegraphics[width=\textwidth]{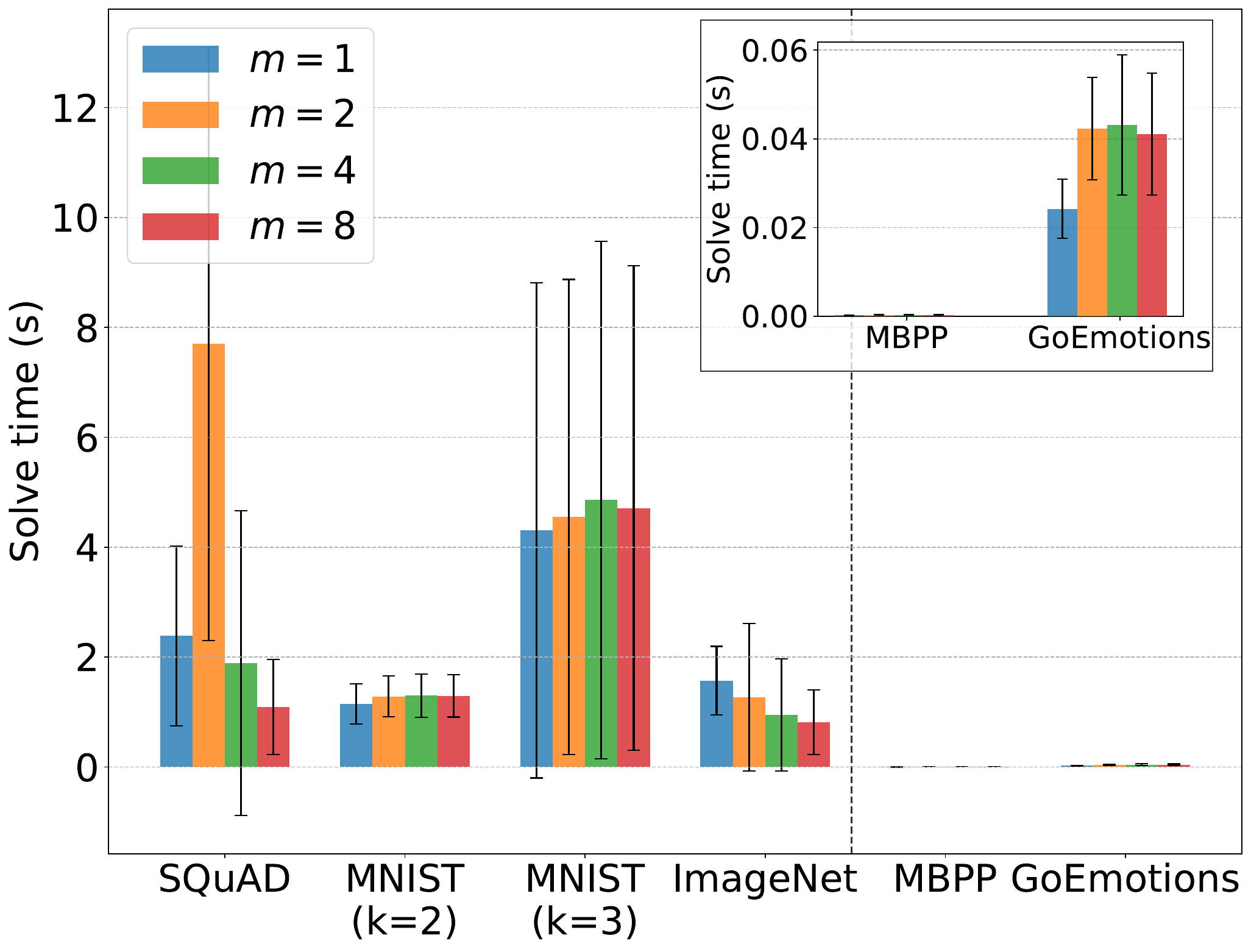}
\caption{PAC}
\label{fig:avg_time_per_test_comparison_b}
\end{subfigure}
\caption{Average time (second) required to solve each Integer Programming (IP) problem in the test set for each of the five domains in our experiments, as $m$ varies, for (a) marginal guarantee and (b) PAC guarantee.}
\label{fig:avg_time_per_test_comparison}
\end{figure}

\begin{figure}[t]
\centering
\begin{subfigure}[h]{0.45\textwidth}
\centering
\includegraphics[width=\textwidth]{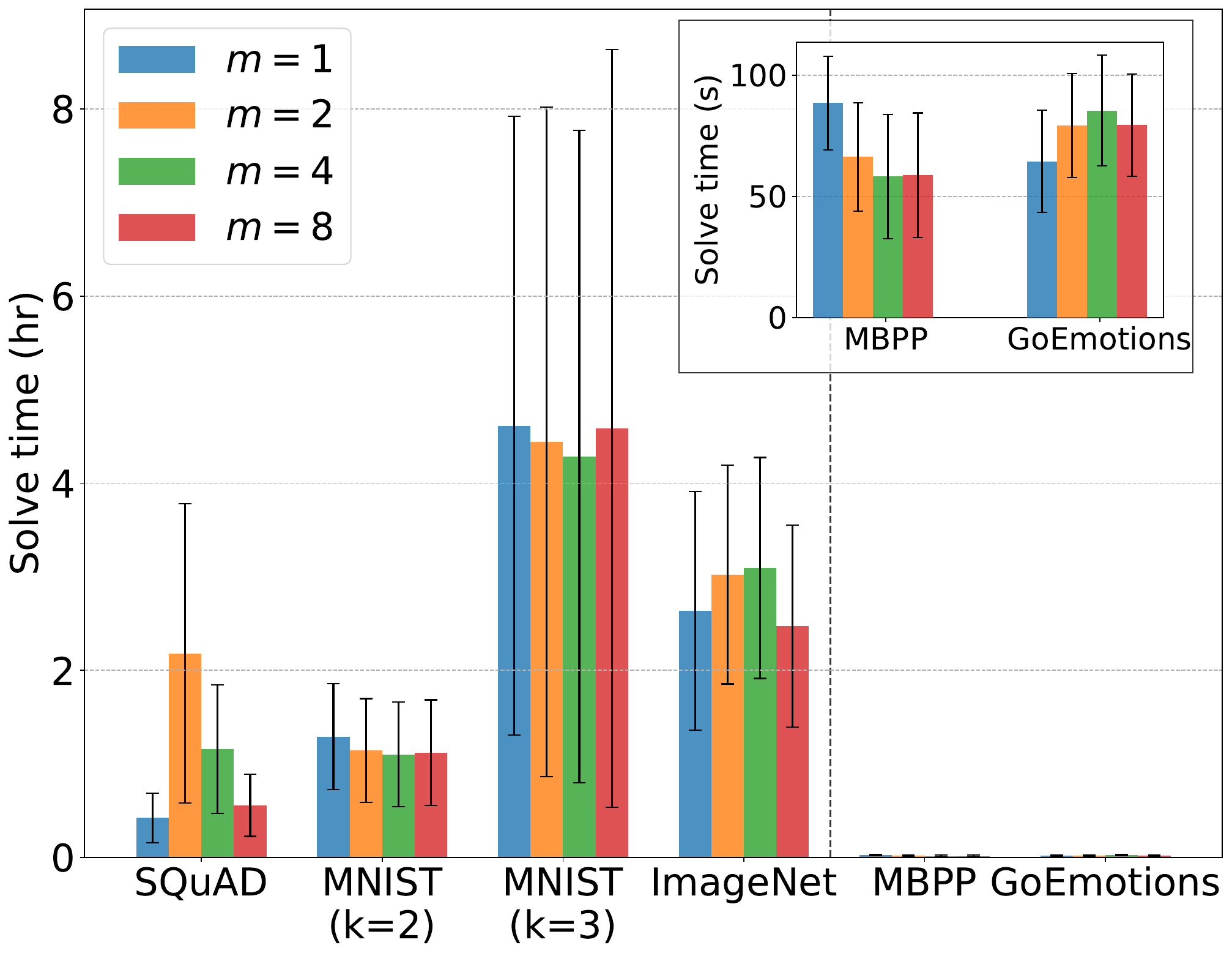}
\caption{Marginal}
\label{fig:time_solve_threshold_comparison_a}
\end{subfigure}
\begin{subfigure}[h]{0.45\textwidth}
\centering
\includegraphics[width=\textwidth]{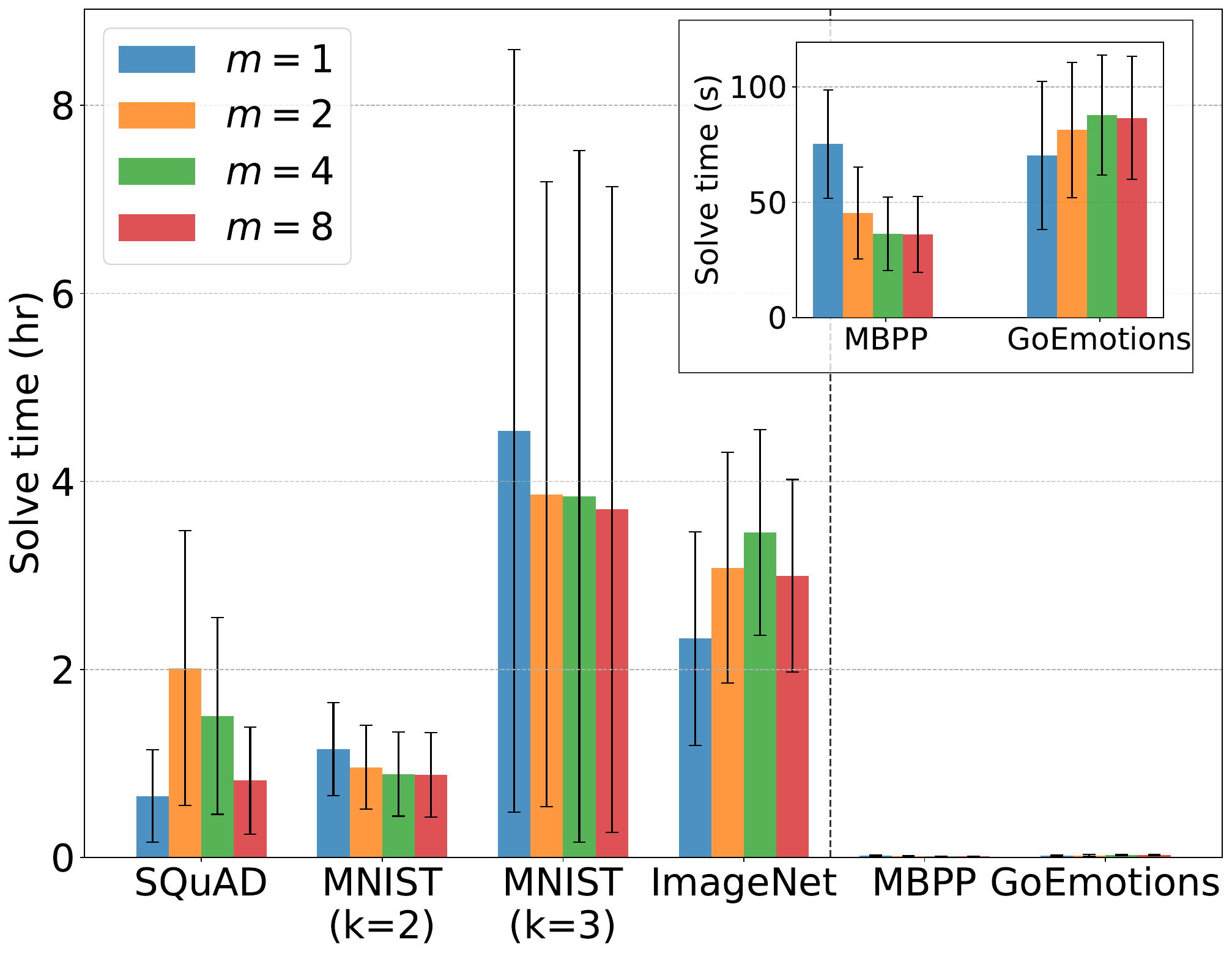}
\caption{PAC}
\label{fig:time_solve_threshold_comparison_b}
\end{subfigure}
\caption{Average time (hour) required to estimate $\tau$ for each of the five domains in our experiments, as $m$ varies, for (a) marginal guarantee and (b) PAC guarantee.}
\label{fig:time_solve_threshold_comparison}
\end{figure}

Our method leverages Integer Programming (IP) to select nodes as prediction sets on DAGs, which can sometimes be computationally intensive, particularly as the size of the DAG increases. We evaluate the running time and scalability of this approach, including a comparison to our baseline adapted from \citet{khakhar2023pac}. Figure~\ref{fig:avg_time_per_test_comparison} shows the average time required to solve each IP problem as a function of $m$, for each of our application domains. As can be seen, both the code generation and GoEmotions tasks have very fast solve times due to their small tree sizes. As shown in the top right subplot of each plot, MBPP is still significantly faster than GoEmotions. The MBPP dataset consists of entry-level Python programs; therefore, the resulting ASTs are usually very small. The GoEmotions tree structure is also relatively simple, consisting of 8 layers, 27 leaves, and 52 nodes. The 2-digit MNIST and ImageNet problems also have relatively fast solve times. The 2-digit MNIST tree has 3 layers, 100 leaves, and 111 nodes, whereas the ImageNet tree is a lot larger, with 18 layers, 1000 leaves, and 1816 nodes. Thus, increasing the DAG size may not increase computation time.

When extending to the 3-digit MNIST problem, the computation time for the IP becomes slower, especially for the PAC guarantee. The 3-digit MNIST tree has 4 layers, 1000 leaves, and 1111 nodes, which significantly increases the number of nodes compared to the 2-digit MNIST tree and notably increases the tree density compared to the ImageNet tree. Thus, the computation for the IP may become intensive as the density of the DAG scales up. A practical strategy to alleviate this computational burden is to simplify the DAG by removing some internal nodes while preserving the overall hierarchy. In particular, if a node $v$ is removed, its parent nodes become the parents of each of $v$'s children. This approach allows us to maintain the structured prediction set property while improving computational efficiency, though it may affect interpretability of the resulting structured prediction sets.

The structure for the question answering task is not a tree but a DAG, where a single node can have multiple parents. It has 51 layers, 51 leaves, and 650 nodes, making it relatively sparse compared to structures for the other four domains. In this case, there is a drastic increase in running time when $m=2$. As we leverage an off-the-shelf optimizer, the process of node selection largely occurs under the hood. Consequently, there may be cases where the runtime is unusually high on certain DAGs. Selecting a different $m$ can help mitigate this issue. As shown in the plot, the running time for a given type of problem remains consistent overall across different hyperparameters.

It is important to note that the runtime for solving each individual IP problem does not always reflect the overall runtime of the entire framework. When it comes to solving sequences of IP problems (such as during $\tau$ estimation), certain problem setups can largely benefit from reduced runtime through the chosen optimizer's built-in techniques. For instance, as shown in Figure~\ref{fig:time_solve_threshold_comparison}, the Python code generation problem loses its advantage of extremely fast IP solve times during the $\tau$ estimation phase, likely because the DAG structure (i.e. AST tree) varies across samples. As a result, additional computation time is required to build each DAG, and furthermore we cannot use warm-starting.

\begin{figure}[t]
\centering
\begin{subfigure}[h]{0.24\textwidth}
\centering
\includegraphics[width=\textwidth]{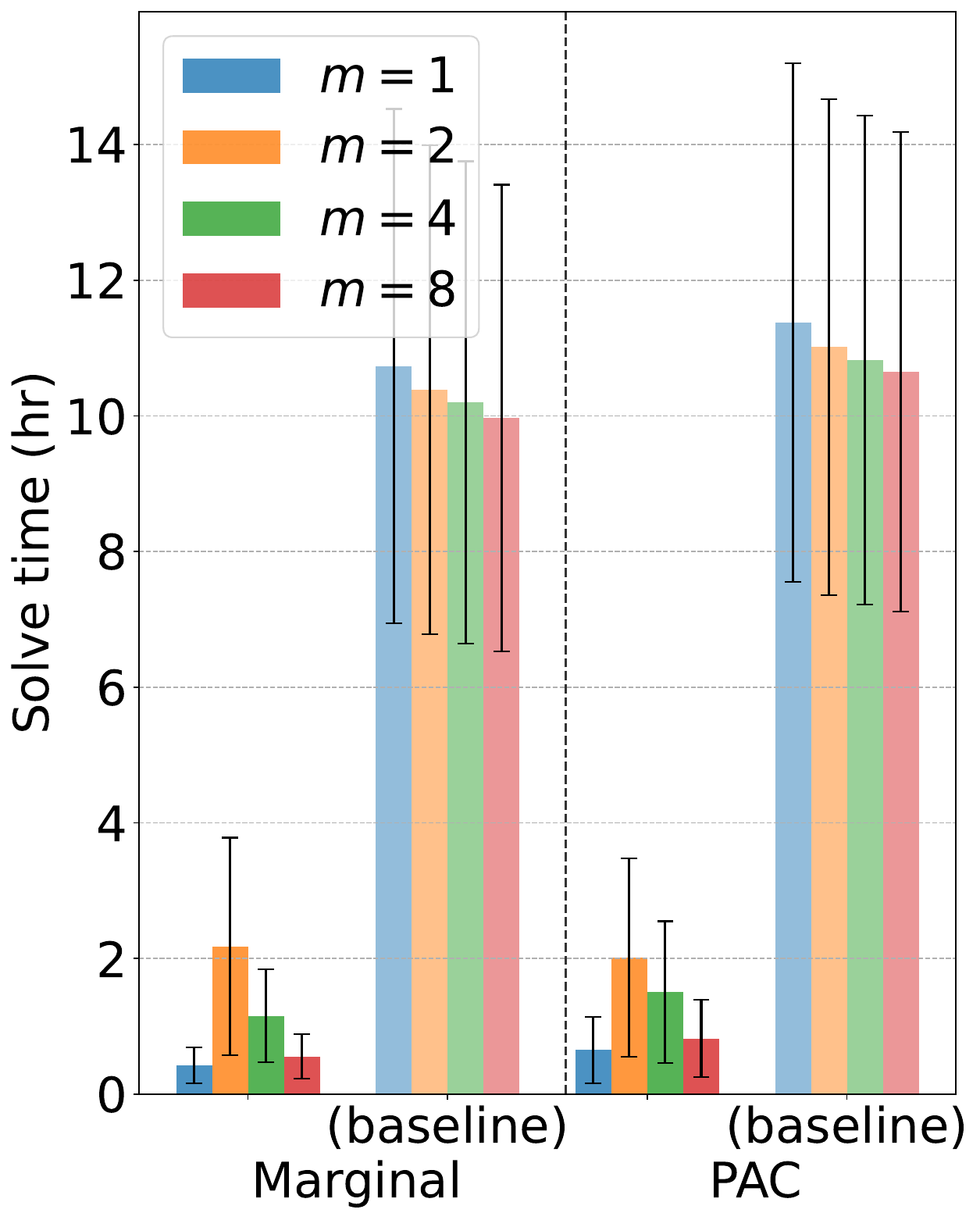}
\caption{Question answering}
\label{fig:time_solve_threshold_baseline_a}
\end{subfigure}
\begin{subfigure}[h]{0.24\textwidth}
\centering
\includegraphics[width=\textwidth]{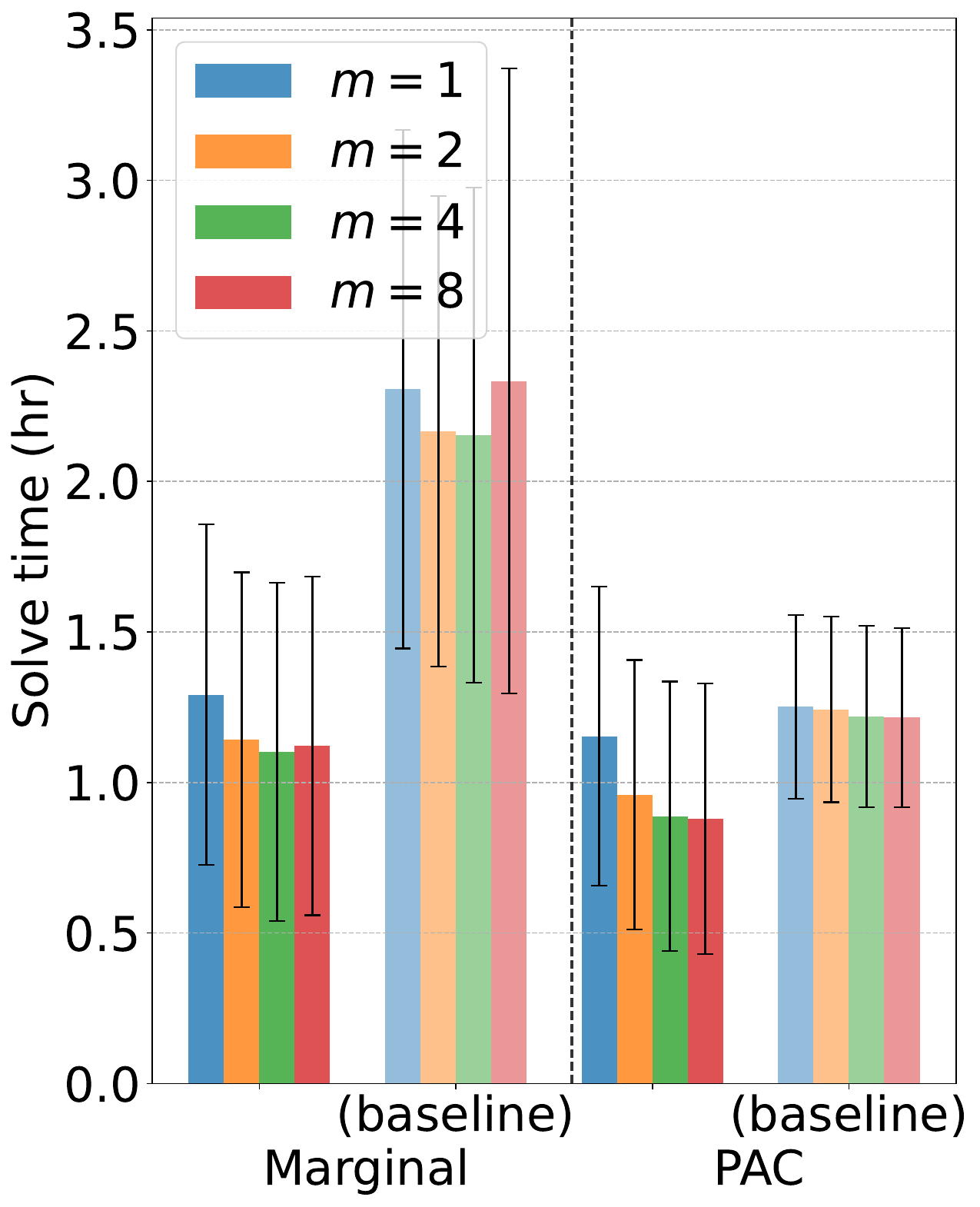}
\caption{2-digit MNIST}
\label{fig:time_solve_threshold_baseline_b}
\end{subfigure}
\begin{subfigure}[h]{0.24\textwidth}
\includegraphics[width=\textwidth]{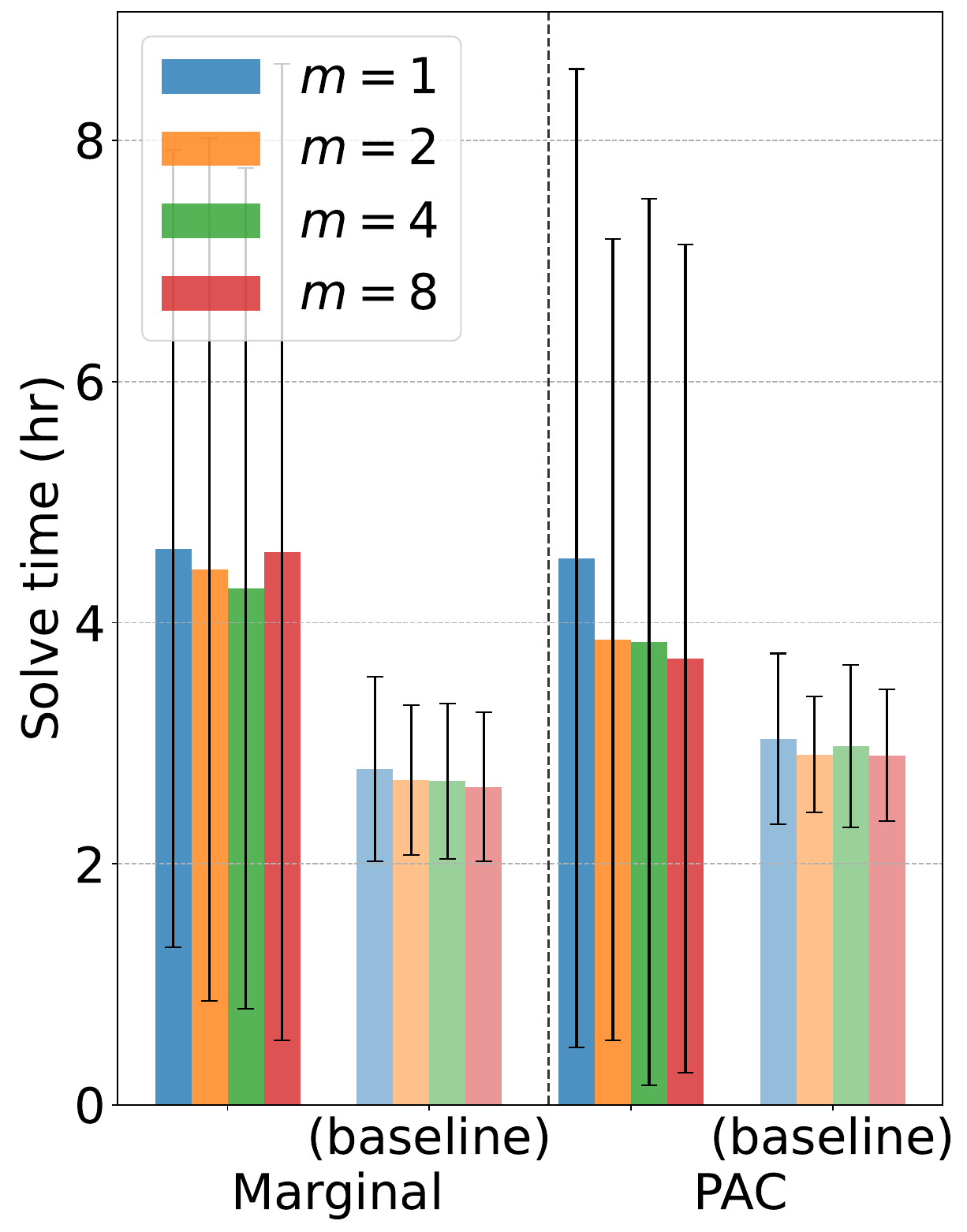}
\caption{3-digit MNIST}
\label{fig:time_solve_threshold_baseline_c}
\end{subfigure}
\begin{subfigure}[h]{0.24\textwidth}
\centering
\includegraphics[width=\textwidth]{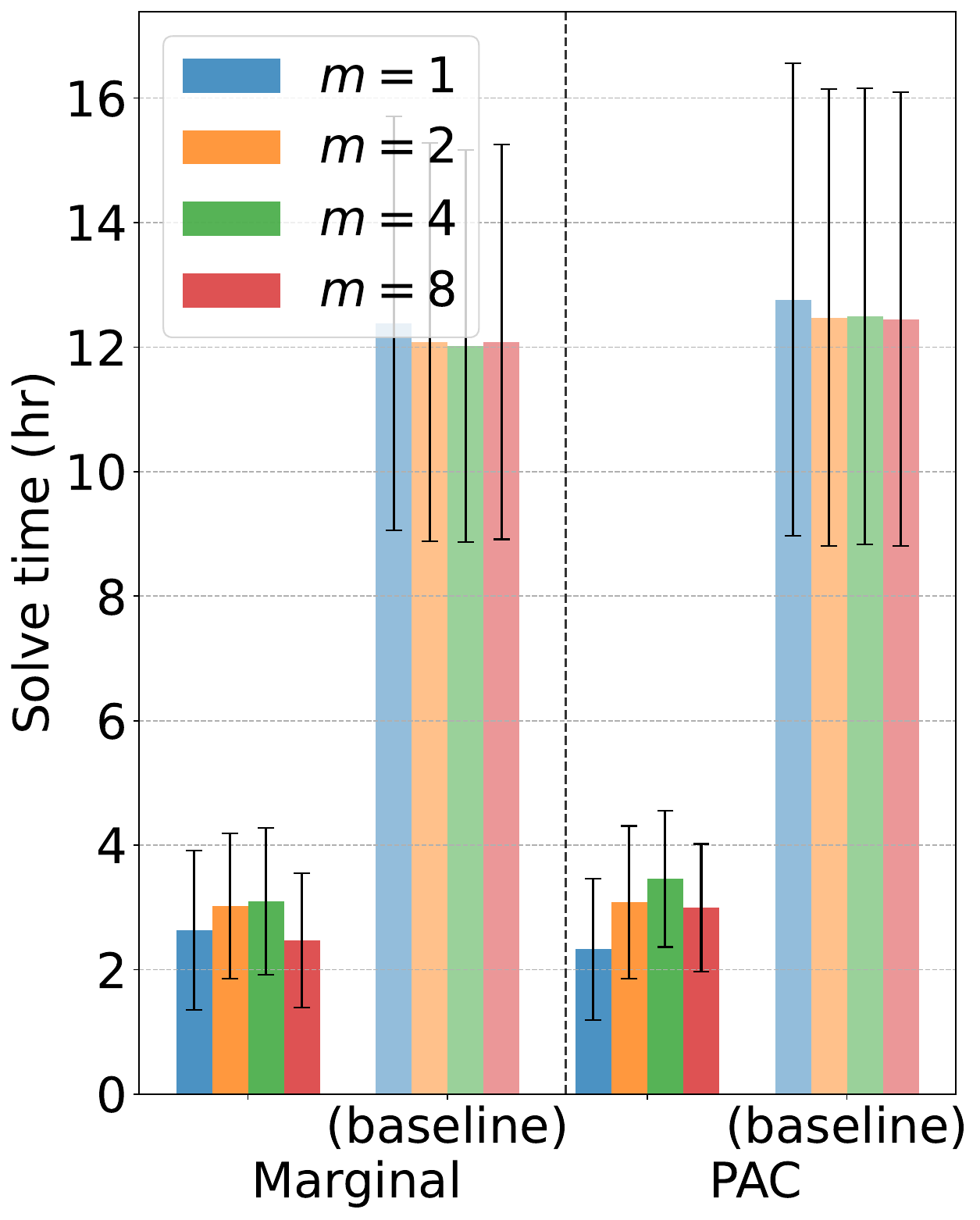}
\caption{ImageNet}
\label{fig:time_solve_threshold_baseline_d}
\end{subfigure}
\caption{Average time (hour) required to complete each of the four tasks---(a) question answering, (b) 2-digit MNIST, (c) 3-digit MNIST, and (d) ImageNet---with nontrivial solve times (as shown in Figure~\ref{fig:avg_time_per_test_comparison} and Figure~\ref{fig:time_solve_threshold_comparison}), along with their respective baselines, as $m$ varies, for both marginal and PAC guarantees.}
\label{fig:runtime_baseline_comparison}
\end{figure}

Finally, our approach significantly outperforms the baseline in most domains. Figure~\ref{fig:runtime_baseline_comparison} shows the average time required to complete the four tasks with nontrivial solve times (i.e., question answering, 2-digit MNIST, 3-digit MNIST, and ImageNet). For the question answering and ImageNet task, our approach is significantly faster than the baseline. In the 2-digit MNIST problem, we outperform the baseline with stronger improvements under the marginal guarantee. However, for the 3-digit MNIST problem, our approach is slightly slower than the baseline. The faster running time of our approach is likely due to having fewer restrictions on the structure of the prediction sets across different $\tau$.

\subsection{Additional Qualitative Examples}
\label{sec:additional_qualitative_examples}

\begin{figure}[t]
\centering
\includegraphics[width=0.9\textwidth]{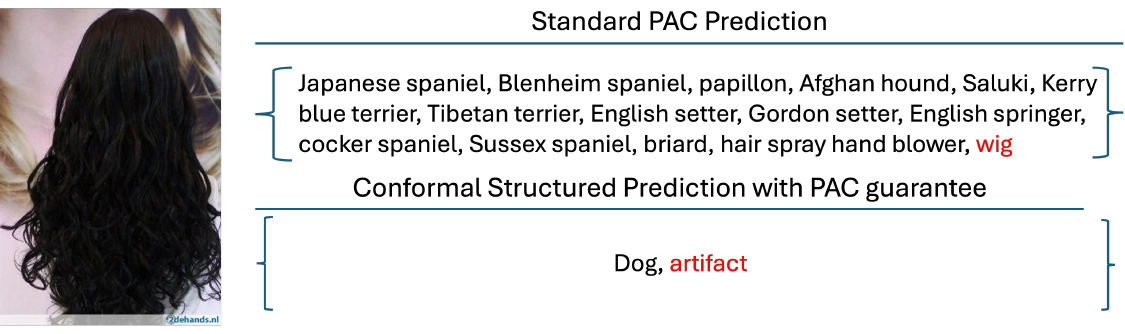}
\caption{Comparison of prediction sets generated by conformal structured prediction under PAC guarantee and the standard PAC prediction algorithm~\citep{vovk2012conditional,park2020pac}. For both methods, the error level is $0.05$ and the confidence level is $0.99$ ($\delta=0.01$).}
\label{fig:compare_with_standard_pac}
\end{figure}

\begin{table}[t]
\centering
\small
\begin{tabular}{cccc} \toprule
& $\epsilon=0.05$ & $\epsilon=0.1$ & $\epsilon=0.2$ \\ \midrule
$m=1$ & $\left\{ \makecell{\textcolor{red}{\text{whole}}} \right\}$    & $\left\{ \makecell{\textcolor{red}{\text{artifact}}} \right\}$     & $\left\{ \makecell{\textcolor{red}{\text{hairpiece}}} \right\}$       \\

$m=2$ & $\left\{ \makecell{\text{dog}, \\ \textcolor{red}{{\text{artifact}}}} \right\}$     & $\left\{ \makecell{\text{toy spaniel}, \\ \textcolor{red}{\text{attire}}} \right\}$    &  $\left\{ \makecell{\textcolor{red}{\text{wig}}} \right\}$   \\ 

$m=4$ & $\left\{ \makecell{\text{toy dog}, \\ \textcolor{red}{{\text{clothing}}}, \\ \text{hunting dog} } \right\}$     & $\left\{ \makecell{\text{Blenheim spaniel}, \\ \textcolor{red}{\text{attire}}, \\ \text{English setter}} \right\}$   &  $\left\{ \makecell{\textcolor{red}{\text{wig}}} \right\}$   \\\bottomrule
\end{tabular}
\caption{Structured prediction sets for the image in Figure~\ref{fig:compare_with_standard_pac}. Labels in red contain the ground class.}
\label{tab:imagenet_qual_example_2}
\end{table}

\begin{figure}[t]
\centering
\begin{minipage}[t]{0.4\textwidth}
\centering
\includegraphics[width=\textwidth]{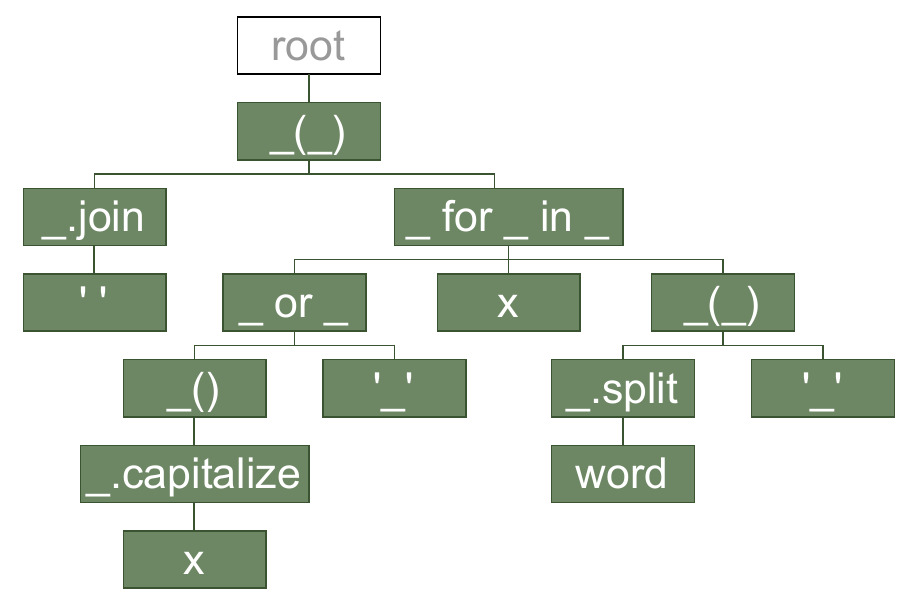}
\begin{lstlisting}[language=python, breaklines=true]
''.join(x.capitalize() or '_' for x in word.split('_'))
\end{lstlisting}
\subcaption{Ground truth}
\label{fig:python_qual_label}
\end{minipage}
\hspace{0.5in}
\begin{minipage}[t]{0.4\textwidth}
\centering
\includegraphics[width=\textwidth]{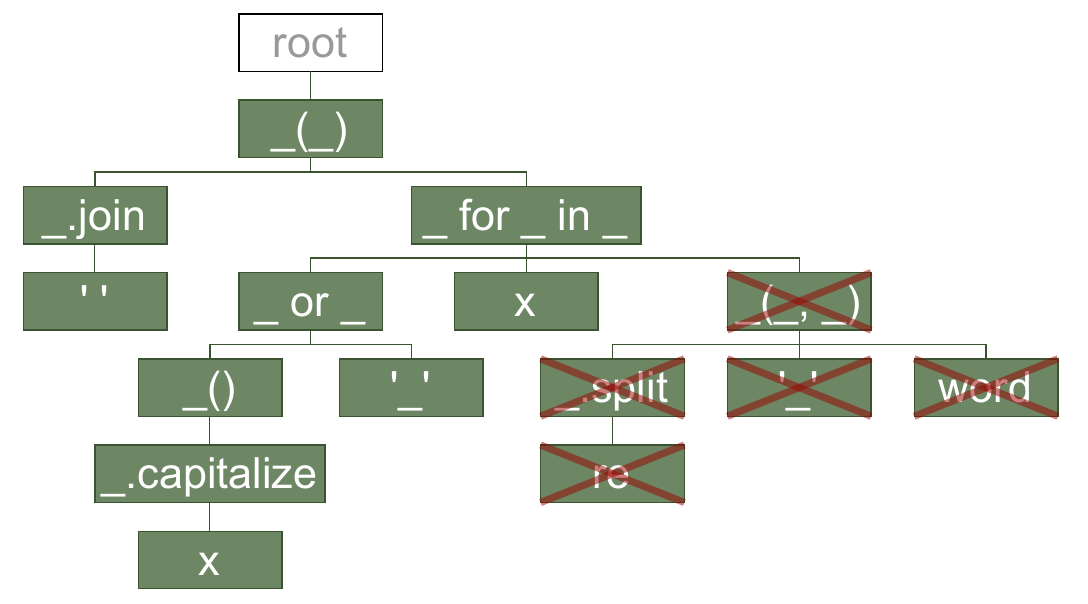}
\begin{lstlisting}[language=python, breaklines=true]
# ''.join(x.capitalize() or '_' for x in re.split('_', word))

''.join(x.capitalize() or '_' for x in ??)
\end{lstlisting}
\subcaption{Prediction set}
\end{minipage}
\begin{minipage}[t]{0.42\textwidth}
\centering
\includegraphics[width=\textwidth]{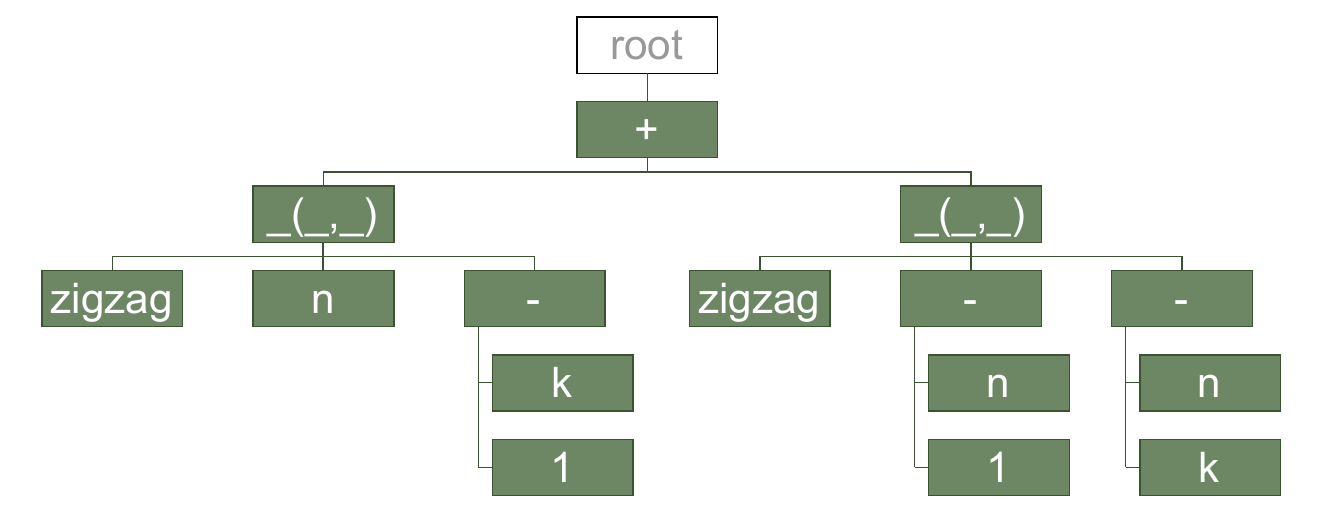}
\begin{lstlisting}[language=python, breaklines=true]
zigzag(n, k - 1) + zigzag(n - 1, n - k)
\end{lstlisting}
\subcaption{Ground truth}
\label{fig:python_qual_2_label}
\end{minipage}
\hspace{0.5in}
\begin{minipage}[t]{0.42\textwidth}
\centering
\includegraphics[width=\textwidth]{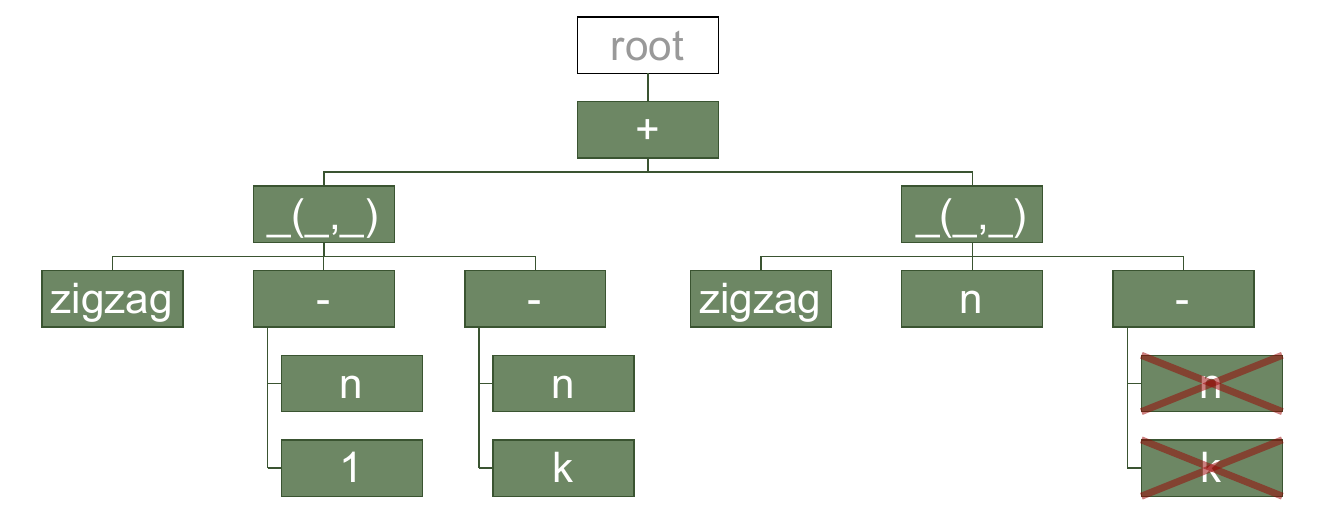}
\begin{lstlisting}[language=python, breaklines=true]
# zigzag(n - 1, n - k) + zigzag(n, n - k)

zigzag(n - 1, n - k) + zigzag(n, ??-??)
\end{lstlisting}
\subcaption{Prediction set}
\end{minipage}
\caption{Structured prediction sets for the code generation domain. We show (a,c) the ground truth AST, and (b,d) the predicted AST (nodes removed by our algorithm to obtain the prediction set are crossed out in red). The corresponding code is shown below (for (b,d), commented code is the original prediction). The hyperparameters used are $\epsilon=0.2,m=1$ for (b) and $\epsilon=0.15,m=2$ for (d).}
\label{fig:python_qual}
\end{figure}

\textbf{Comparison with standard PAC prediction sets.}
Figure~\ref{fig:compare_with_standard_pac} compares our approach to the standard PAC prediction set algorithm~\citep{vovk2012conditional,park2020pac}, with $\epsilon=0.05$ and $\delta=0.01$. As can be seen, the standard prediction set is much larger than the one constructed by our algorithm. The large prediction set includes labels from very distant categories, making it harder to interpret compared to having just two coarse-grained labels that summarize the labels. In our structured prediction set, the different dog breeds have been summarized as simply ``dog'', and the different man-made artifacts have been summarized as ``artifact''. The fact that the labels are quite coarse reflects the inherent uncertainty in the prediction for this image.

\textbf{Impact of hyperparameters.}
In Table~\ref{tab:imagenet_qual_example_2}, we show additional prediction sets for the same image as in Figure~\ref{fig:compare_with_standard_pac} to show how hyperparameters influence the prediction sets. Similar to our findings in Table~\ref{tab:qual_2}, for smaller $m$, our prediction sets contain fewer labels, making them more interpretable, while prediction sets with larger $m$ contain more fine-grained labels. Also, with higher $\epsilon$, our algorithm is allowed to make more errors, resulting in much smaller prediction sets.

\textbf{Example structured prediction sets in code domain.}
We show two examples of structured prediction sets in the code generation domain in Figure~\ref{fig:python_qual}. In each example, the AST on the left represents the ground truth program, with its corresponding code generation shown directly below. The AST on the right, with certain nodes removed (crossed in red), represents a structured prediction set for it, where the partial code is shown under the figure with the complete code prediction provided in the comment. The removed portions are denoted as \lstinline{??} in the generated code to indicate uncertainty. Compared to traditional prediction sets consisting of multiple complete programs, the structured prediction sets improve interpretability by explicitly preserving the confident portions of the code while leaving the uncertain parts unspecified. Note that after removing the uncertain components, the remaining partial programs each contain the respective ground truth program.

\subsection{Additional Quantitative Results}
\label{sec:additional_quantitative}

\begin{figure}[H]
\centering
\begin{subfigure}[h]{0.3\textwidth}
\centering
\includegraphics[width=\textwidth]{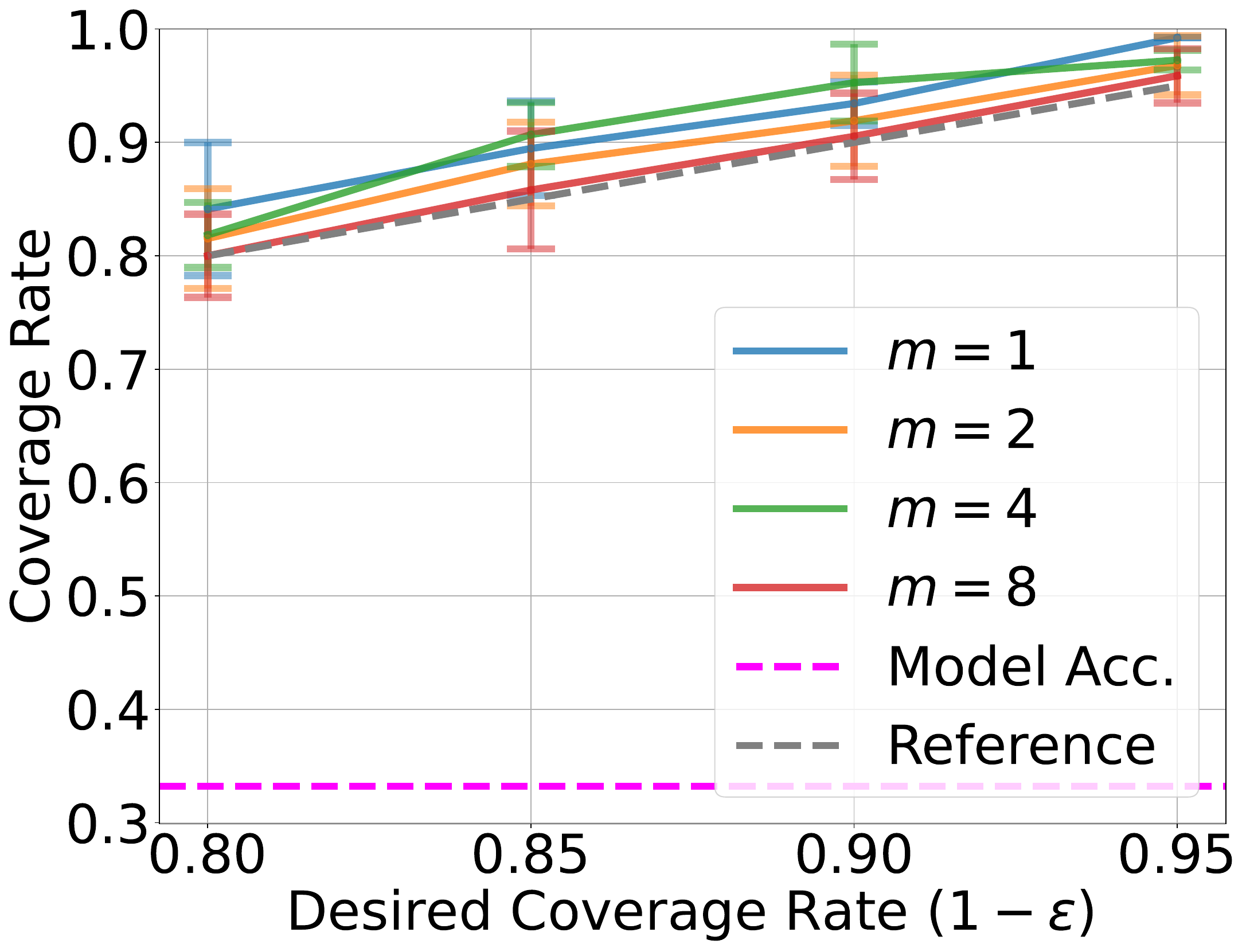}
\caption{Marginal}
\label{fig:squad_baseline_coverage_a}
\end{subfigure}
\begin{subfigure}[h]{0.3\textwidth}
\centering
\includegraphics[width=\textwidth]{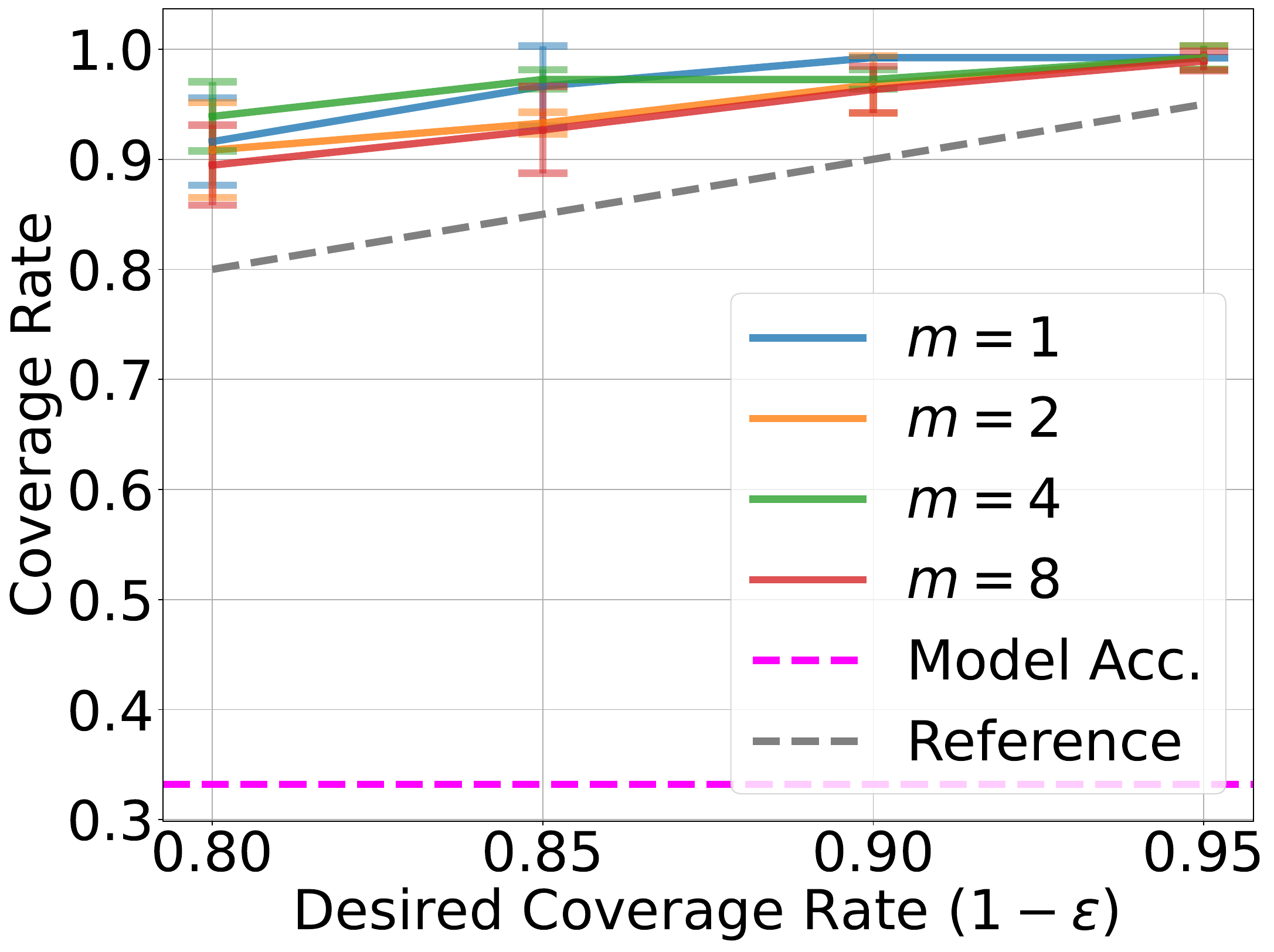}
\caption{PAC ($\delta=0.01$)}
\label{fig:squad_baseline_coverage_b}
\end{subfigure}
\begin{subfigure}[h]{0.3\textwidth}
\centering
\includegraphics[width=\textwidth]{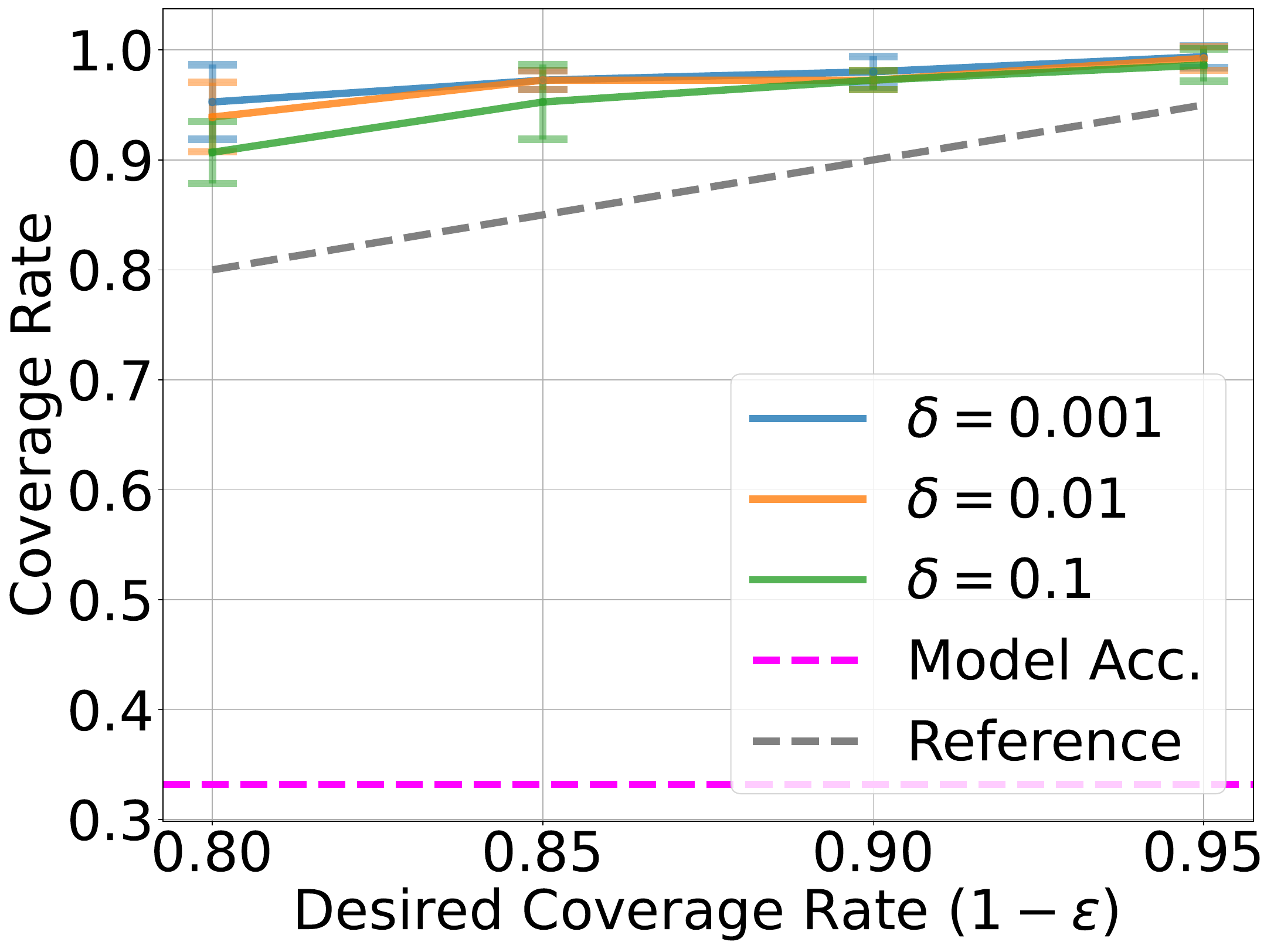}
\caption{PAC ($m=4$)}
\label{fig:squad_baseline_coverage_c}
\end{subfigure}
\caption{Coverage rates for the \textbf{question answering task using the baseline}, for (a) marginal guarantee, (b) PAC guarantee with fixed $\delta$ and varying $m$, and (c) PAC guarantee with fixed $m$ and varying $\delta$.}
\label{fig:squad_baseline_coverage}
\end{figure}

\begin{figure}[H]
\centering
\begin{subfigure}[h]{0.3\textwidth}
\centering
\includegraphics[width=\textwidth]{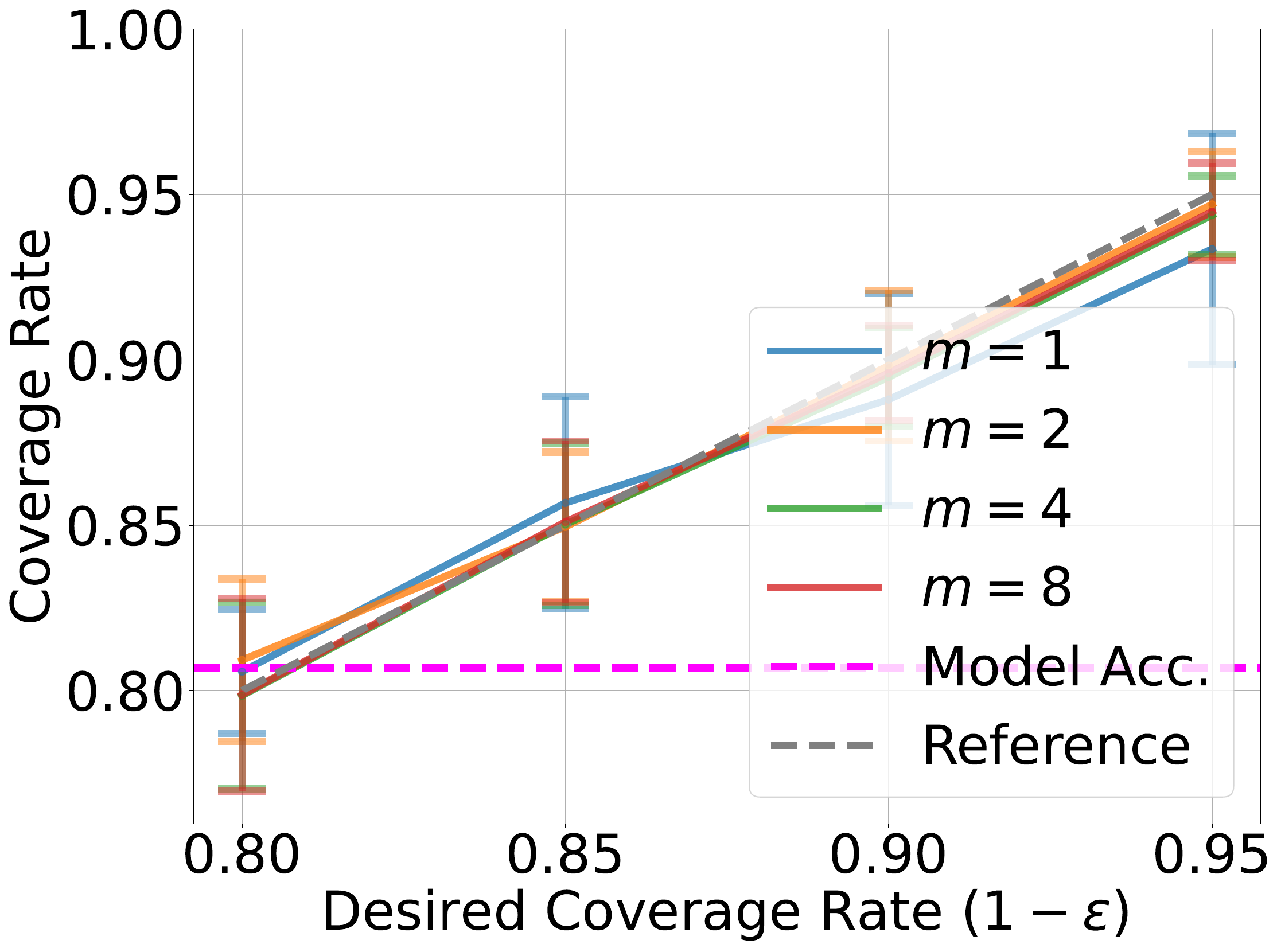}
\caption{Marginal}
\label{fig:mnist2_coverage_a}
\end{subfigure}
\begin{subfigure}[h]{0.3\textwidth}
\centering
\includegraphics[width=\textwidth]{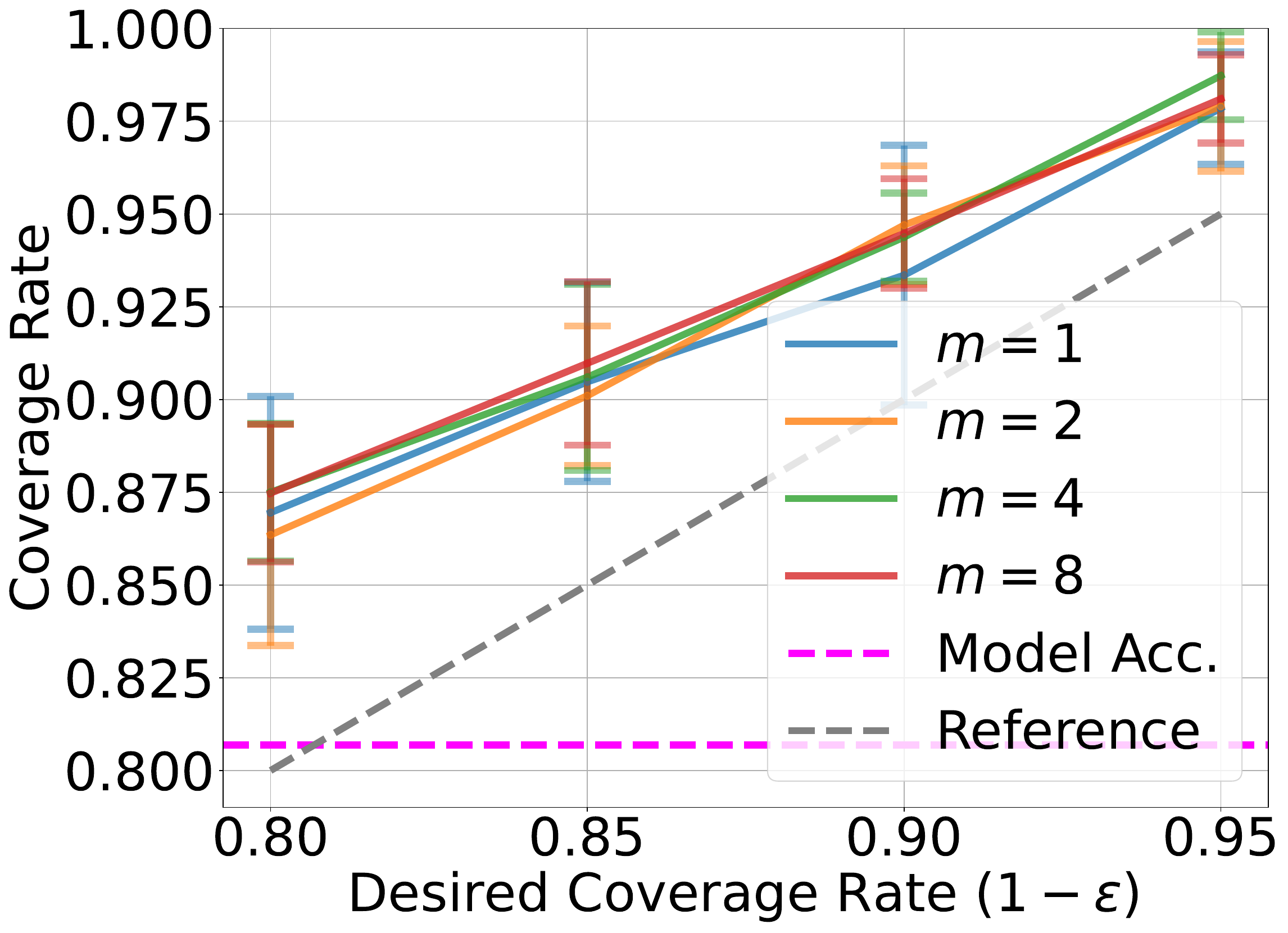}
\caption{PAC ($\delta=0.01$)}
\label{fig:mnist2_coverage_b}
\end{subfigure}
\begin{subfigure}[h]{0.3\textwidth}
\centering
\includegraphics[width=\textwidth]{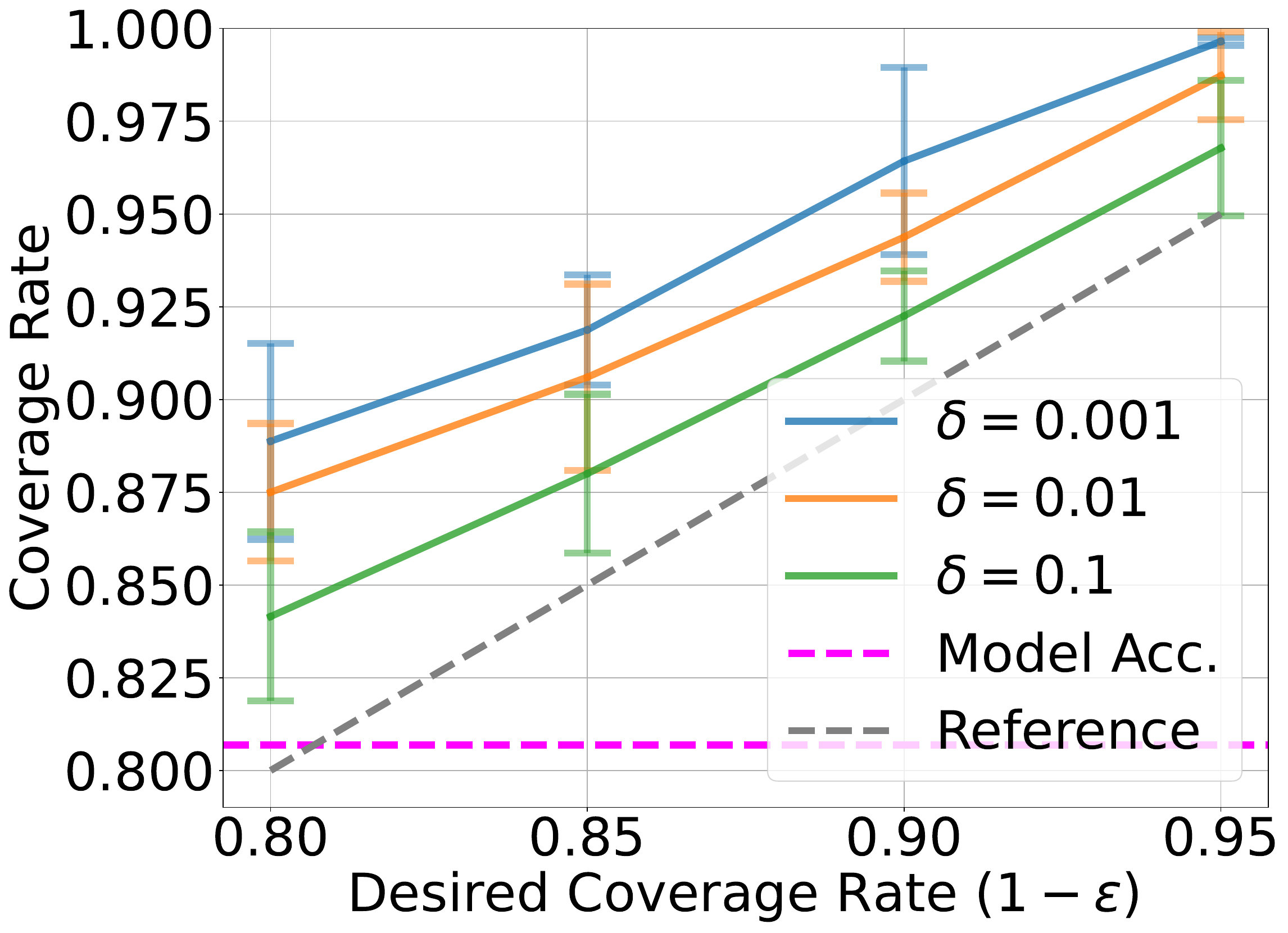}
\caption{PAC ($m=4$)}
\label{fig:mnist2_coverage_c}
\end{subfigure}
\caption{Coverage rates for the \textbf{2-digit MNIST} task with $n=200$, for (a) marginal guarantee, (b) PAC guarantee with fixed $\delta$ and varying $m$, and (c) PAC guarantee with fixed $m$ and varying $\delta$.}
\label{fig:mnist2_coverage}
\end{figure}


\begin{figure}[H]
\centering
\begin{subfigure}[h]{0.3\textwidth}
\centering
\includegraphics[width=\textwidth]{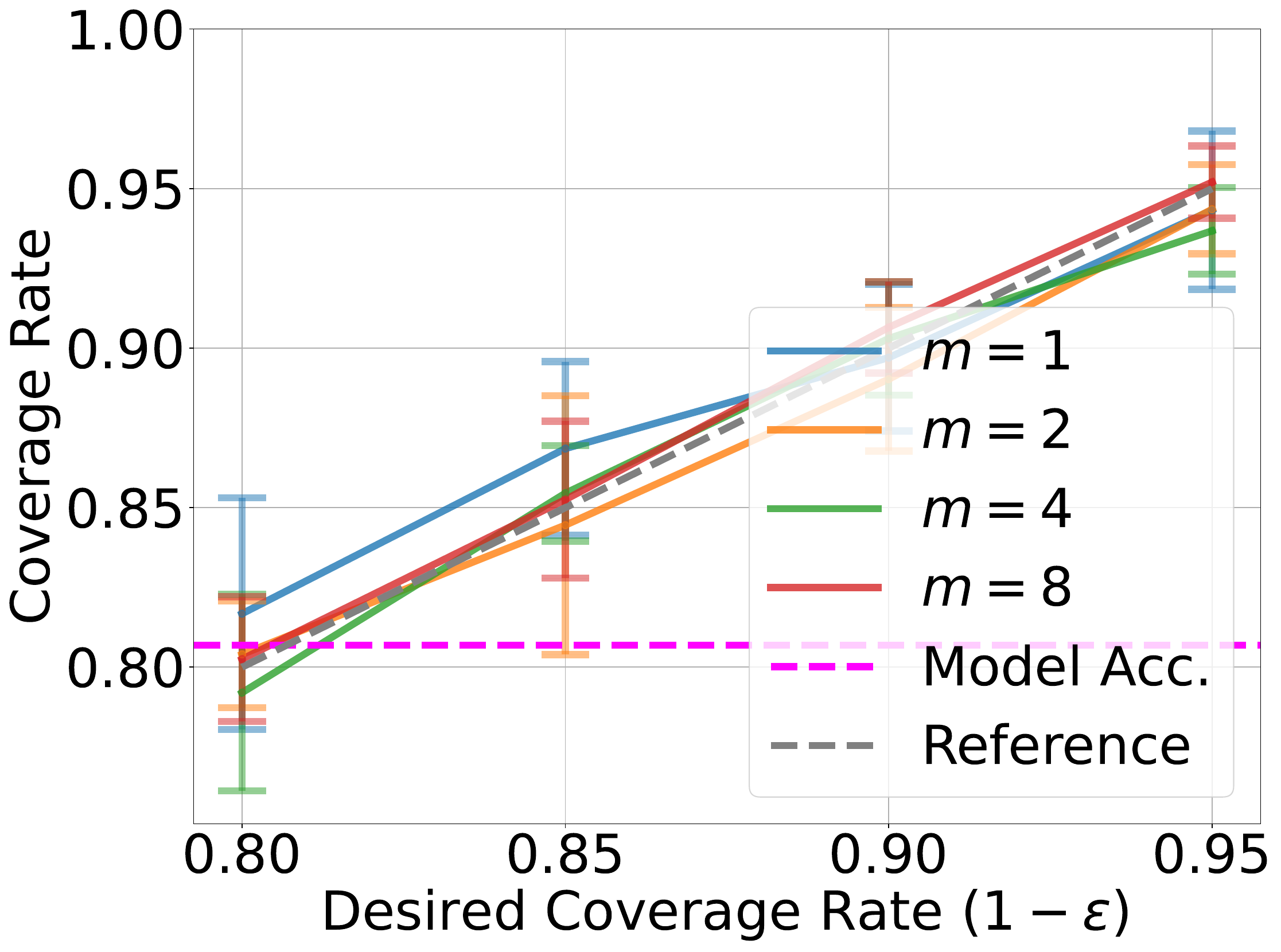}
\caption{Marginal}
\label{fig:mnist2_baseline_coverage_a}
\end{subfigure}
\begin{subfigure}[h]{0.3\textwidth}
\centering
\includegraphics[width=\textwidth]{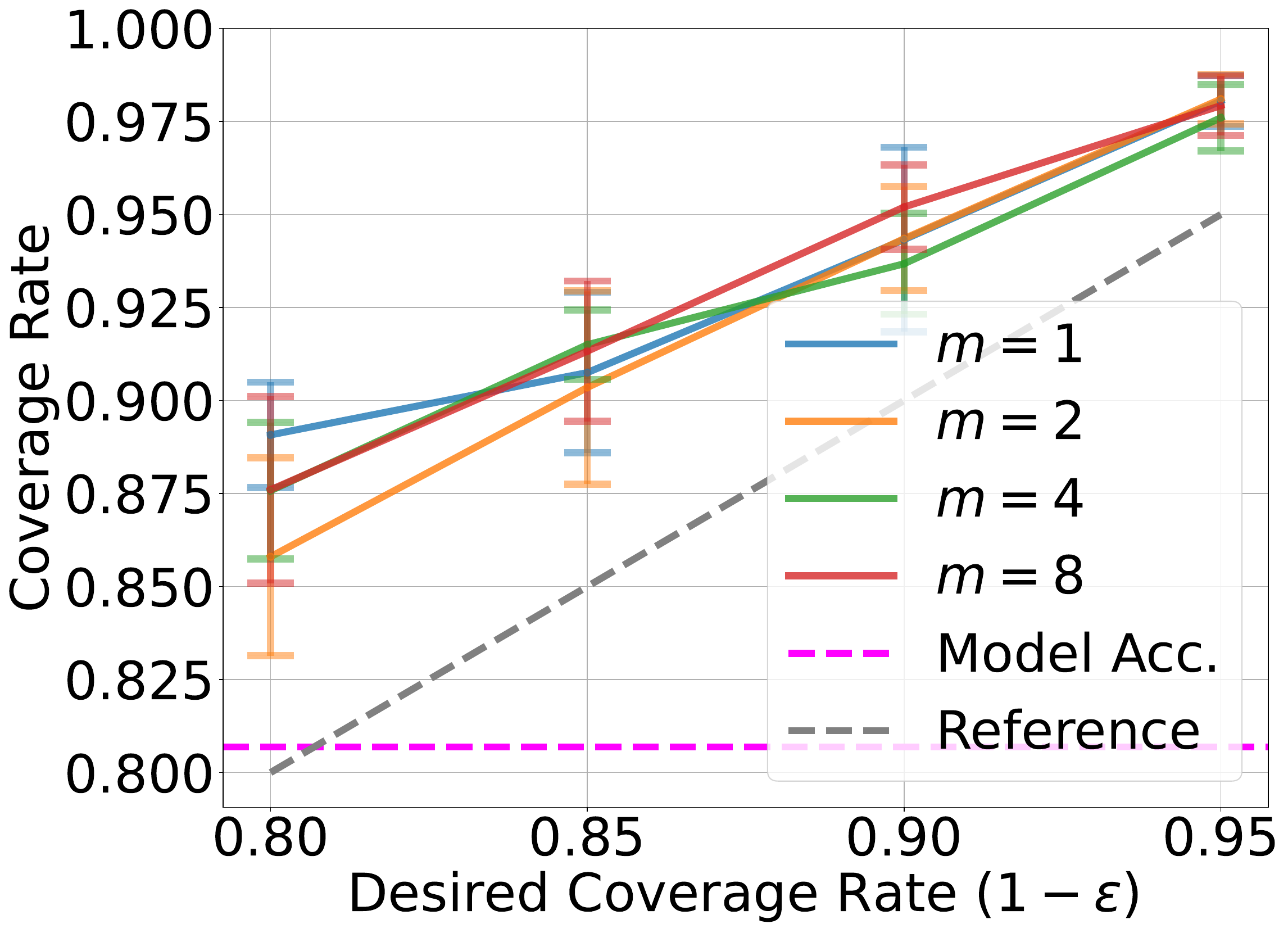}
\caption{PAC ($\delta=0.01$)}
\label{fig:mnist2_baseline_coverage_b}
\end{subfigure}
\begin{subfigure}[h]{0.3\textwidth}
\centering
\includegraphics[width=\textwidth]{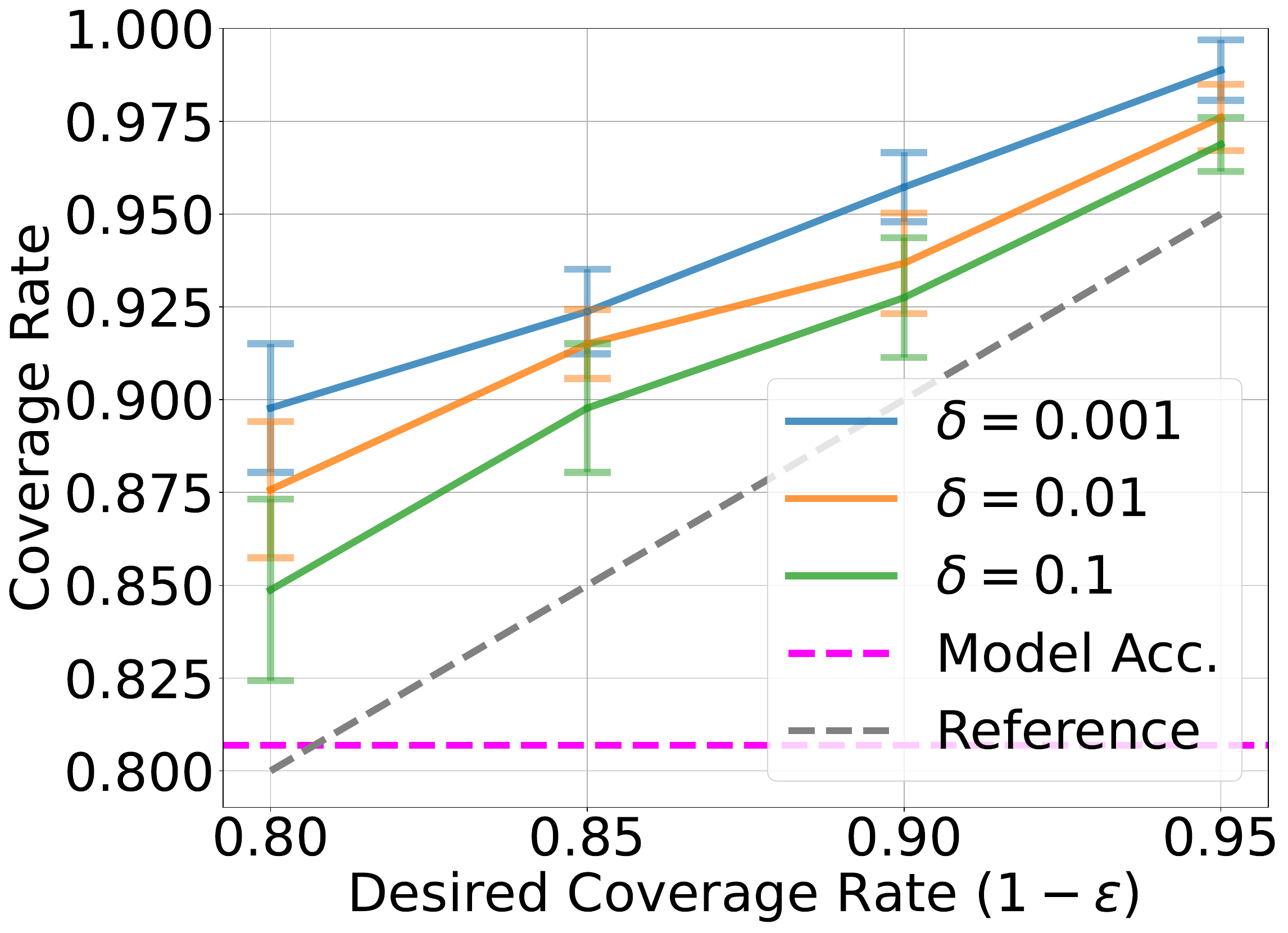}
\caption{PAC ($m=4$)}
\label{fig:mnist2_baseline_coverage_c}
\end{subfigure}
\caption{Coverage rates for the \textbf{2-digit MNIST task using the baseline} with $n=200$, for (a) marginal guarantee, (b) PAC guarantee with fixed $\delta$ and varying $m$, and (c) PAC guarantee with fixed $m$ and varying $\delta$.}
\label{fig:mnist2_baseline_coverage}
\end{figure}


\begin{figure}[H]
\centering
\begin{subfigure}[h]{0.3\textwidth}
\centering
\includegraphics[width=\textwidth]{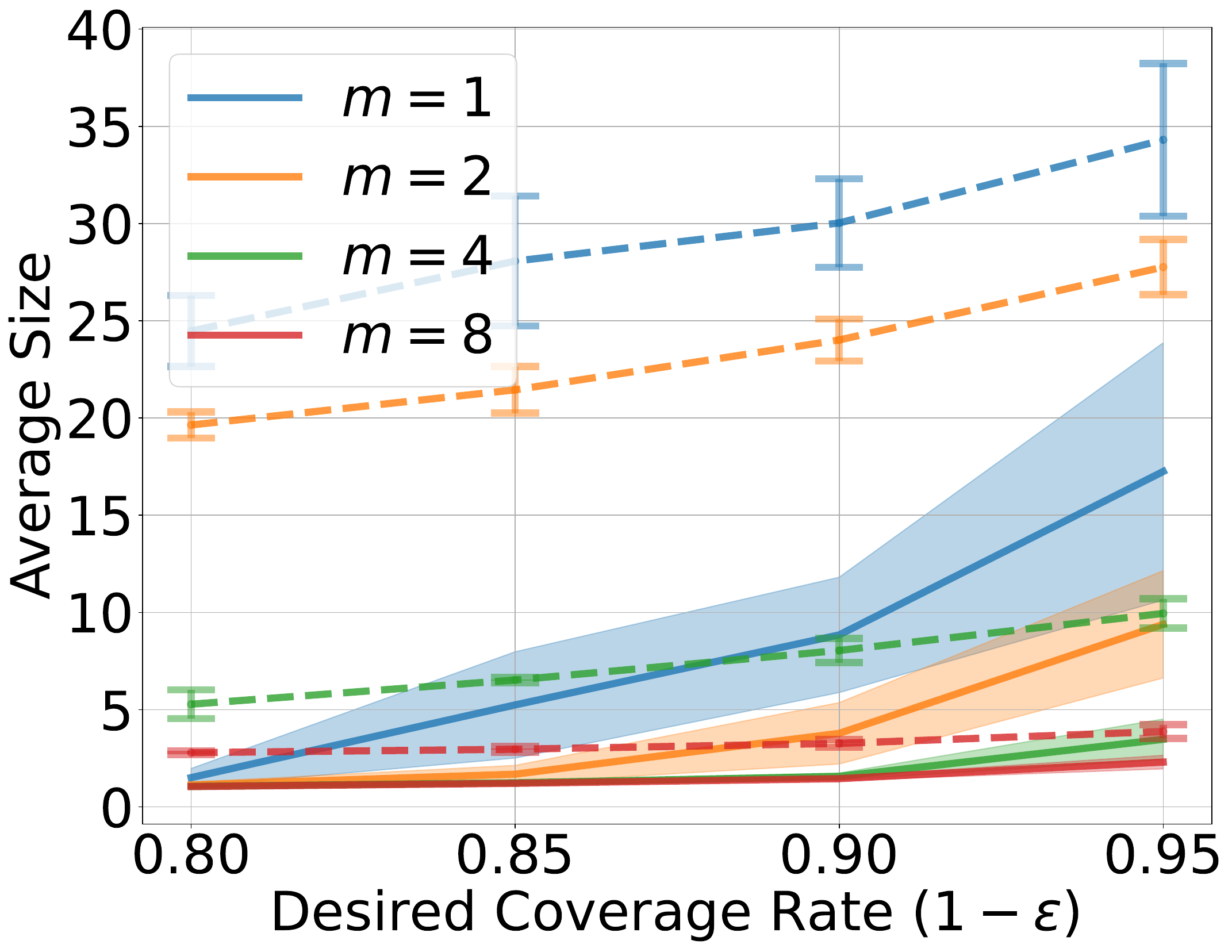}
\caption{Marginal}
\label{fig:mnist2_size_a}
\end{subfigure}
\begin{subfigure}[h]{0.3\textwidth}
\centering
\includegraphics[width=\textwidth]{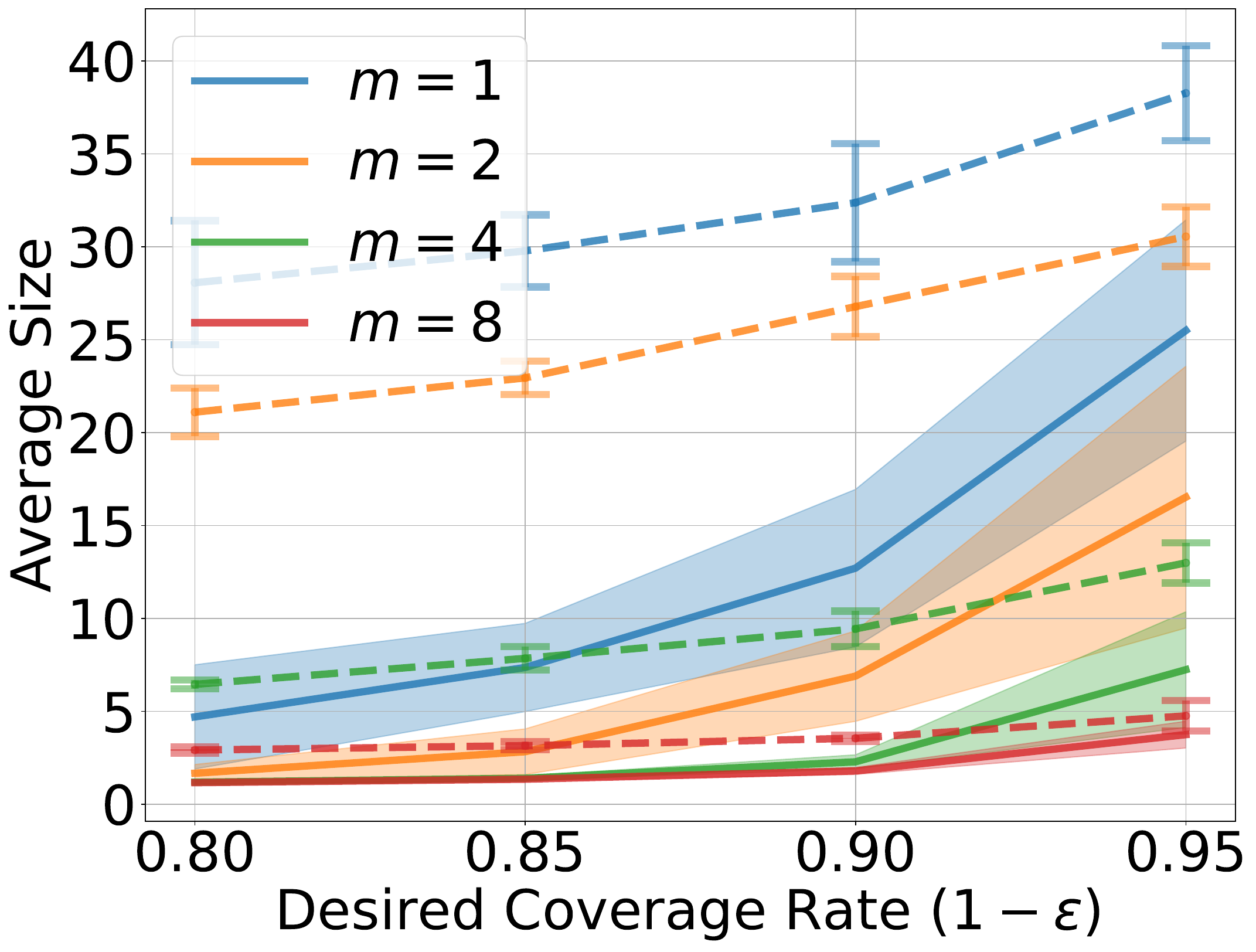}
\caption{PAC ($\delta=0.1$)}
\label{fig:mnist2_size_b}
\end{subfigure}
\begin{subfigure}[h]{0.3\textwidth}
\centering
\includegraphics[width=\textwidth]{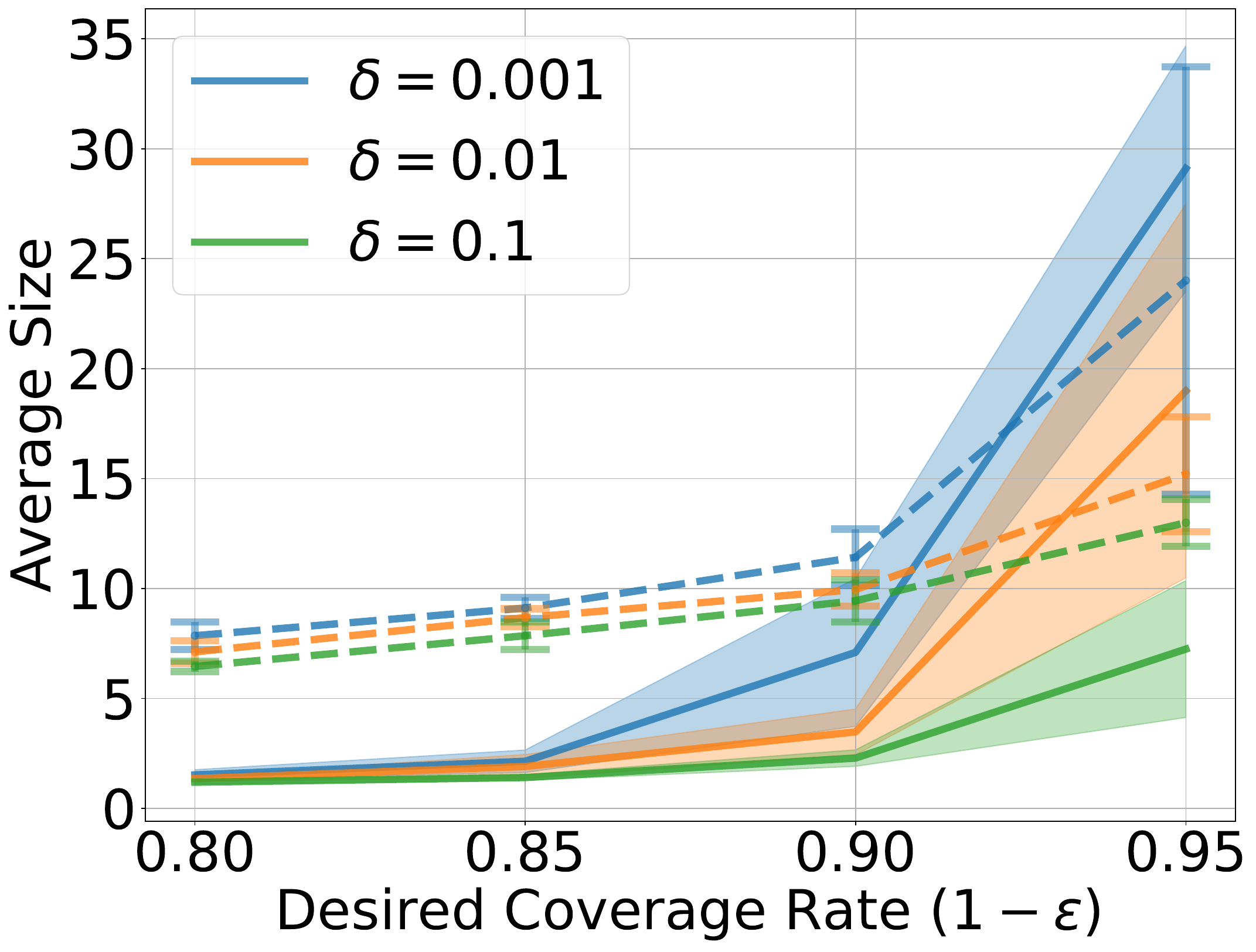}
\caption{PAC ($m=4$)}
\label{fig:mnist2_size_c}
\end{subfigure}
\caption{Prediction set sizes for the \textbf{2-digit MNIST} task ($n=200$), with the baseline represented by dashed lines, for (a) marginal guarantee, (b) PAC guarantee with fixed $\delta$ and varying $m$, and (c) PAC guarantee with fixed $m$ and varying $\delta$.}
\label{fig:mnist2_size}
\end{figure}

\begin{figure}[H]
\centering
\begin{subfigure}[h]{0.3\textwidth}
\centering
\includegraphics[width=\textwidth]{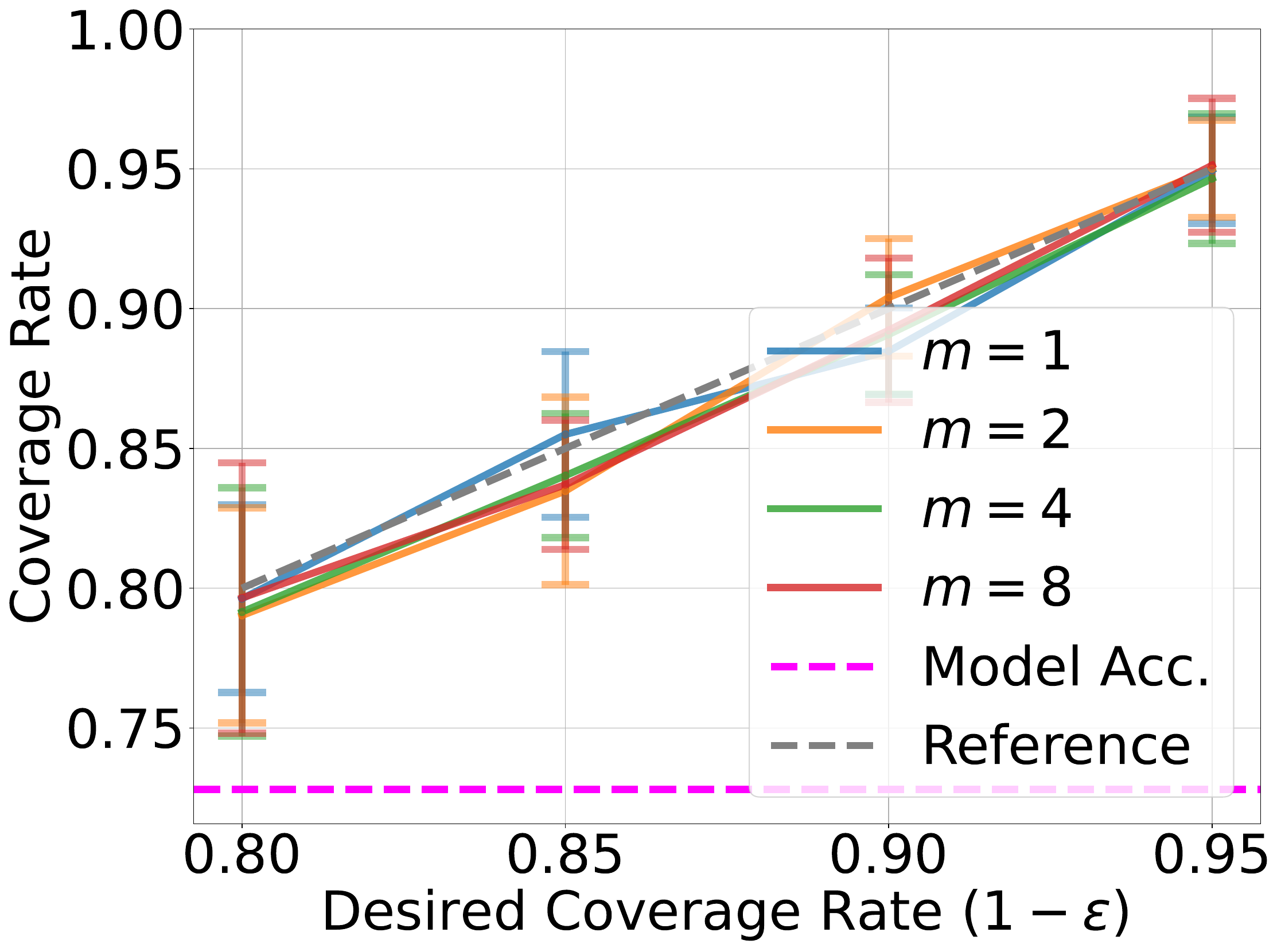}
\caption{Marginal}
\label{fig:mnist3_coverage_a}
\end{subfigure}
\begin{subfigure}[h]{0.3\textwidth}
\centering
\includegraphics[width=\textwidth]{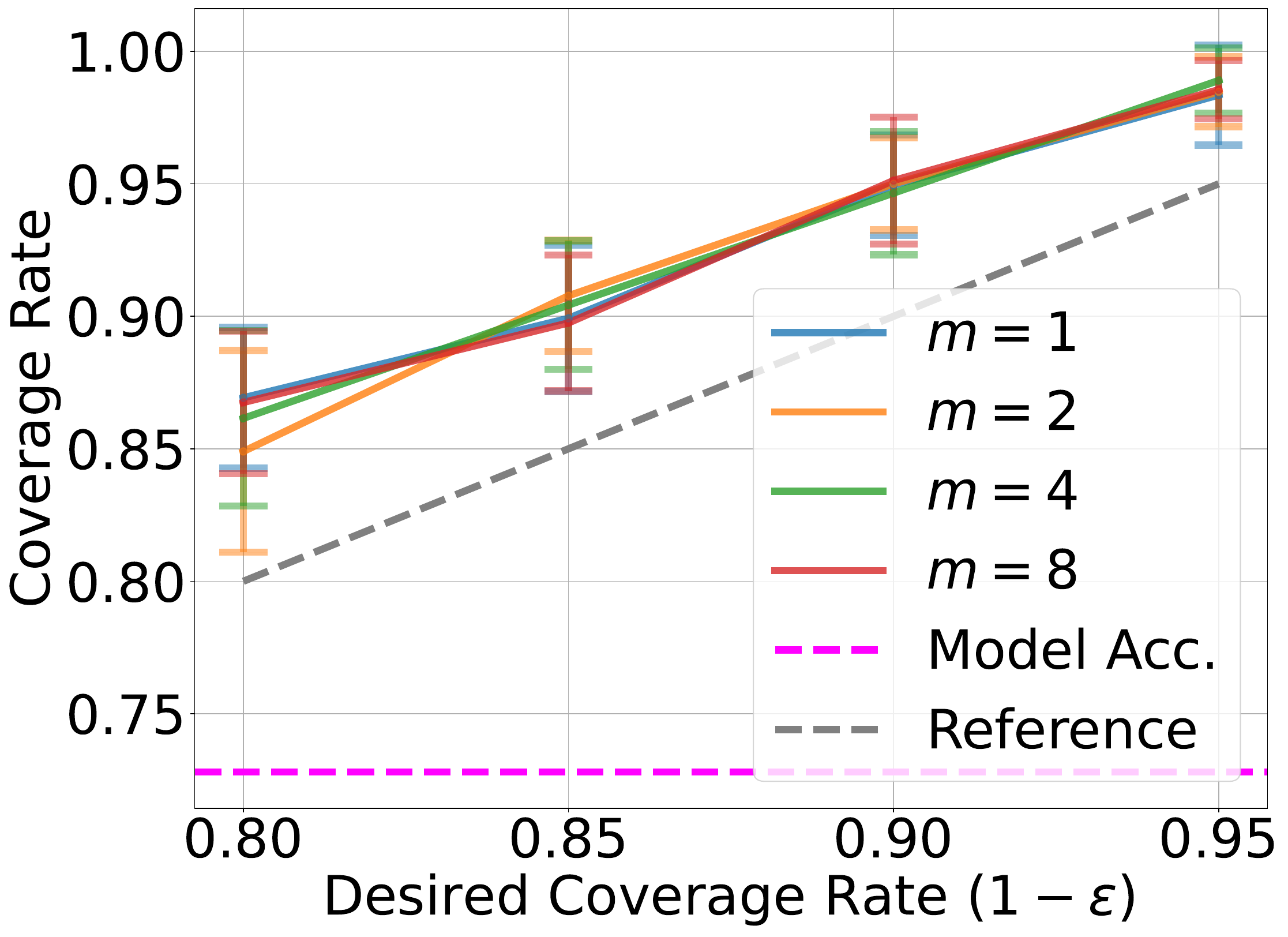}
\caption{PAC ($\delta=0.01$)}
\label{fig:mnist3_coverage_b}
\end{subfigure}
\begin{subfigure}[h]{0.3\textwidth}
\centering
\includegraphics[width=\textwidth]{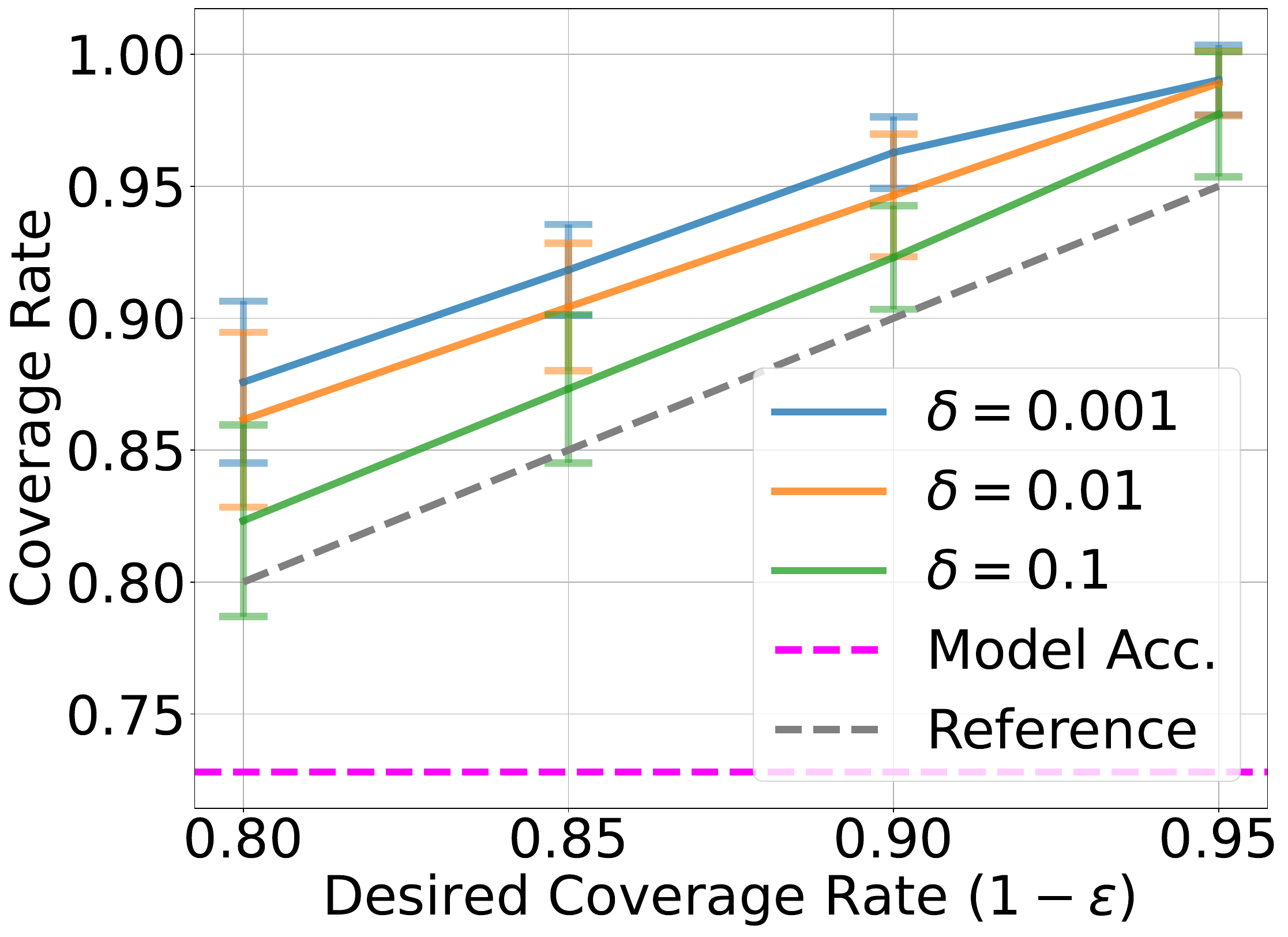}
\caption{PAC ($m=4$)}
\label{fig:mnist3_coverage_c}
\end{subfigure}
\caption{Coverage rates for the \textbf{3-digit MNIST} task with $n=200$, for (a) marginal guarantee, (b) PAC guarantee with fixed $\delta$ and varying $m$, and (c) PAC guarantee with fixed $m$ and varying $\delta$.}
\label{fig:mnist3_coverage}
\end{figure}


\begin{figure}[H]
\centering
\begin{subfigure}[h]{0.3\textwidth}
\centering
\includegraphics[width=\textwidth]{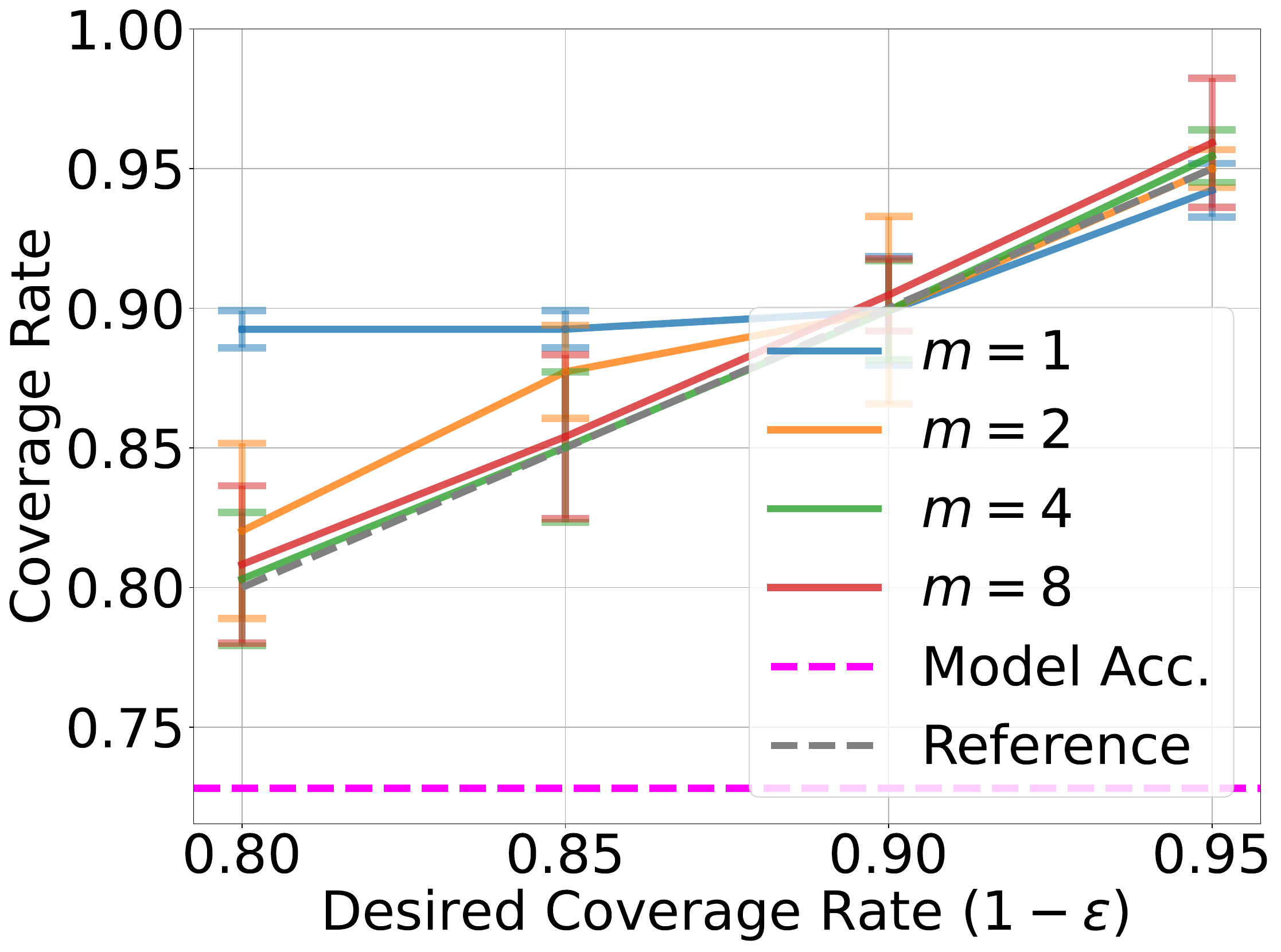}
\caption{Marginal}
\label{fig:mnist3_baseline_coverage_a}
\end{subfigure}
\begin{subfigure}[h]{0.3\textwidth}
\centering
\includegraphics[width=\textwidth]{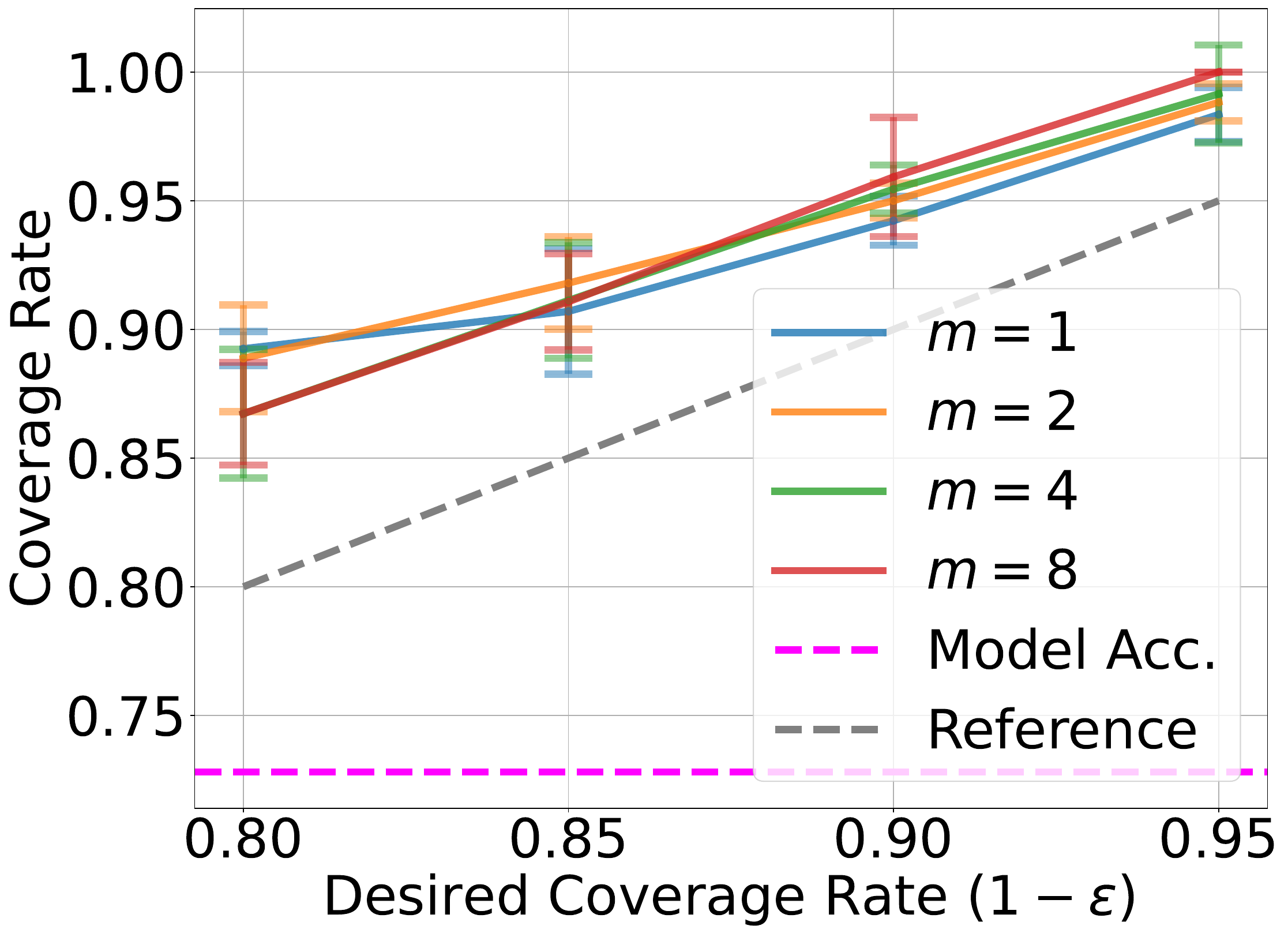}
\caption{PAC ($\delta=0.01$)}
\label{fig:mnist3_baseline_coverage_b}
\end{subfigure}
\begin{subfigure}[h]{0.3\textwidth}
\centering
\includegraphics[width=\textwidth]{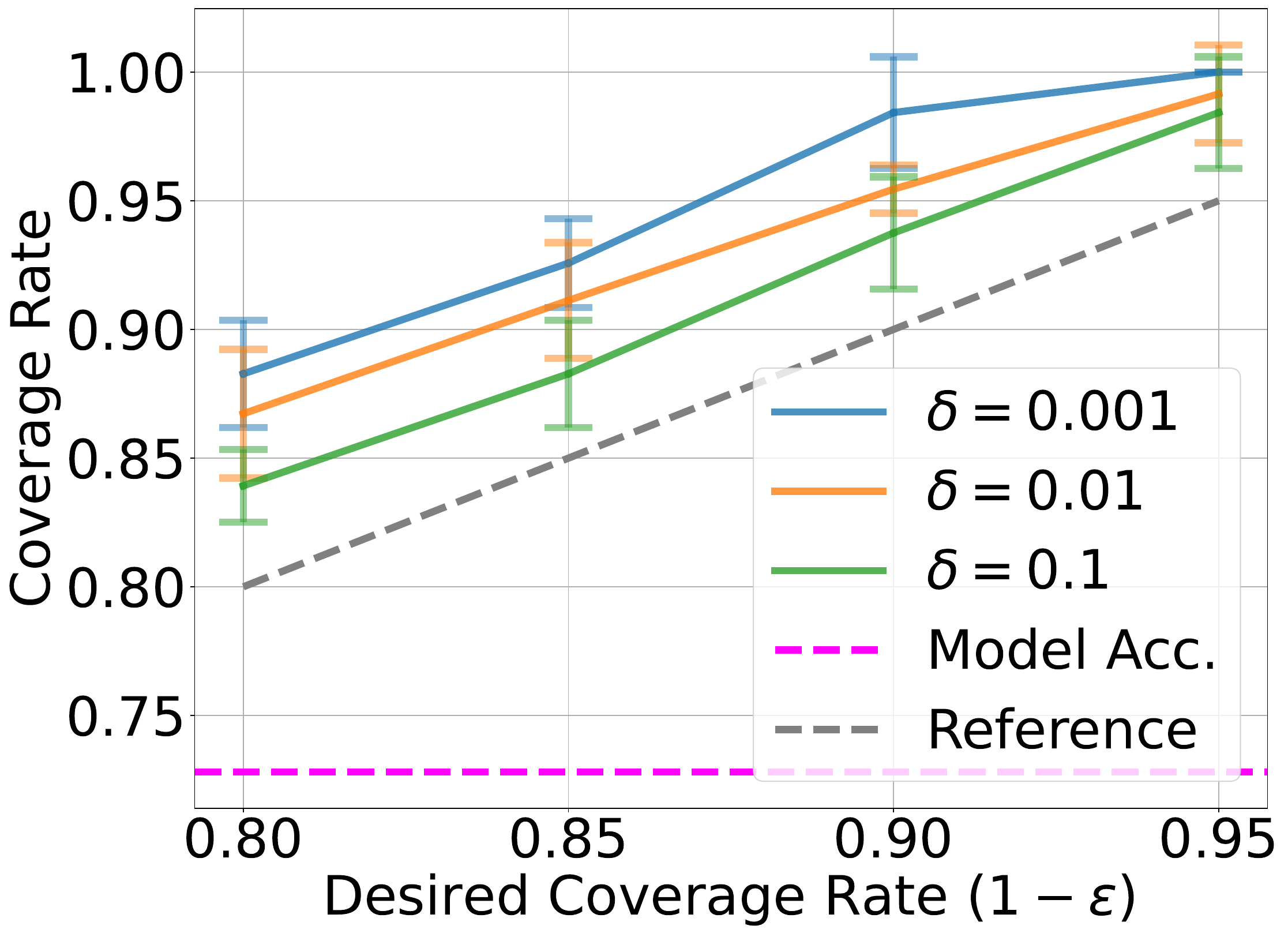}
\caption{PAC ($m=4$)}
\label{fig:mnist3_baseline_coverage_c}
\end{subfigure}
\caption{Coverage rates for the \textbf{3-digit MNIST task using the baseline} with $n=200$, for (a) marginal guarantee, (b) PAC guarantee with fixed $\delta$ and varying $m$, and (c) PAC guarantee with fixed $m$ and varying $\delta$.}
\label{fig:mnist3_baseline_coverage}
\end{figure}


\begin{figure}[H]
\centering
\begin{subfigure}[h]{0.3\textwidth}
\centering
\includegraphics[width=\textwidth]{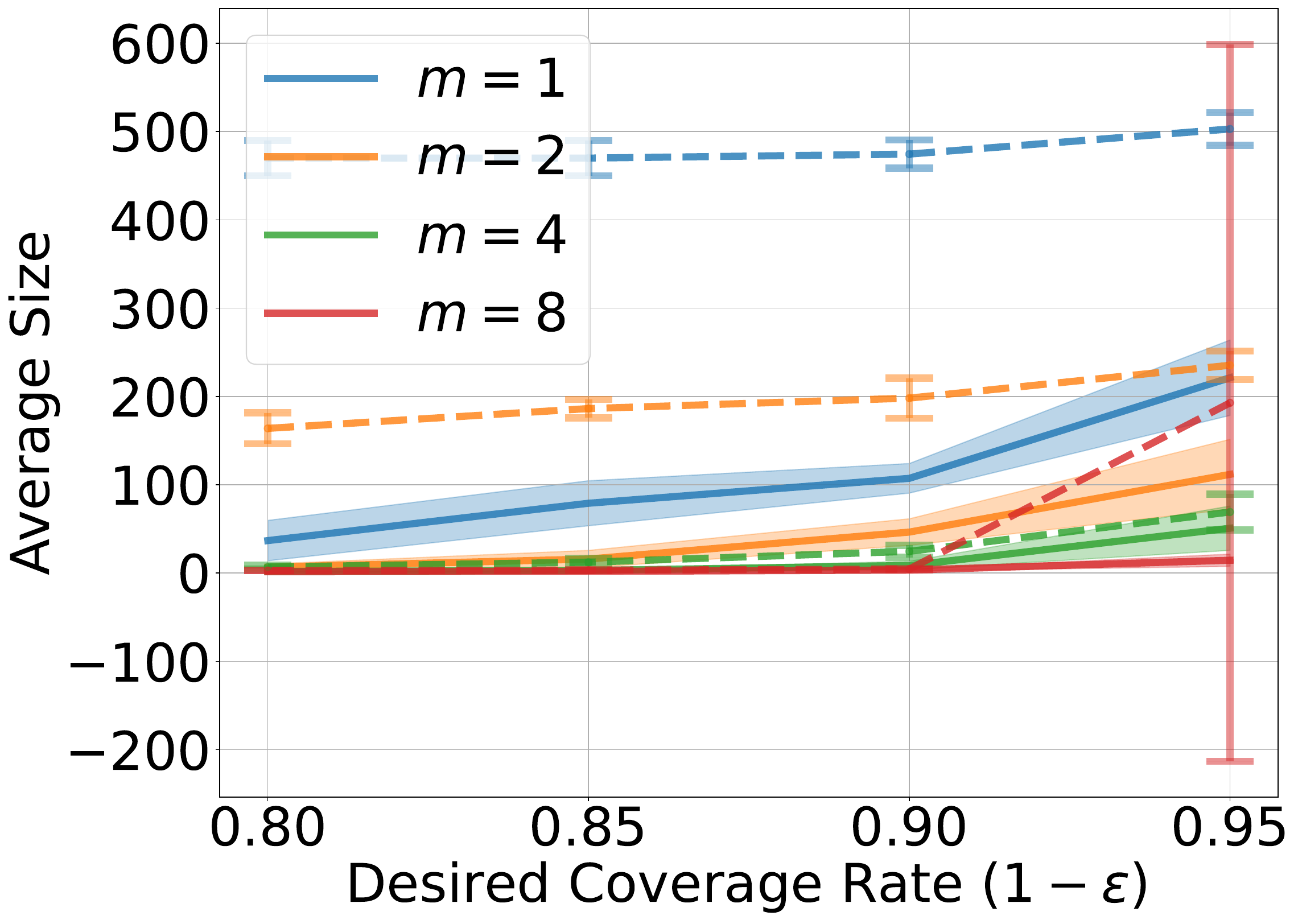}
\caption{Marginal}
\label{fig:mnist3_size_a}
\end{subfigure}
\begin{subfigure}[h]{0.3\textwidth}
\centering
\includegraphics[width=\textwidth]{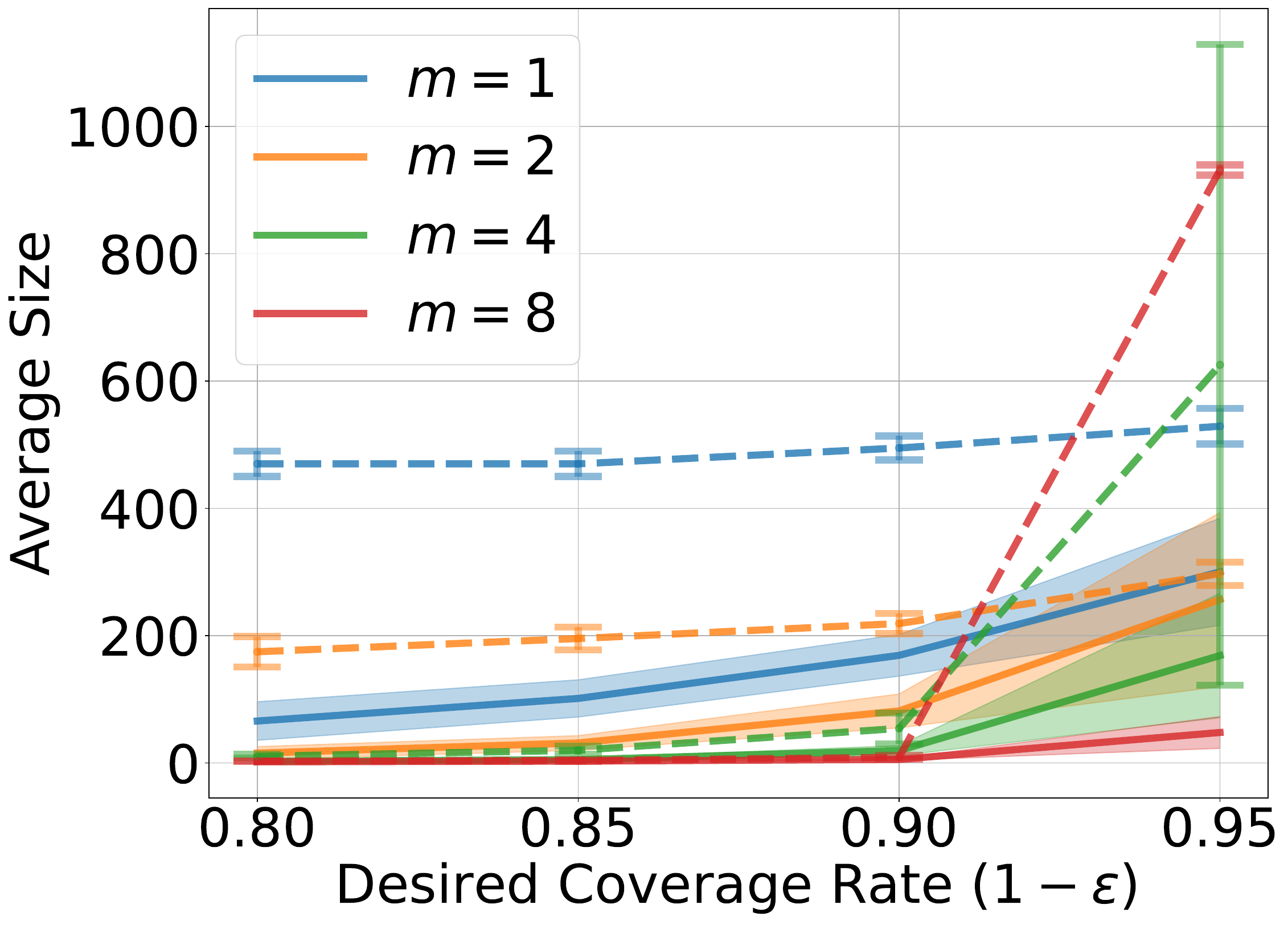}
\caption{PAC ($\delta=0.1$)}
\label{fig:mnist3_size_b}
\end{subfigure}
\begin{subfigure}[h]{0.3\textwidth}
\centering
\includegraphics[width=\textwidth]{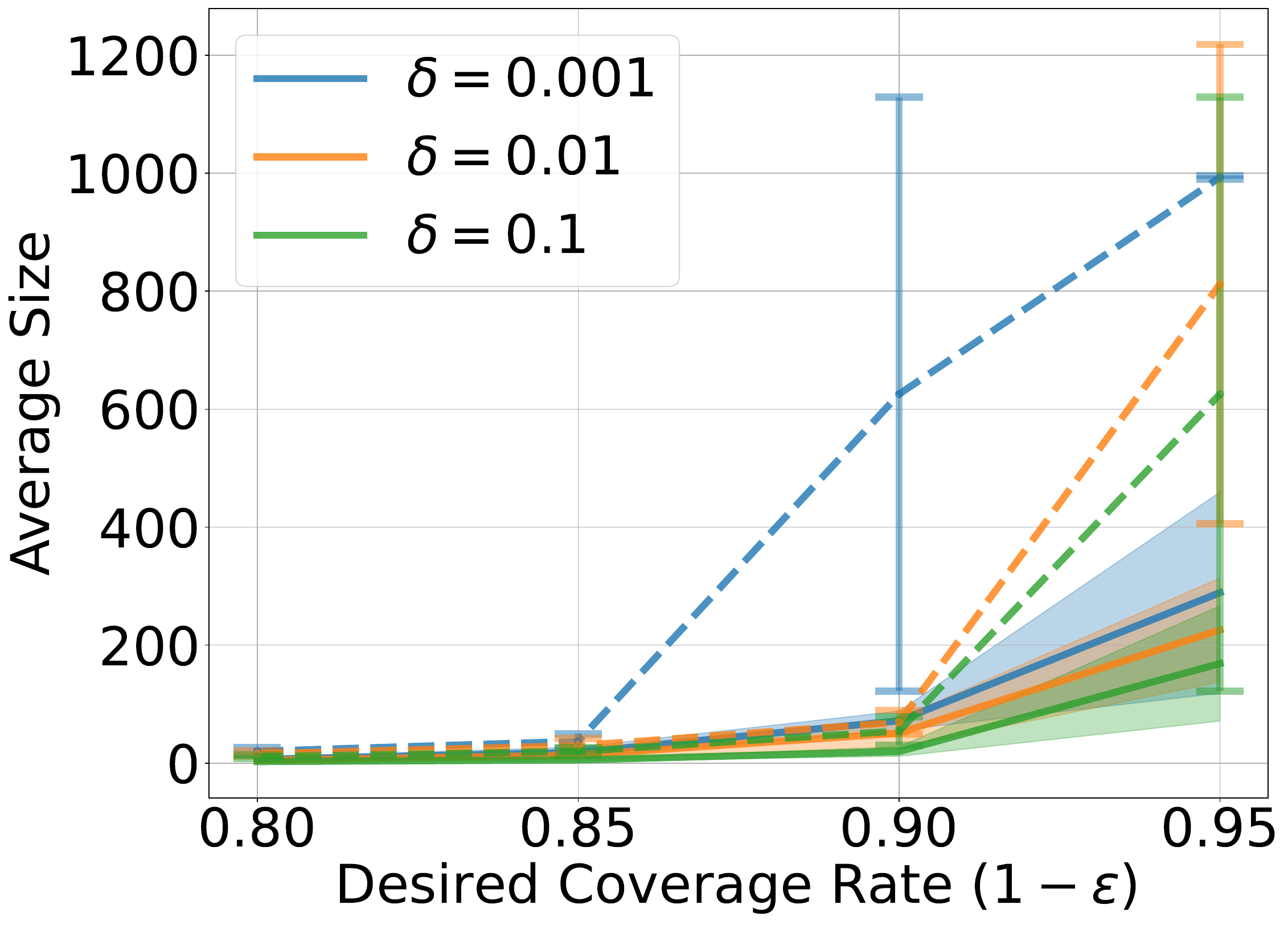}
\caption{PAC ($m=4$)}
\label{fig:mnist3_size_c}
\end{subfigure}
\caption{Prediction set sizes for the \textbf{3-digit MNIST} task ($n=200$), with the baseline represented by dashed lines, for (a) marginal guarantee, (b) PAC guarantee with fixed $\delta$ and varying $m$, and (c) PAC guarantee with fixed $m$ and varying $\delta$.}
\label{fig:mnist3_size}
\end{figure}

\begin{figure}[H]
\centering
\begin{subfigure}[h]{0.3\textwidth}
\centering
\includegraphics[width=\textwidth]{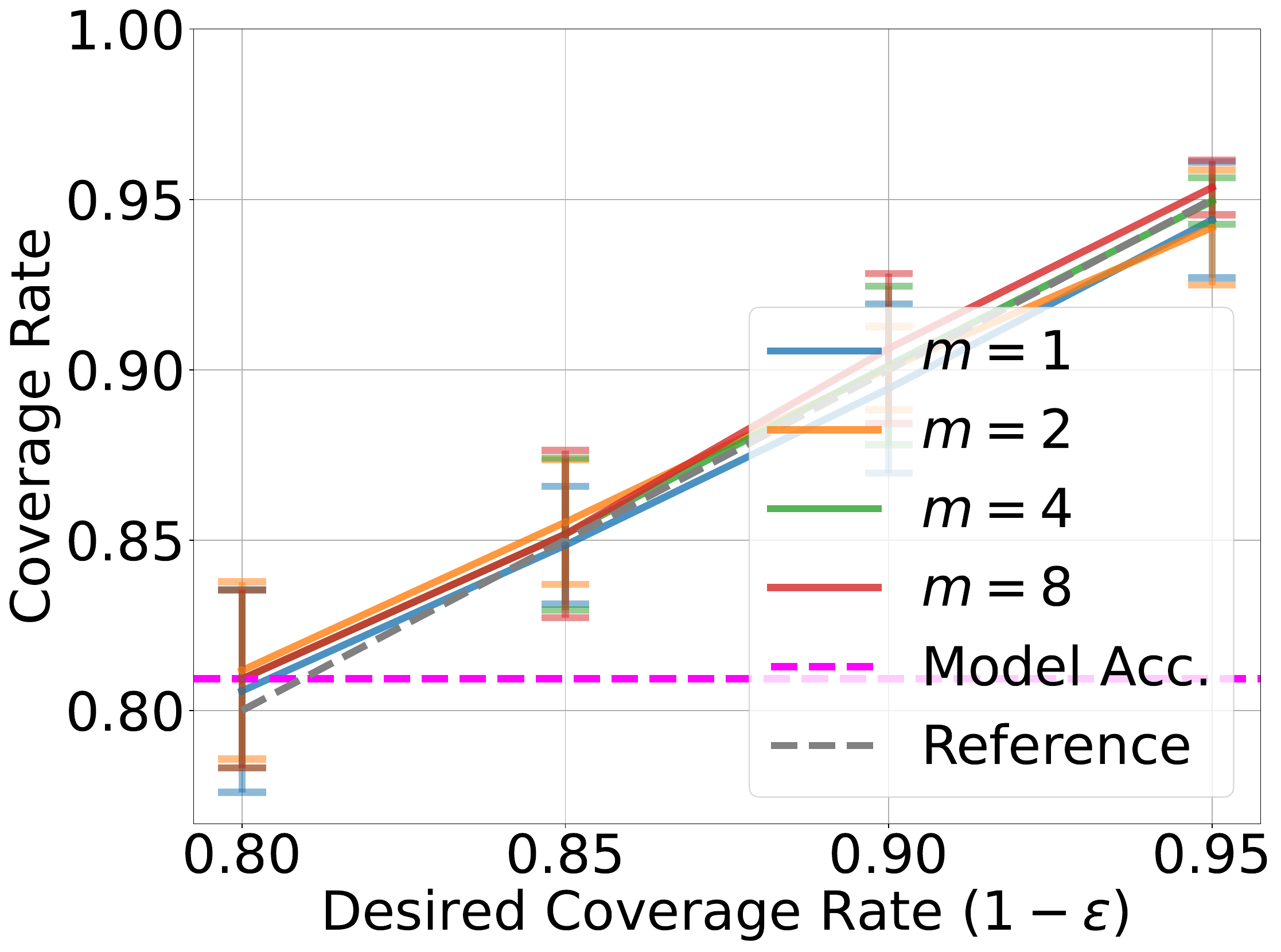}
\caption{Marginal}
\label{fig:imagenet_coverage_a}
\end{subfigure}
\begin{subfigure}[h]{0.3\textwidth}
\centering
\includegraphics[width=\textwidth]{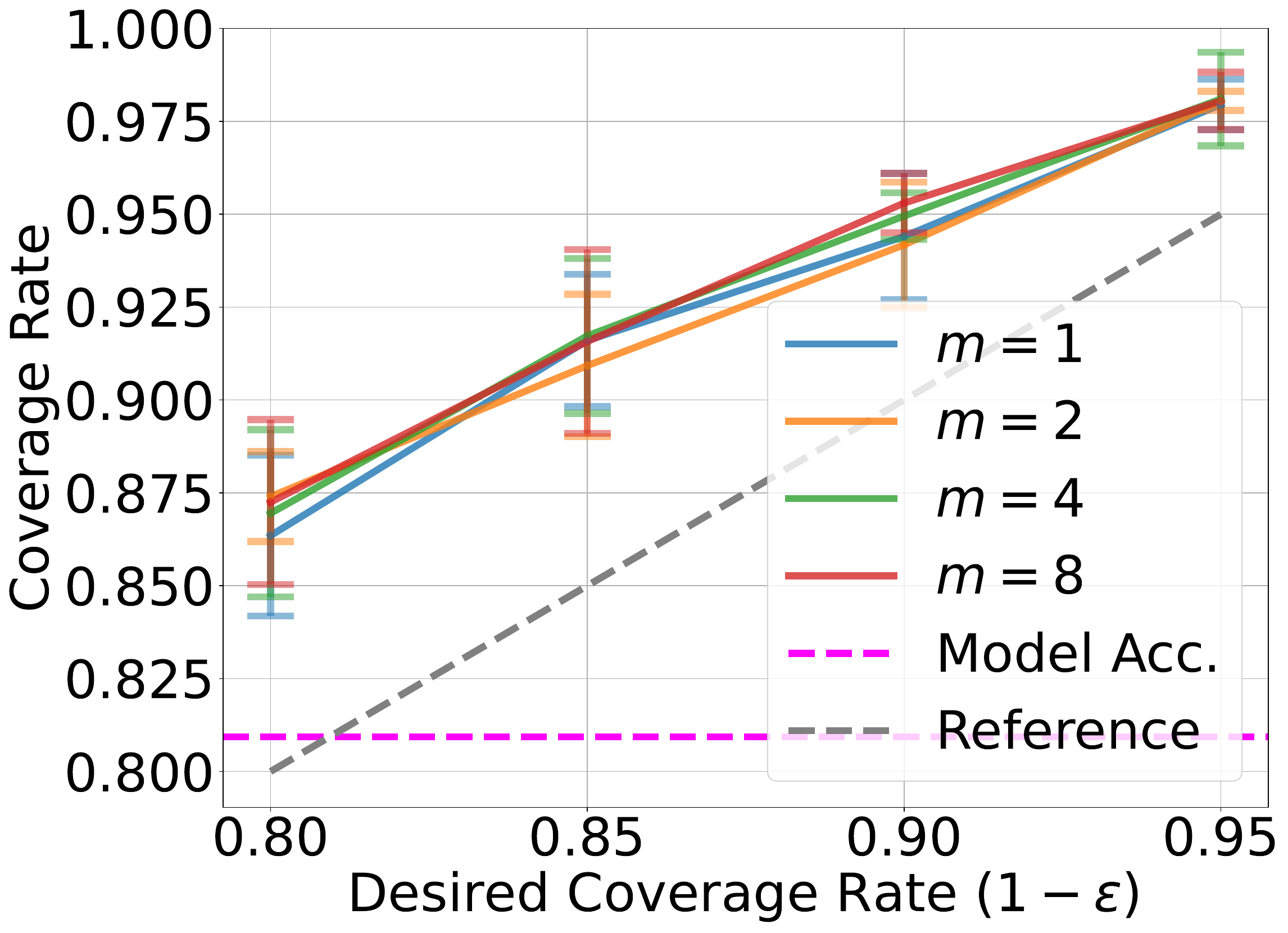}
\caption{PAC ($\delta=0.01$)}
\label{fig:imagenet_coverage_b}
\end{subfigure}
\begin{subfigure}[h]{0.3\textwidth}
\centering
\includegraphics[width=\textwidth]{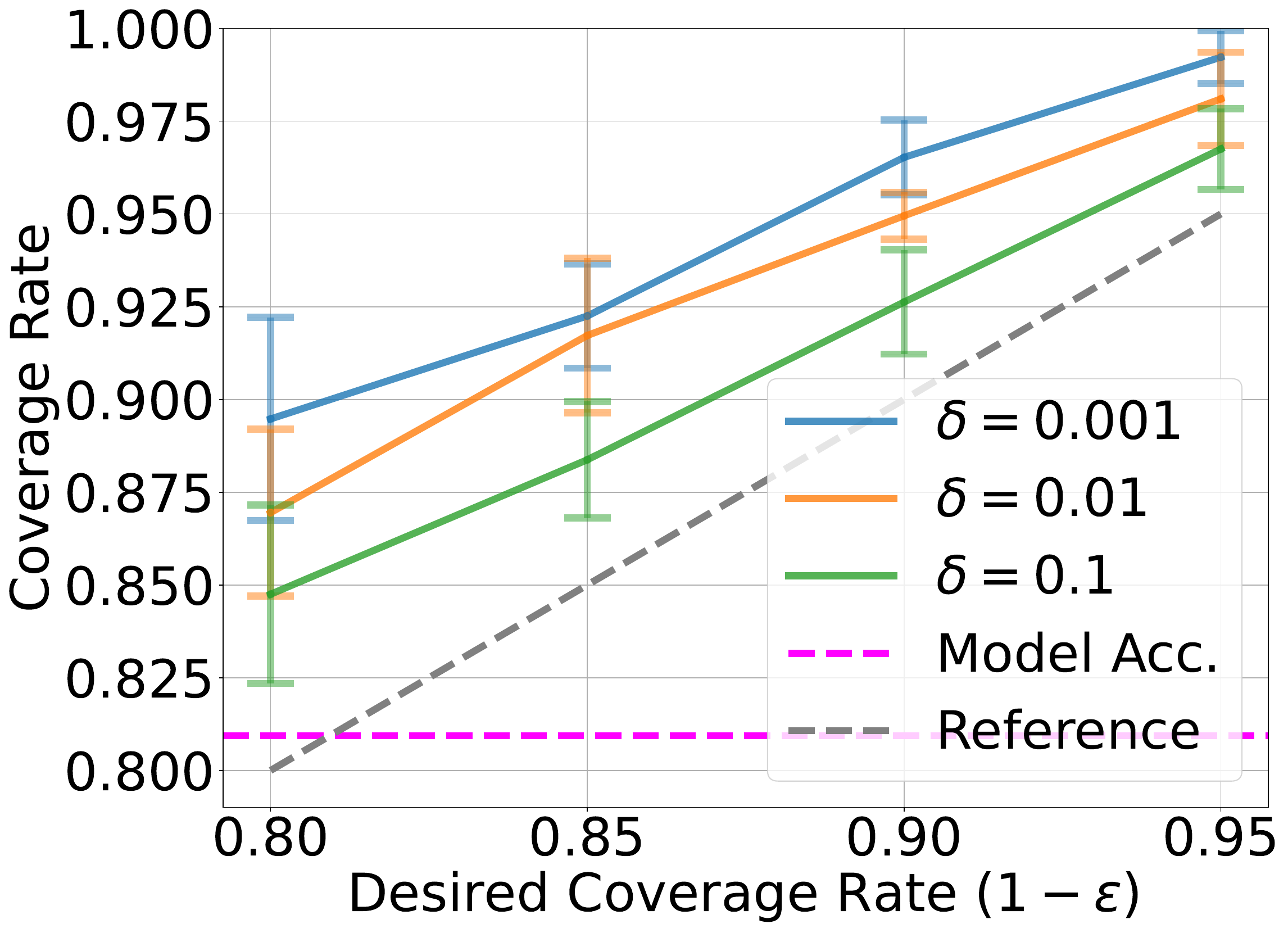}
\caption{PAC ($m=4$)}
\label{fig:imagenet_coverage_c}
\end{subfigure}
\caption{Coverage rates for the \textbf{ImageNet} task with $n=200$, for (a) marginal guarantee, (b) PAC guarantee with fixed $\delta$ and varying $m$, and (c) PAC guarantee with fixed $m$ and varying $\delta$.}
\label{fig:imagenet_coverage}
\end{figure}


\begin{figure}[H]
\centering
\begin{subfigure}[h]{0.3\textwidth}
\centering
\includegraphics[width=\textwidth]{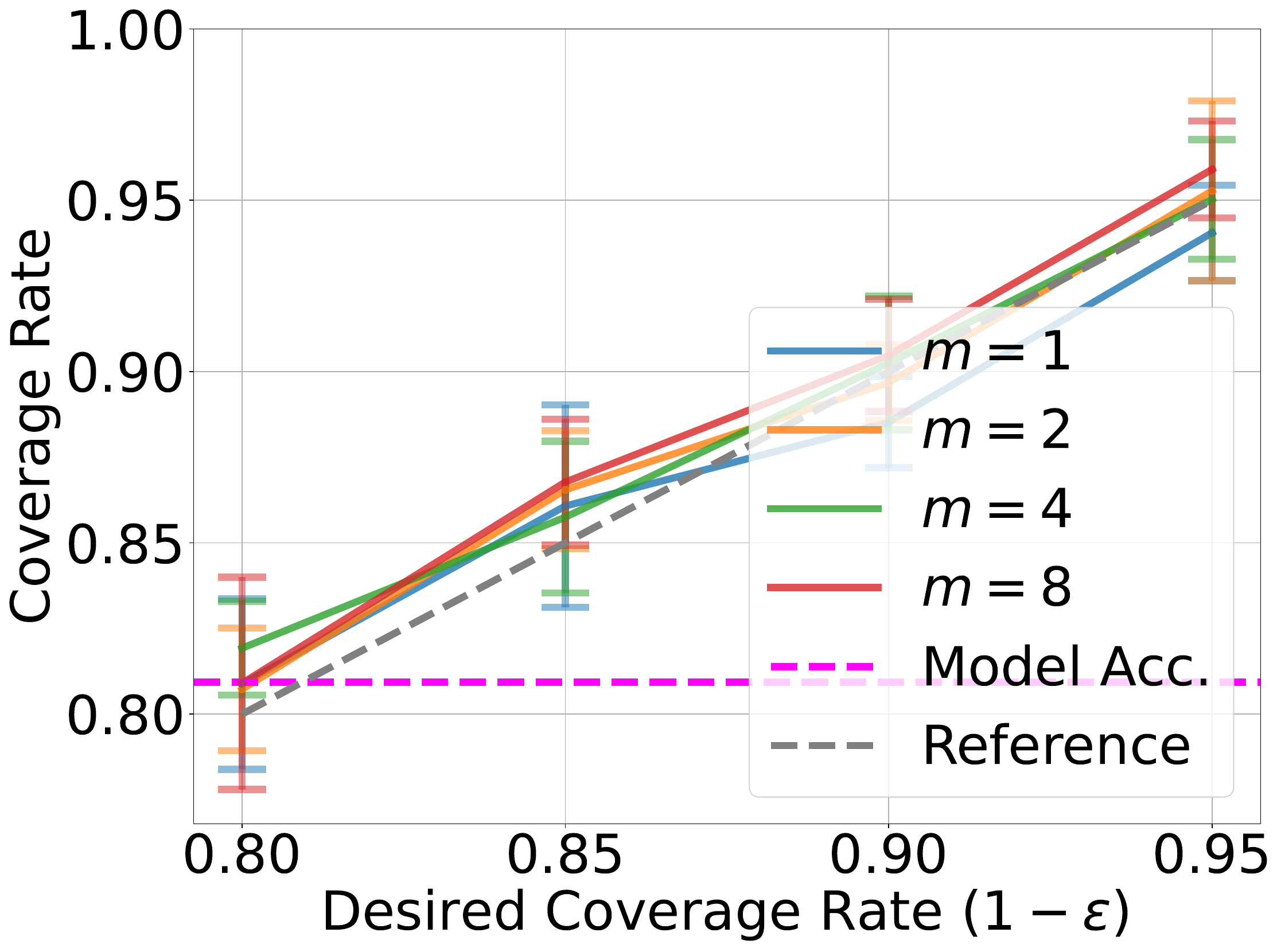}
\caption{Marginal}
\label{fig:imagenet_baseline_coverage_a}
\end{subfigure}
\begin{subfigure}[h]{0.3\textwidth}
\centering
\includegraphics[width=\textwidth]{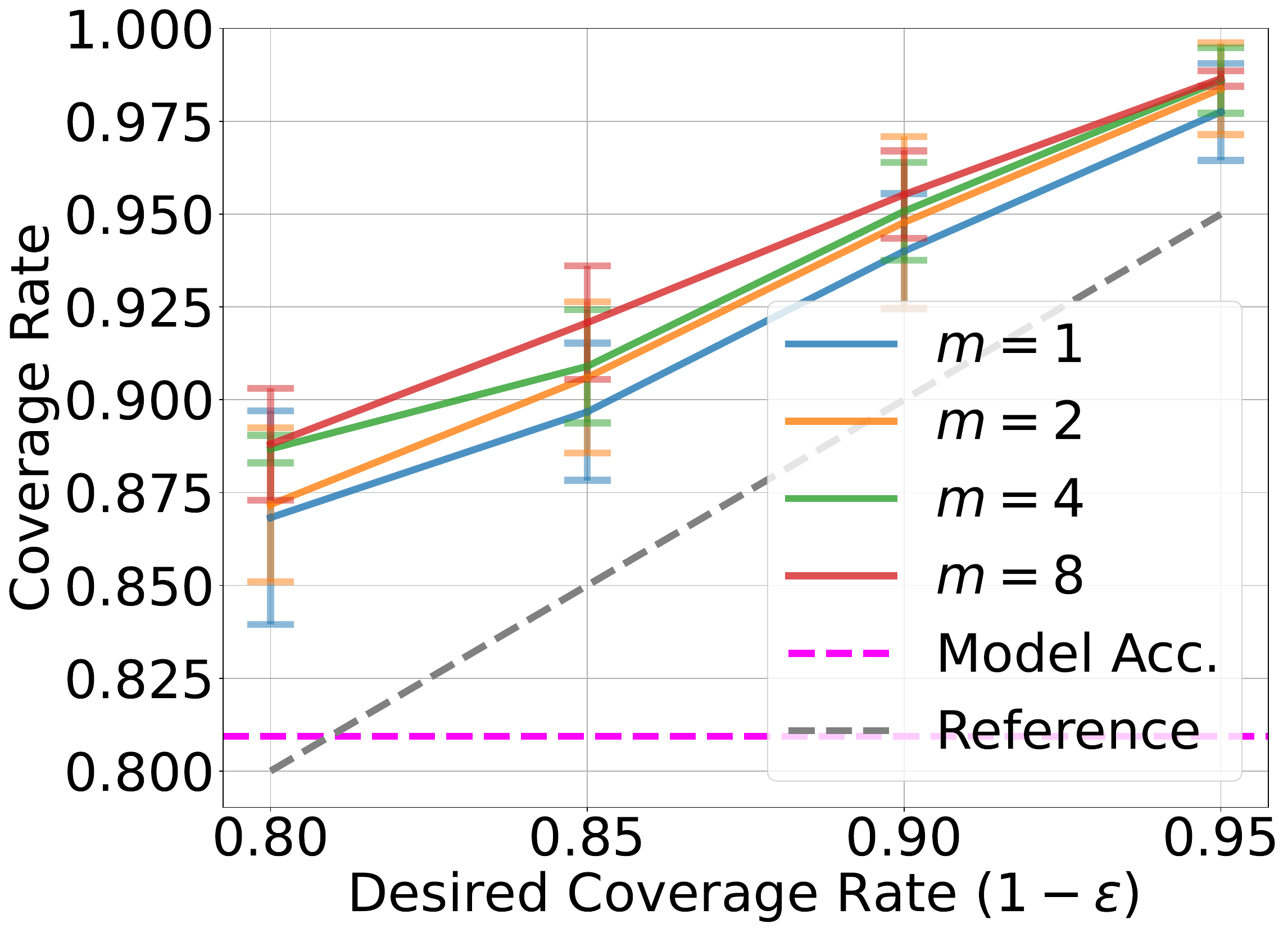}
\caption{PAC ($\delta=0.01$)}
\label{fig:imagenet_baseline_coverage_b}
\end{subfigure}
\begin{subfigure}[h]{0.3\textwidth}
\centering
\includegraphics[width=\textwidth]{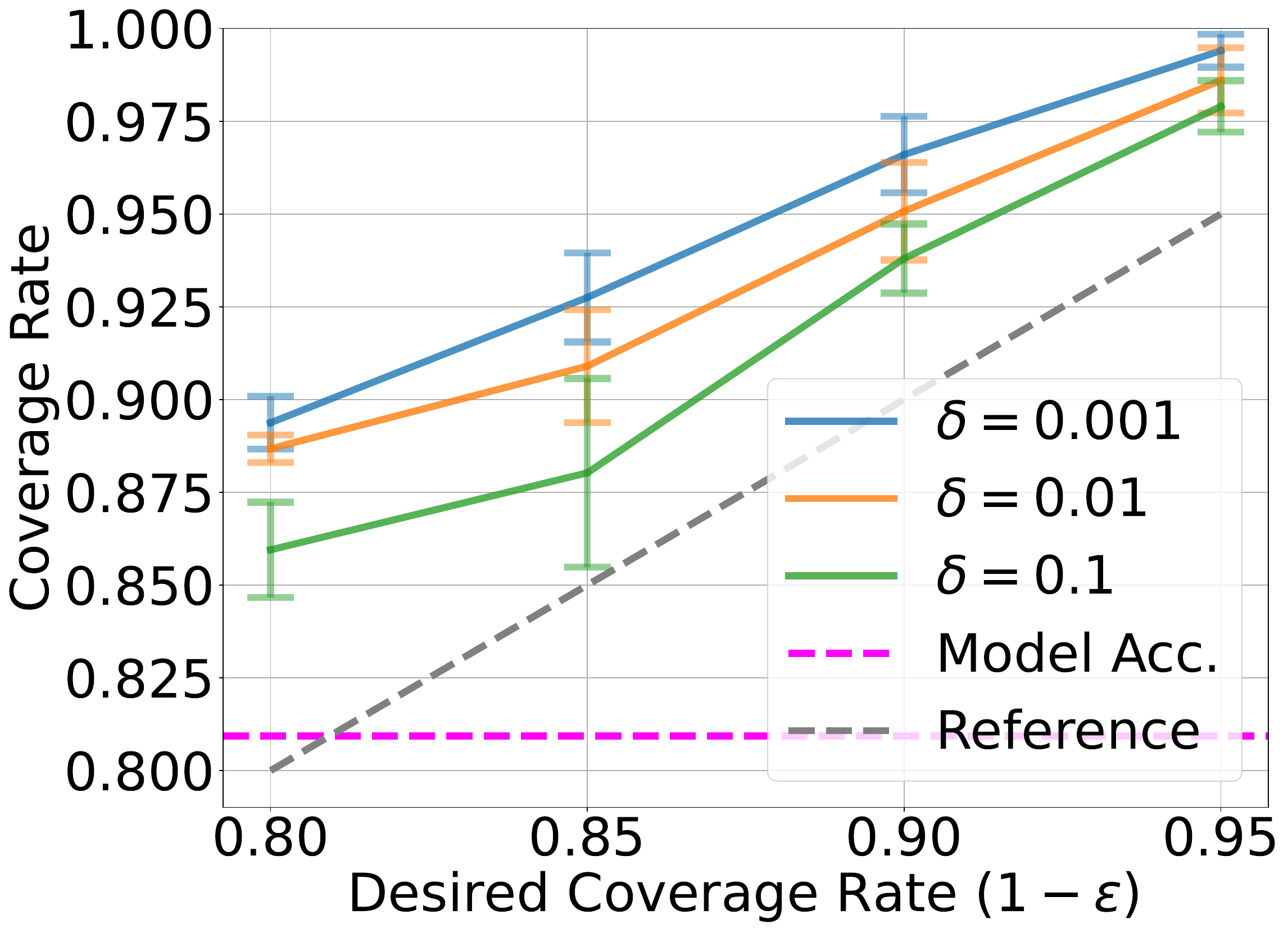}
\caption{PAC ($m=4$)}
\label{fig:imagenet_baseline_coverage_c}
\end{subfigure}
\caption{Coverage rates for the \textbf{ImageNet task using the baseline} with $n=200$, for (a) marginal guarantee, (b) PAC guarantee with fixed $\delta$ and varying $m$, and (c) PAC guarantee with fixed $m$ and varying $\delta$.}
\label{fig:imagenet_baseline_coverage}
\end{figure}


\begin{figure}[H]
\centering
\begin{subfigure}[h]{0.3\textwidth}
\centering
\includegraphics[width=\textwidth]{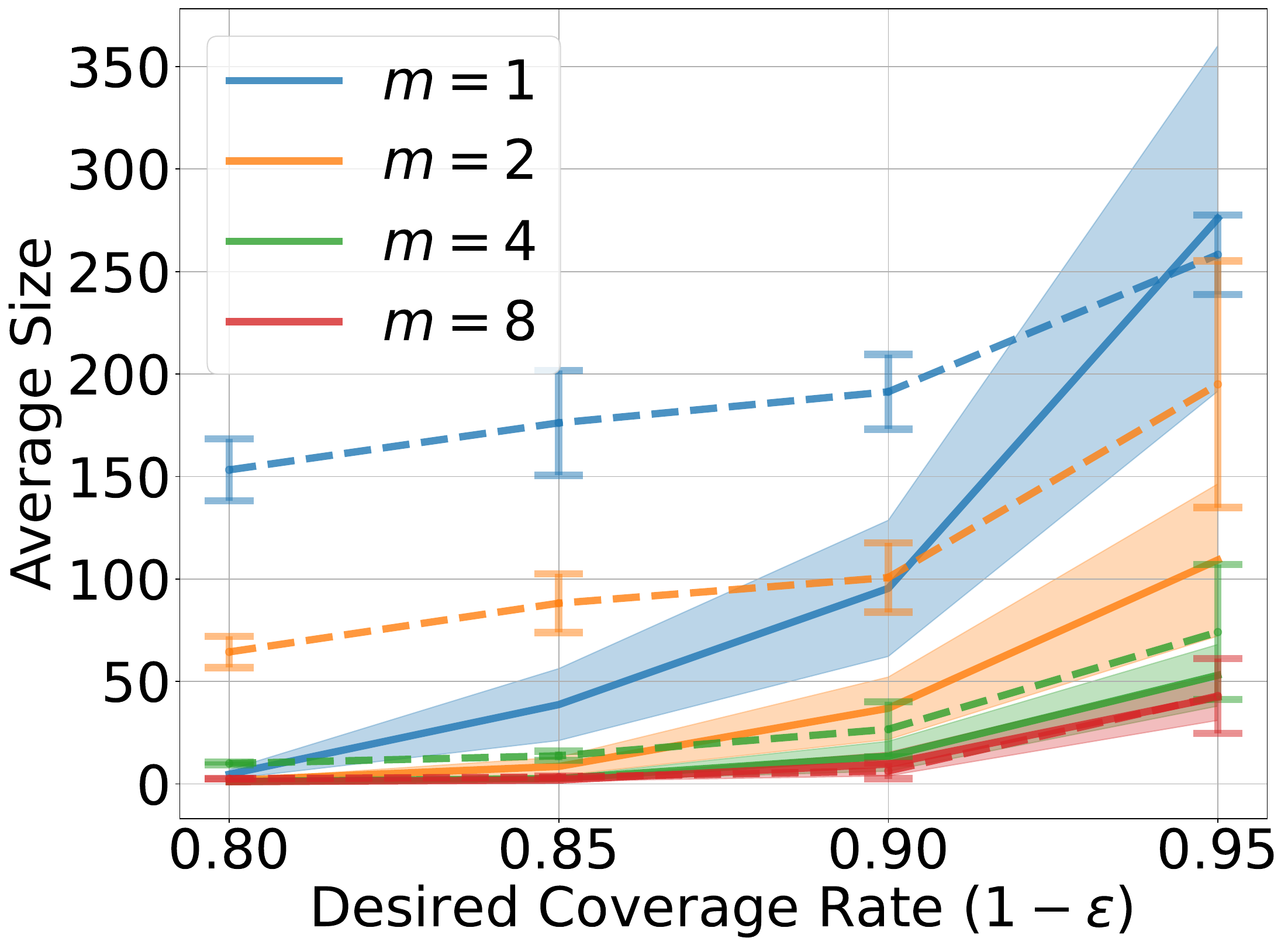}
\caption{Marginal}
\label{fig:imagenet_size_a}
\end{subfigure}
\begin{subfigure}[h]{0.3\textwidth}
\centering
\includegraphics[width=\textwidth]{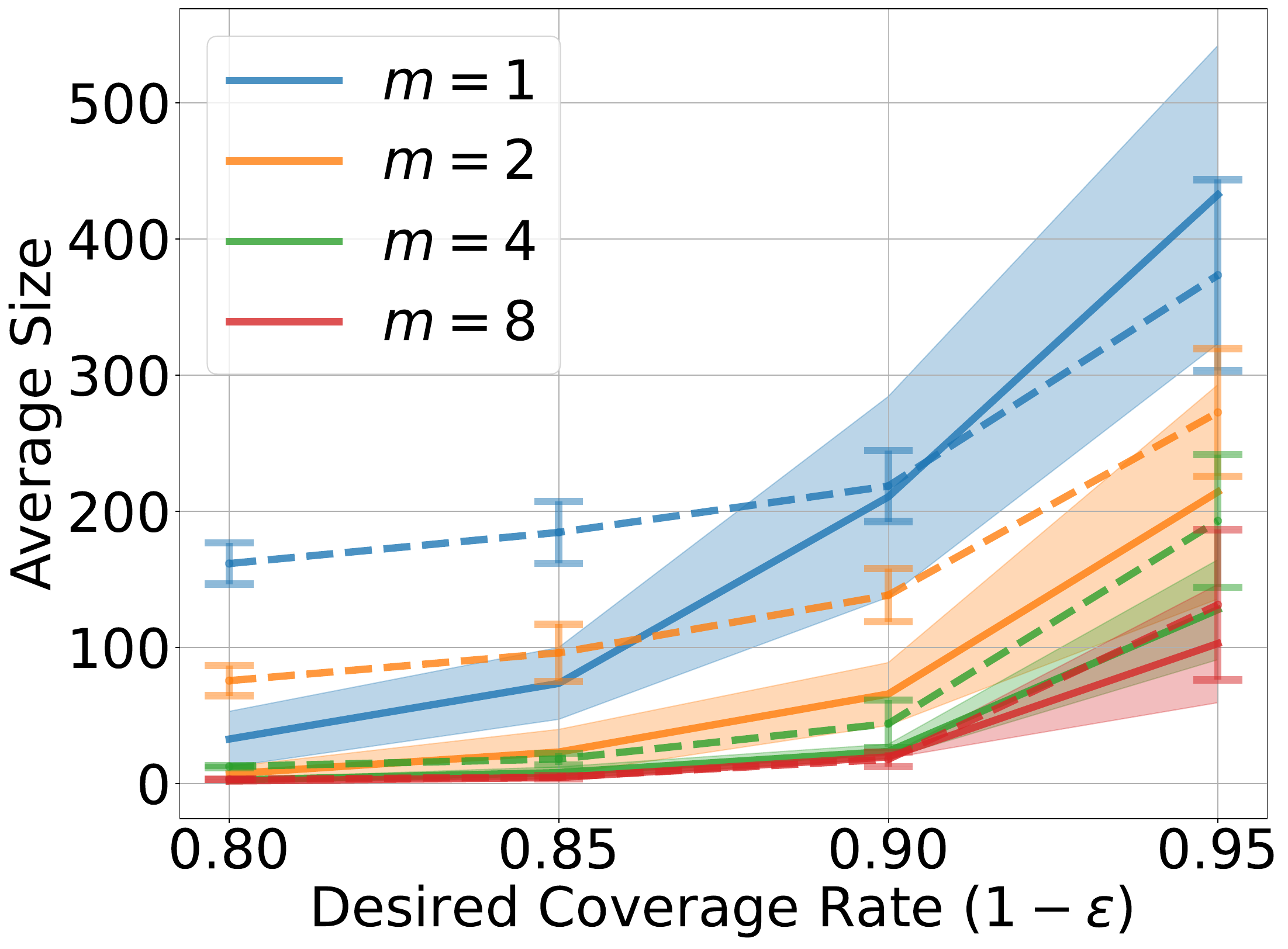}
\caption{PAC ($\delta=0.1$)}
\label{fig:imagenet_size_b}
\end{subfigure}
\begin{subfigure}[h]{0.3\textwidth}
\centering
\includegraphics[width=\textwidth]{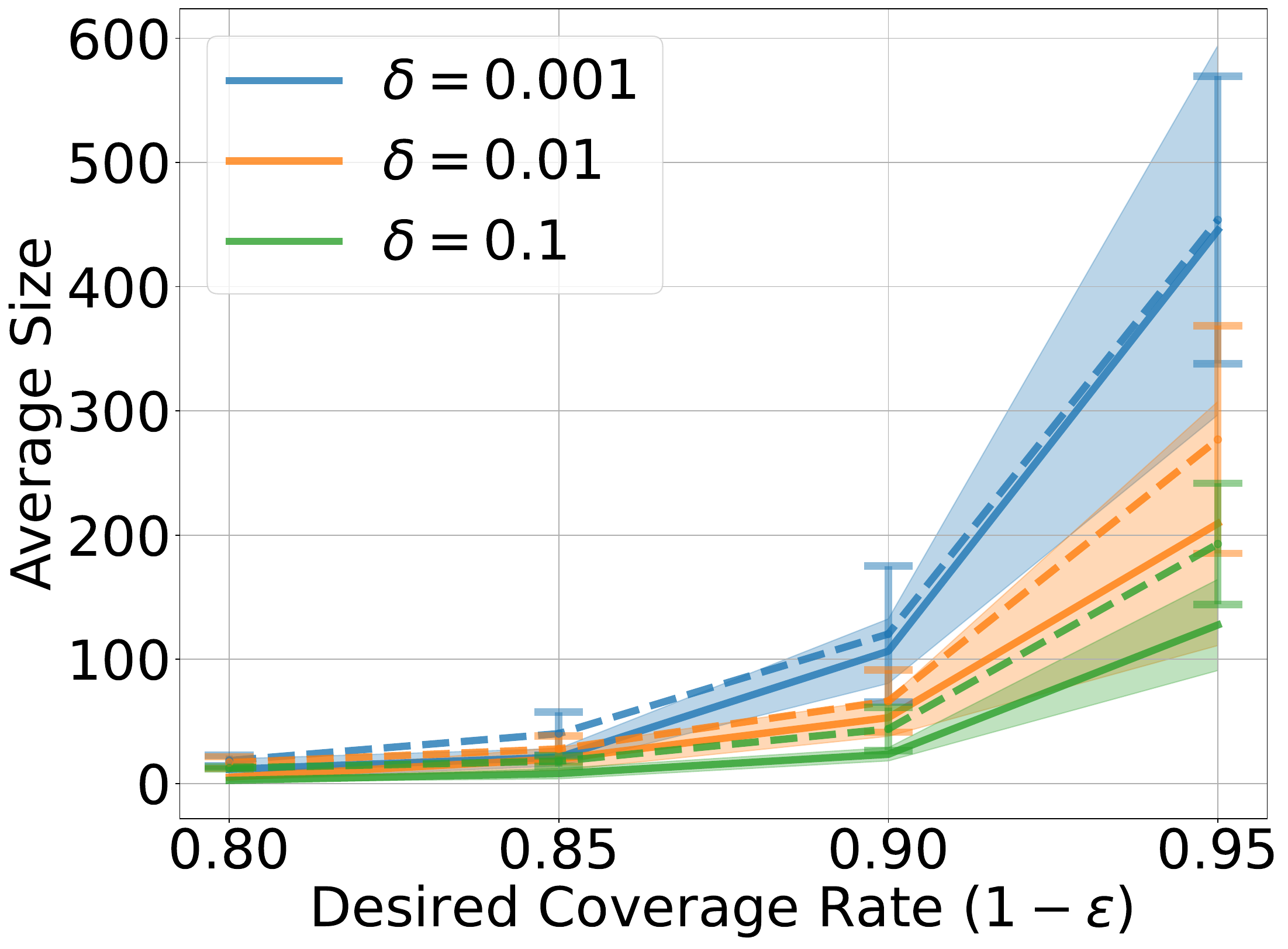}
\caption{PAC ($m=4$)}
\label{fig:imagenet_size_c}
\end{subfigure}
\caption{Prediction set sizes for the \textbf{ImageNet} task ($n=200$), with the baseline represented by dashed lines, for (a) marginal guarantee, (b) PAC guarantee with fixed $\delta$ and varying $m$, and (c) PAC guarantee with fixed $m$ and varying $\delta$.}
\label{fig:imagenet_size}
\end{figure}


\begin{figure}[H]
\centering
\begin{subfigure}[h]{0.3\textwidth}
\centering
\includegraphics[width=\textwidth]{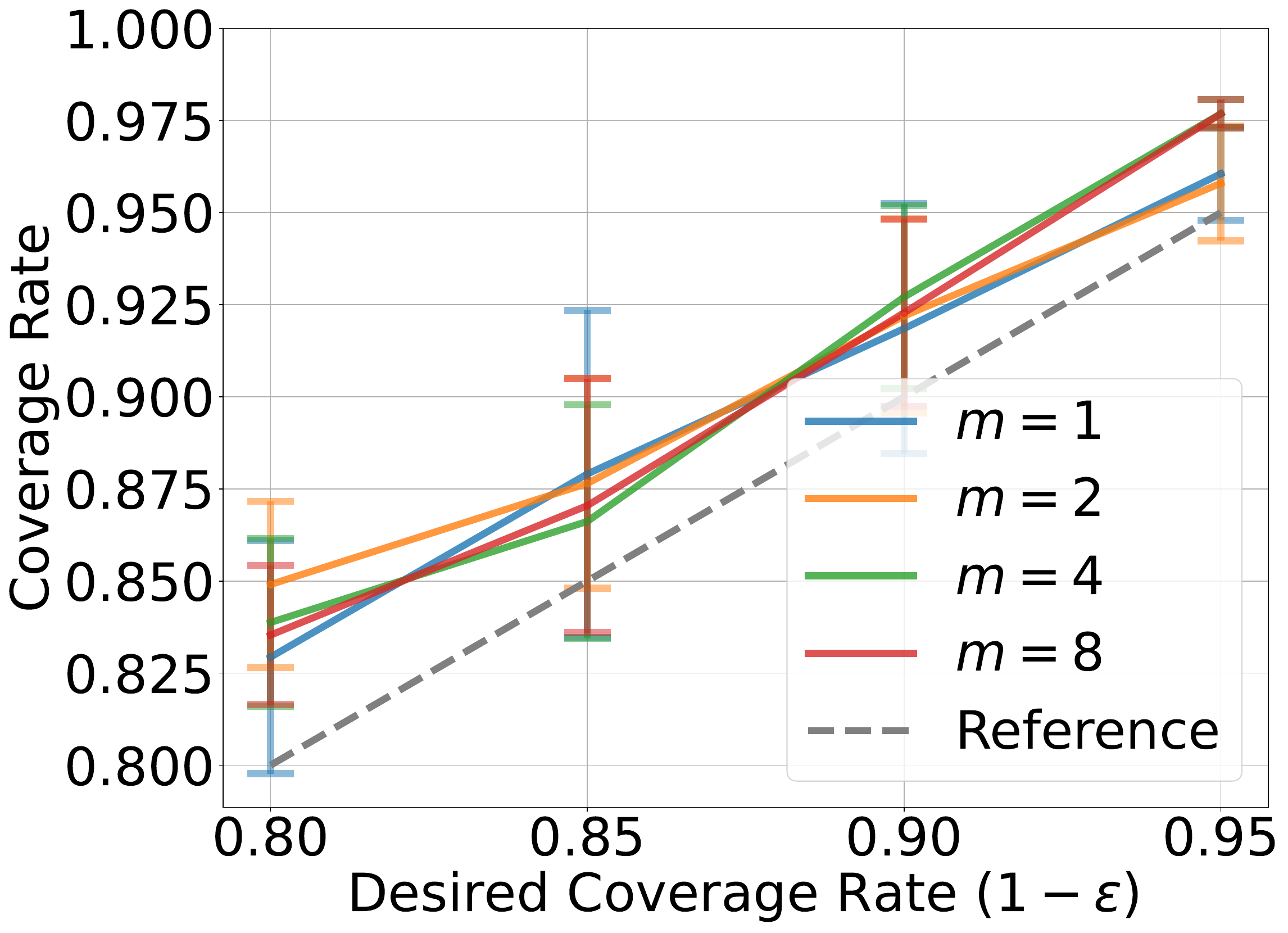}
\caption{Marginal}
\label{fig:mbpp_coverage_a}
\end{subfigure}
\begin{subfigure}[h]{0.3\textwidth}
\centering
\includegraphics[width=\textwidth]{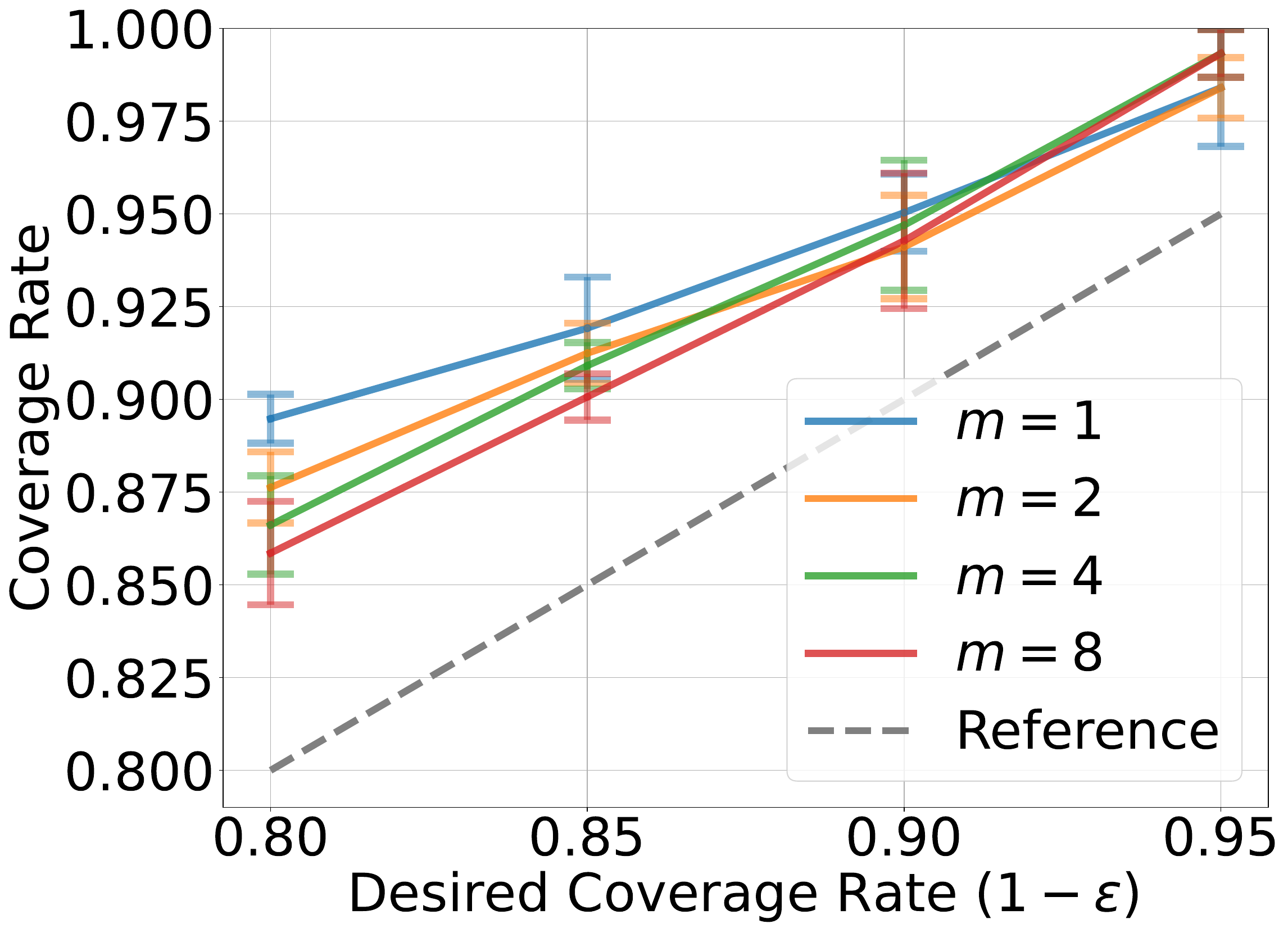}
\caption{PAC ($\delta=0.01$)}
\label{fig:mbpp_coverage_b}
\end{subfigure}
\begin{subfigure}[h]{0.3\textwidth}
\centering
\includegraphics[width=\textwidth]{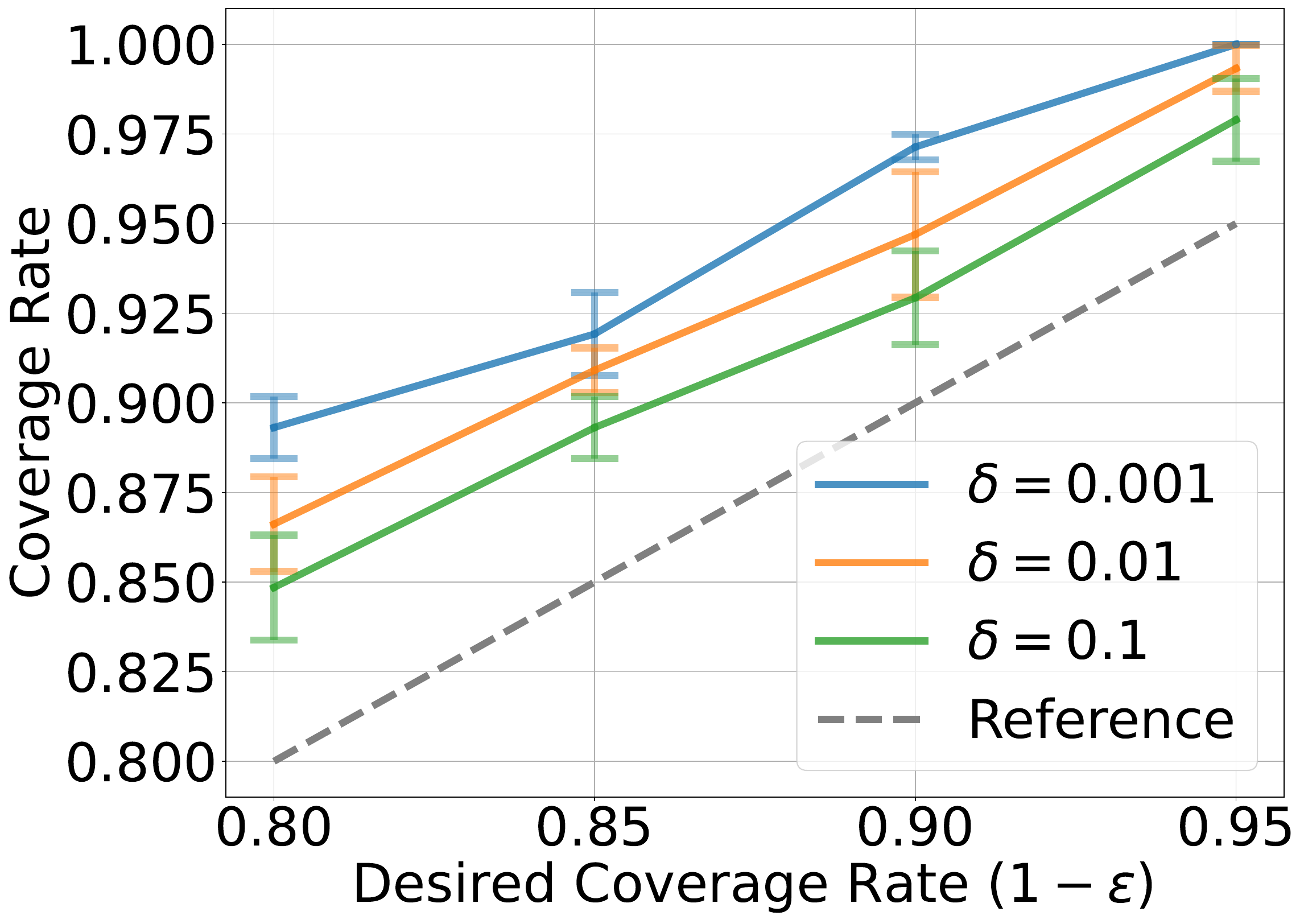}
\caption{PAC ($m=4$)}
\label{fig:mbpp_coverage_c}
\end{subfigure}
\caption{Coverage rates for the \textbf{Python code generation} task with $n=233$, for (a) marginal guarantee, (b) PAC guarantee with fixed $\delta$ and varying $m$, and (c) PAC guarantee with fixed $m$ and varying $\delta$.}
\label{fig:mbpp_coverage}
\end{figure}


\begin{figure}[H]
\centering
\begin{subfigure}[h]{0.3\textwidth}
\centering
\includegraphics[width=\textwidth]{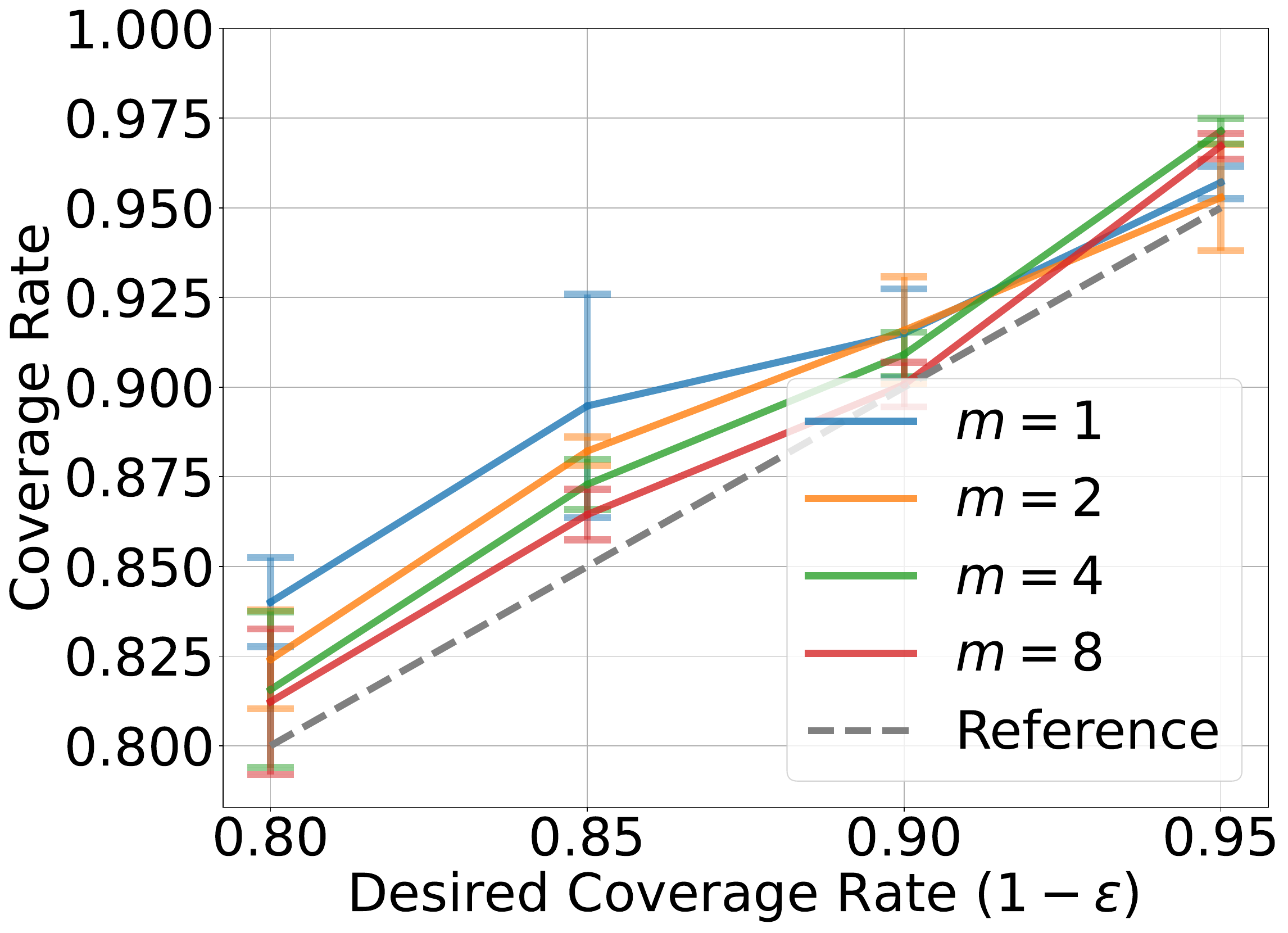}
\caption{Marginal}
\label{fig:mbpp_baseline_coverage_a}
\end{subfigure}
\begin{subfigure}[h]{0.3\textwidth}
\centering
\includegraphics[width=\textwidth]{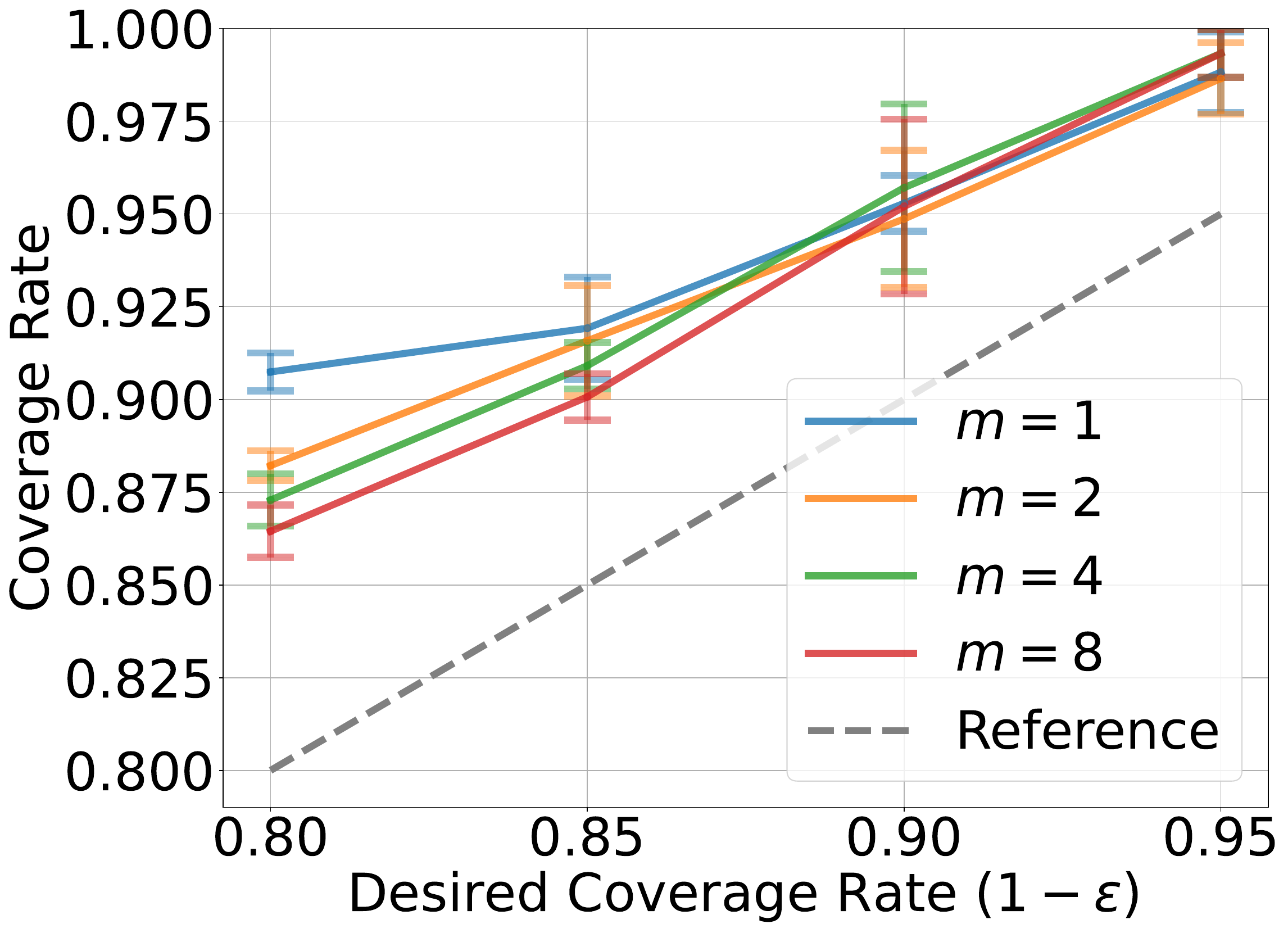}
\caption{PAC ($\delta=0.01$)}
\label{fig:mbpp_baseline_coverage_b}
\end{subfigure}
\begin{subfigure}[h]{0.3\textwidth}
\centering
\includegraphics[width=\textwidth]{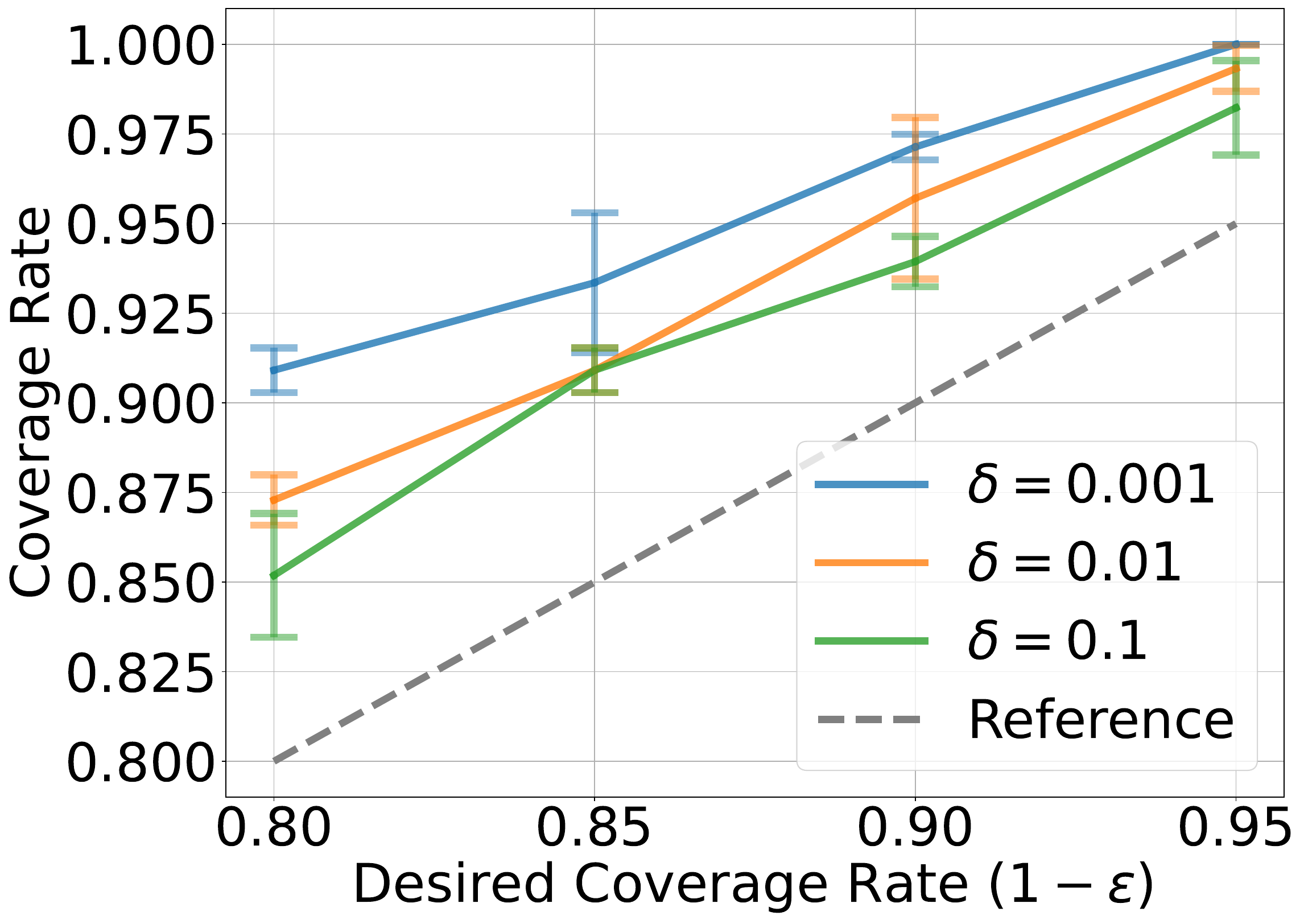}
\caption{PAC ($m=4$)}
\label{fig:mbpp_baseline_coverage_c}
\end{subfigure}
\caption{Coverage rates for the \textbf{Python code generation task using the baseline} with $n=233$, for (a) marginal guarantee, (b) PAC guarantee with fixed $\delta$ and varying $m$, and (c) PAC guarantee with fixed $m$ and varying $\delta$.}
\label{fig:mbpp_baseline_coverage}
\end{figure}


\begin{figure}[H]
\centering
\begin{subfigure}[h]{0.3\textwidth}
\centering
\includegraphics[width=\textwidth]{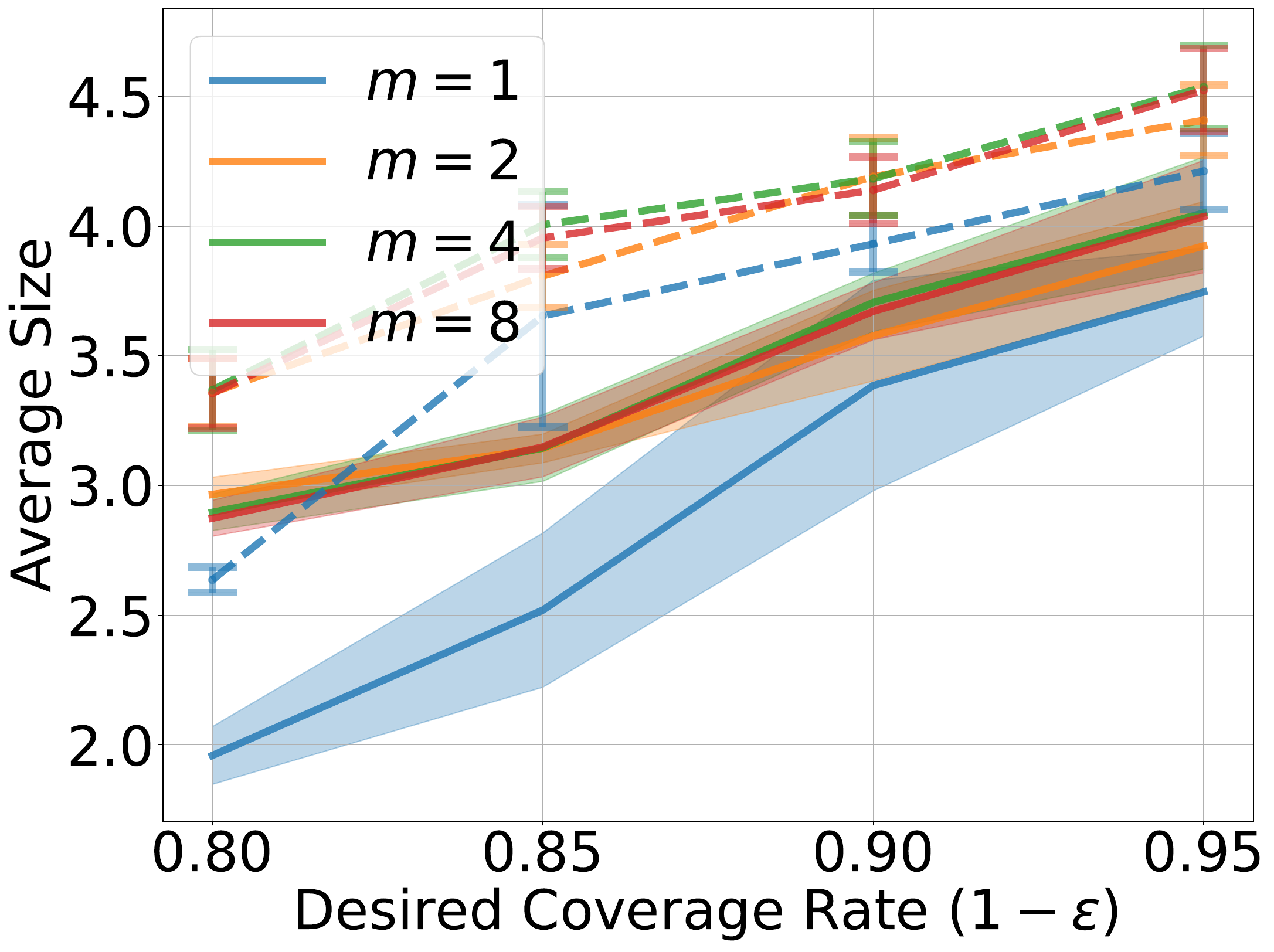}
\caption{Marginal}
\label{fig:mbpp_size_a}
\end{subfigure}
\begin{subfigure}[h]{0.3\textwidth}
\centering
\includegraphics[width=\textwidth]{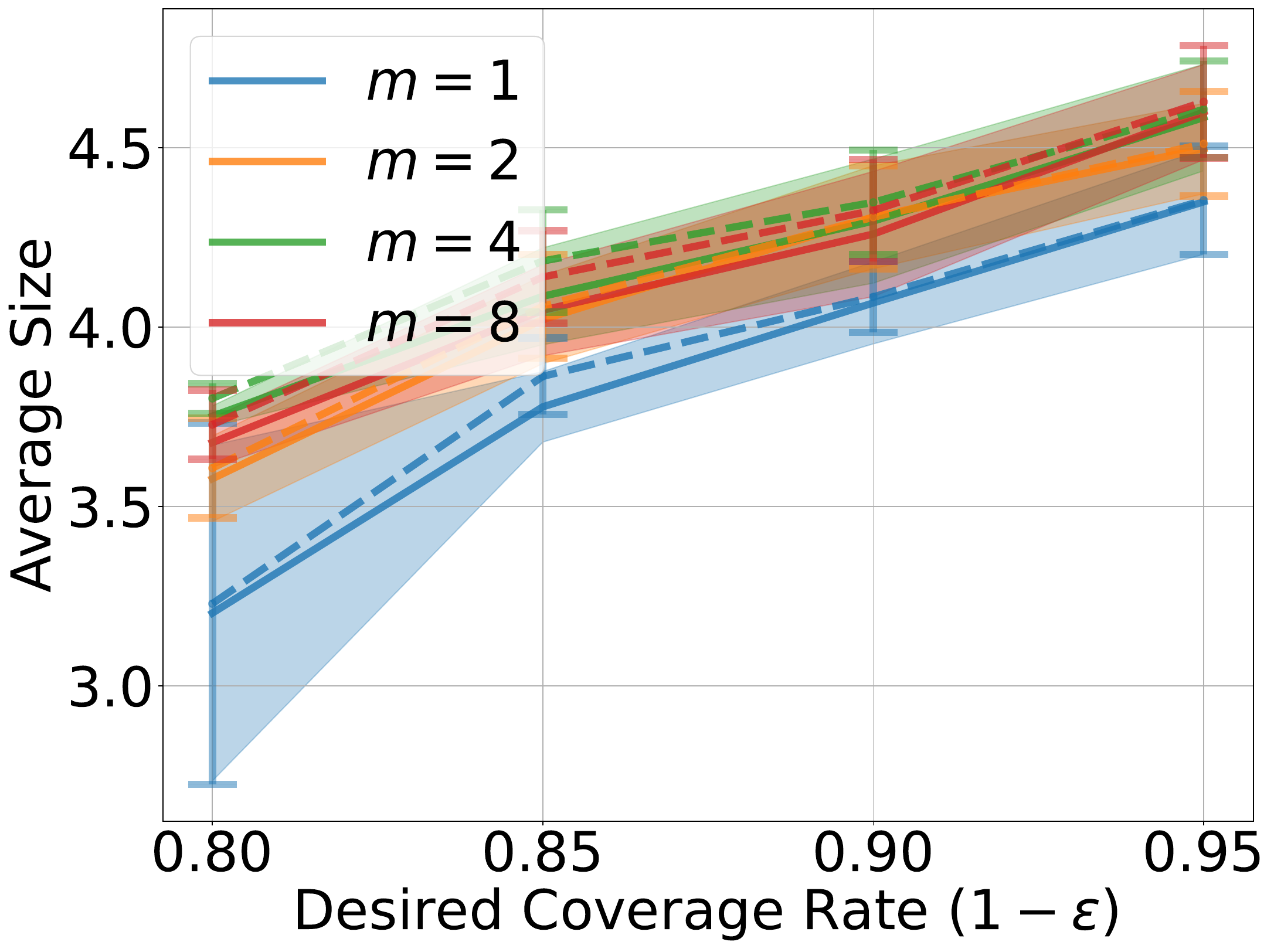}
\caption{PAC ($\delta=0.1$)}
\label{fig:mbpp_size_b}
\end{subfigure}
\begin{subfigure}[h]{0.3\textwidth}
\centering
\includegraphics[width=\textwidth]{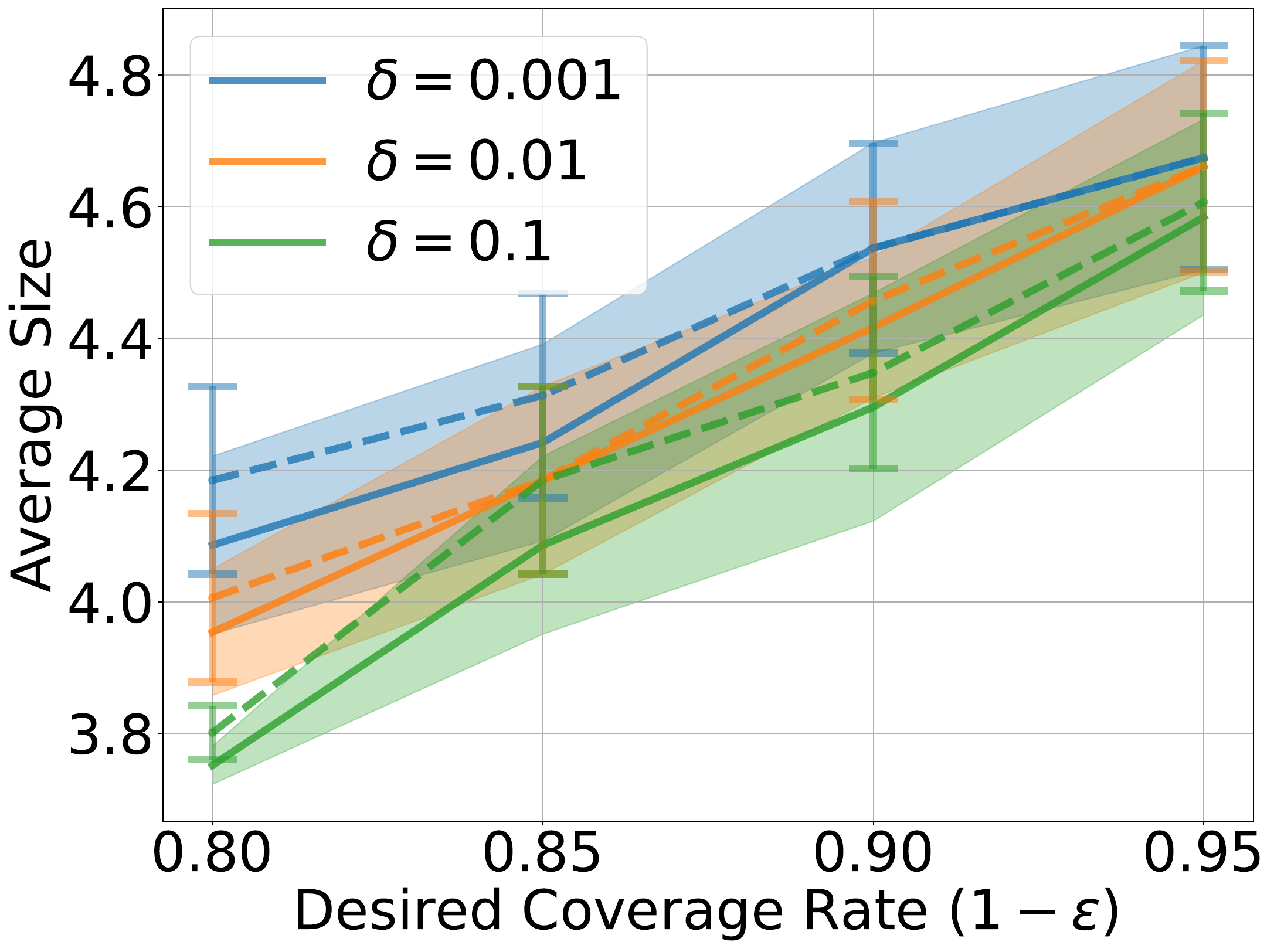}
\caption{PAC ($m=4$)}
\label{fig:mbpp_size_c}
\end{subfigure}
\caption{Prediction set sizes for the \textbf{Python code generation} task ($n=233$), with the baseline represented by dashed lines, for (a) marginal guarantee, (b) PAC guarantee with fixed $\delta$ and varying $m$, and (c) PAC guarantee with fixed $m$ and varying $\delta$.}
\label{fig:mbpp_size}
\end{figure}

\begin{figure}[H]
\centering
\begin{subfigure}[h]{0.3\textwidth}
\centering
\includegraphics[width=\textwidth]{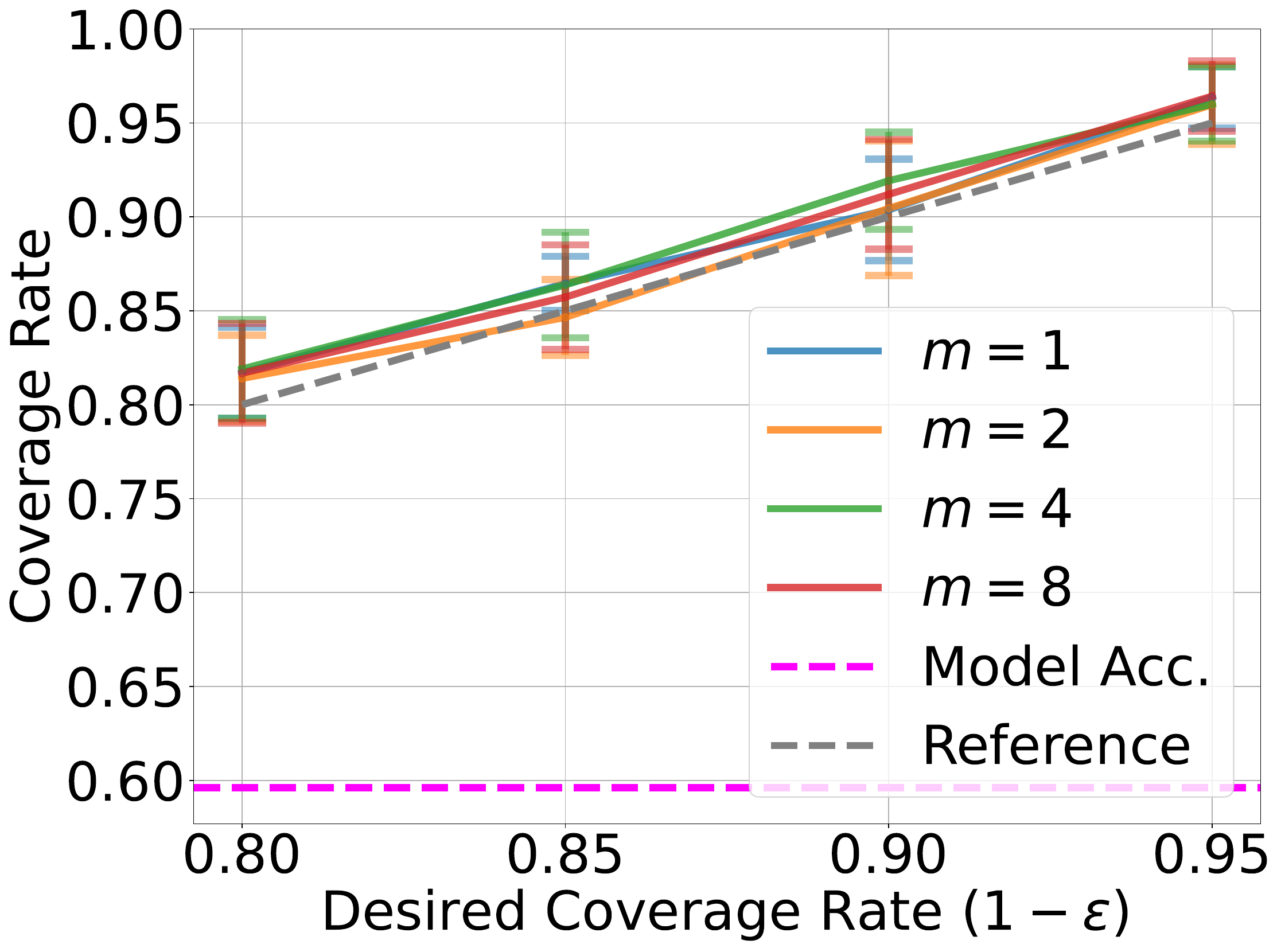}
\caption{Marginal}
\label{fig:goemotions_coverage_a}
\end{subfigure}
\begin{subfigure}[h]{0.3\textwidth}
\centering
\includegraphics[width=\textwidth]{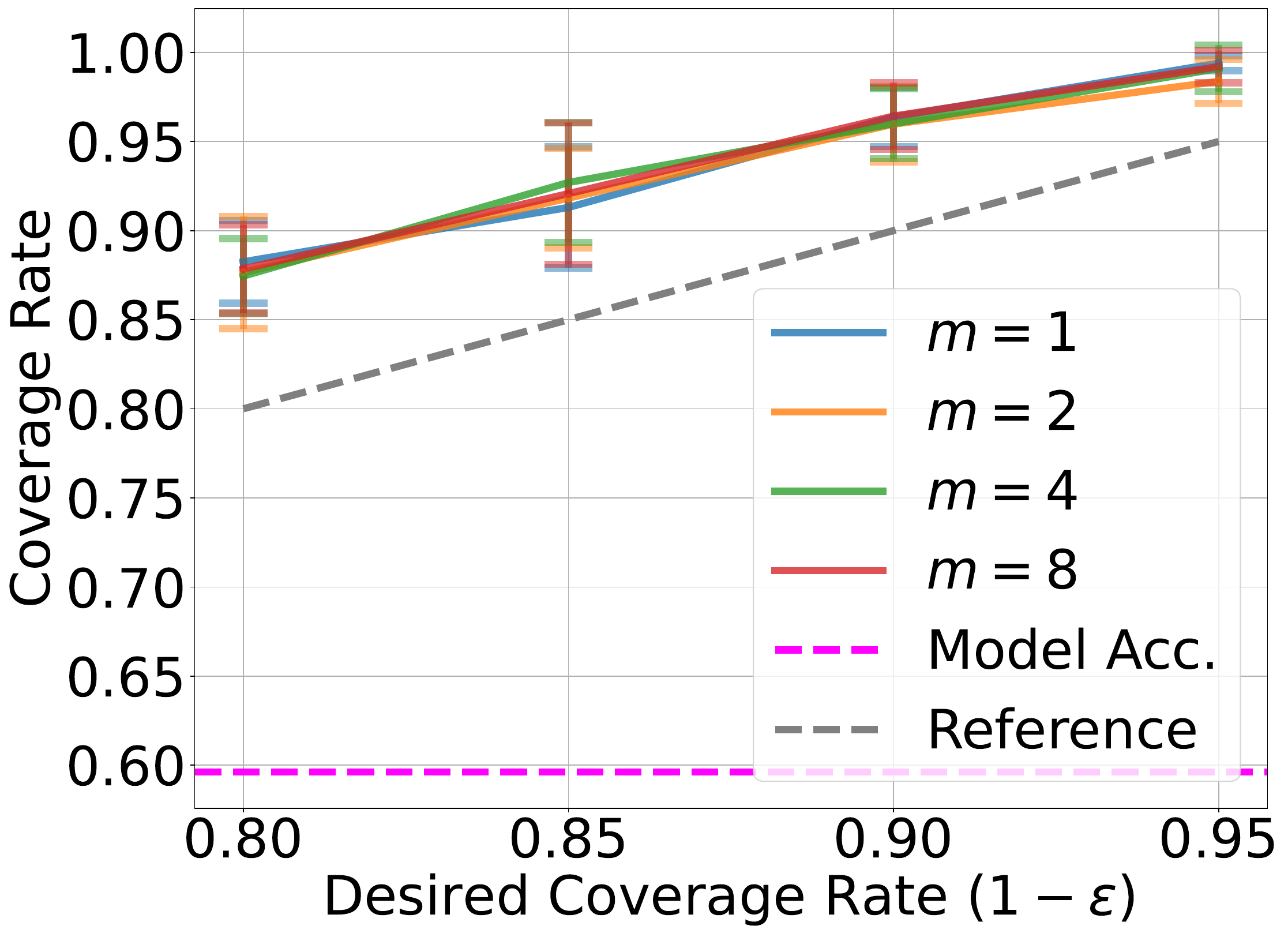}
\caption{PAC ($\delta=0.01$)}
\label{fig:goemotions_coverage_b}
\end{subfigure}
\begin{subfigure}[h]{0.3\textwidth}
\centering
\includegraphics[width=\textwidth]{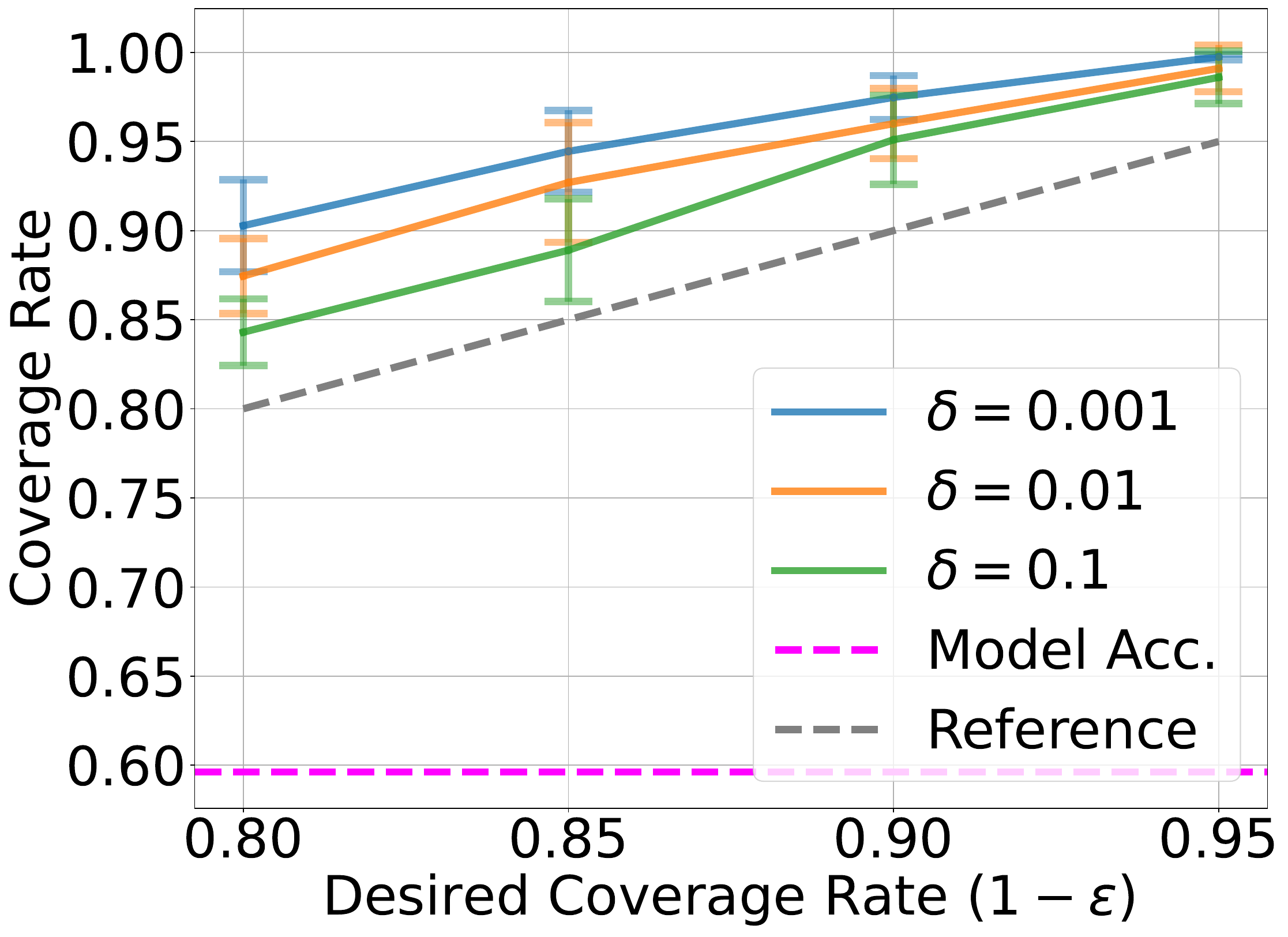}
\caption{PAC ($m=4$)}
\label{fig:goemotions_coverage_c}
\end{subfigure}
\caption{Coverage rates for the \textbf{GoEmotions} task with $n=200$, for (a) marginal guarantee, (b) PAC guarantee with fixed $\delta$ and varying $m$, and (c) PAC guarantee with fixed $m$ and varying $\delta$.}
\label{fig:goemotions_coverage}
\end{figure}


\begin{figure}[H]
\centering
\begin{subfigure}[h]{0.3\textwidth}
\centering
\includegraphics[width=\textwidth]{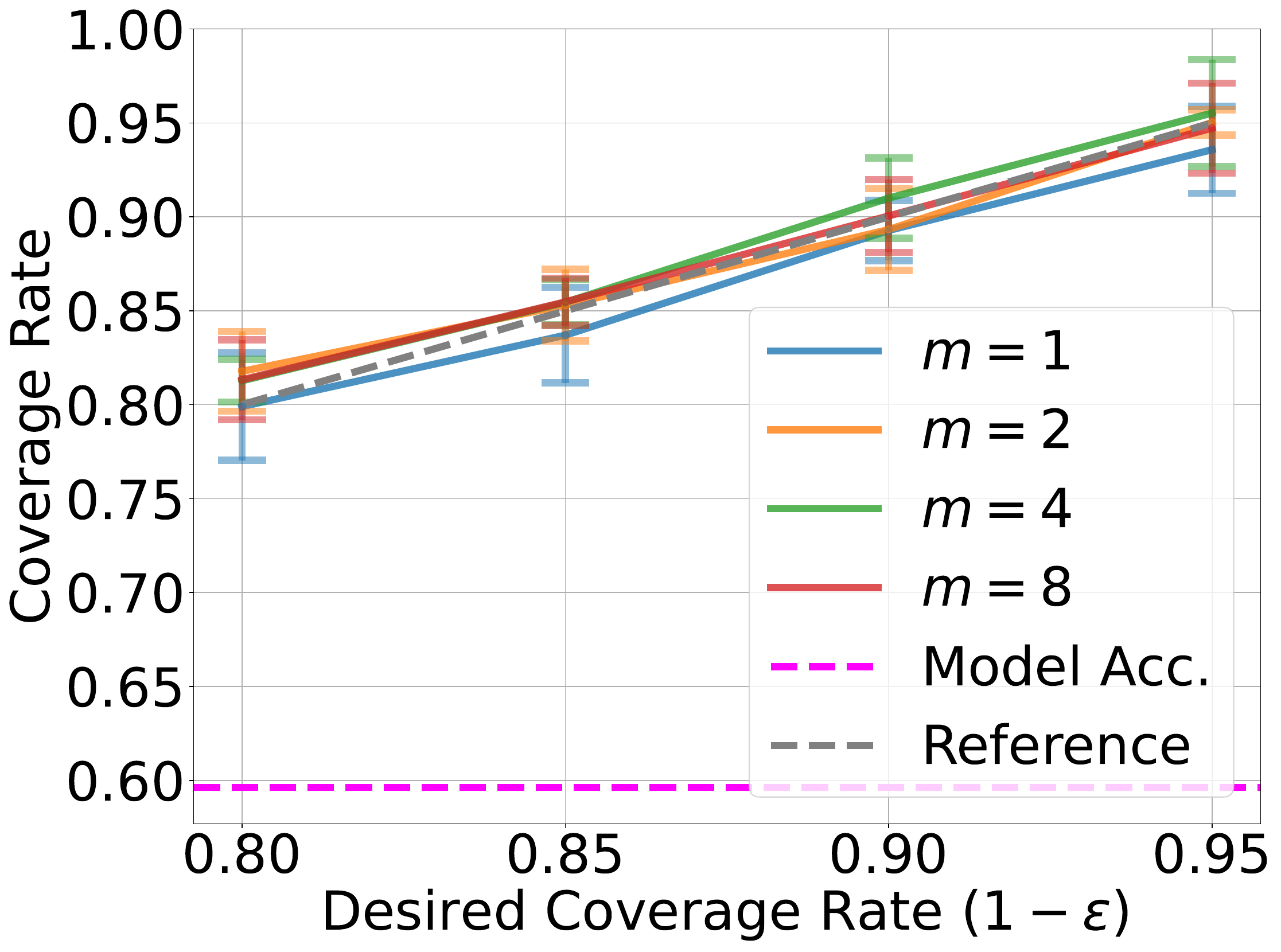}
\caption{Marginal}
\label{fig:goemotions_baseline_coverage_a}
\end{subfigure}
\begin{subfigure}[h]{0.3\textwidth}
\centering
\includegraphics[width=\textwidth]{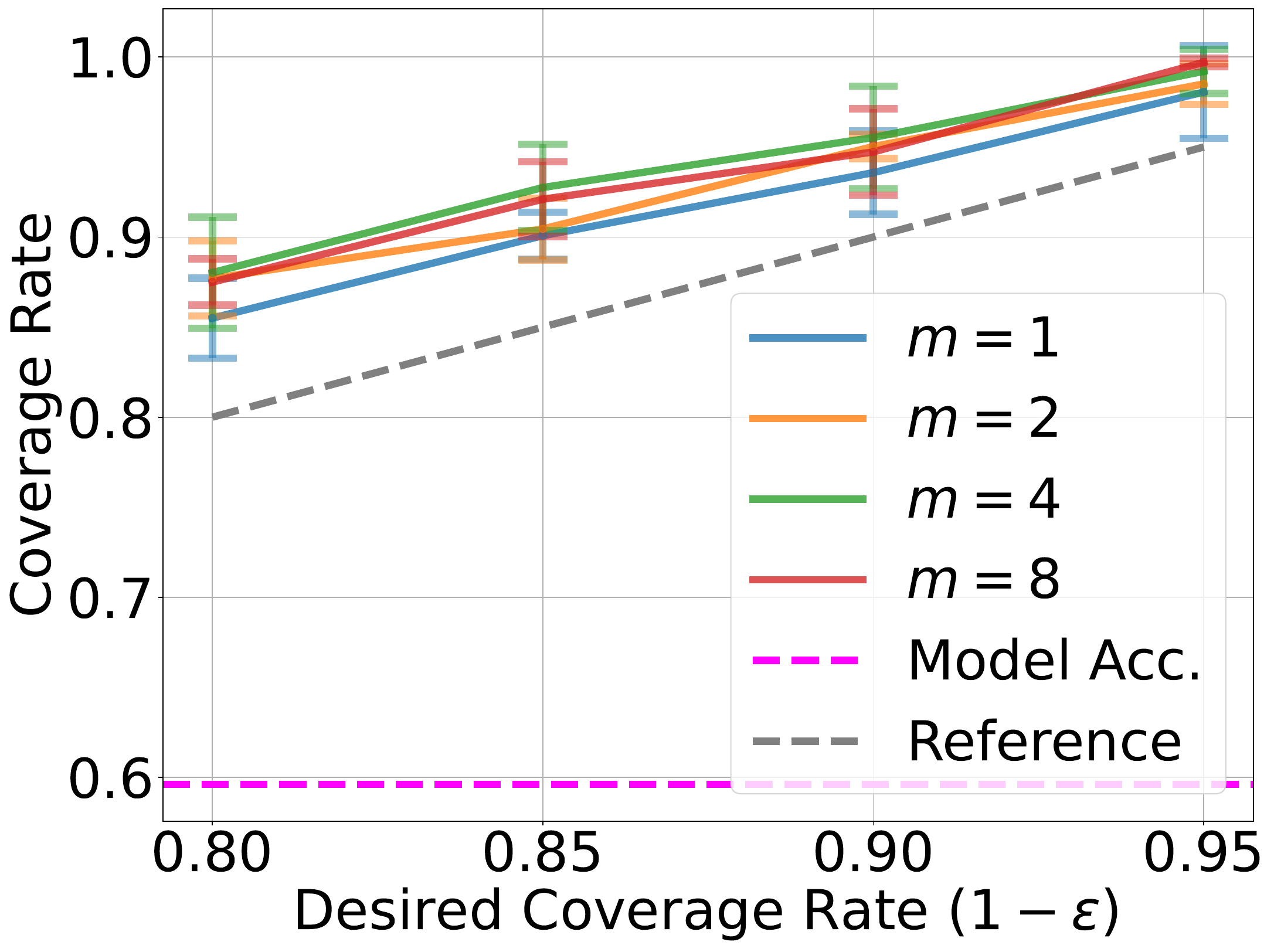}
\caption{PAC ($\delta=0.01$)}
\label{fig:goemotions_baseline_coverage_b}
\end{subfigure}
\begin{subfigure}[h]{0.3\textwidth}
\centering
\includegraphics[width=\textwidth]{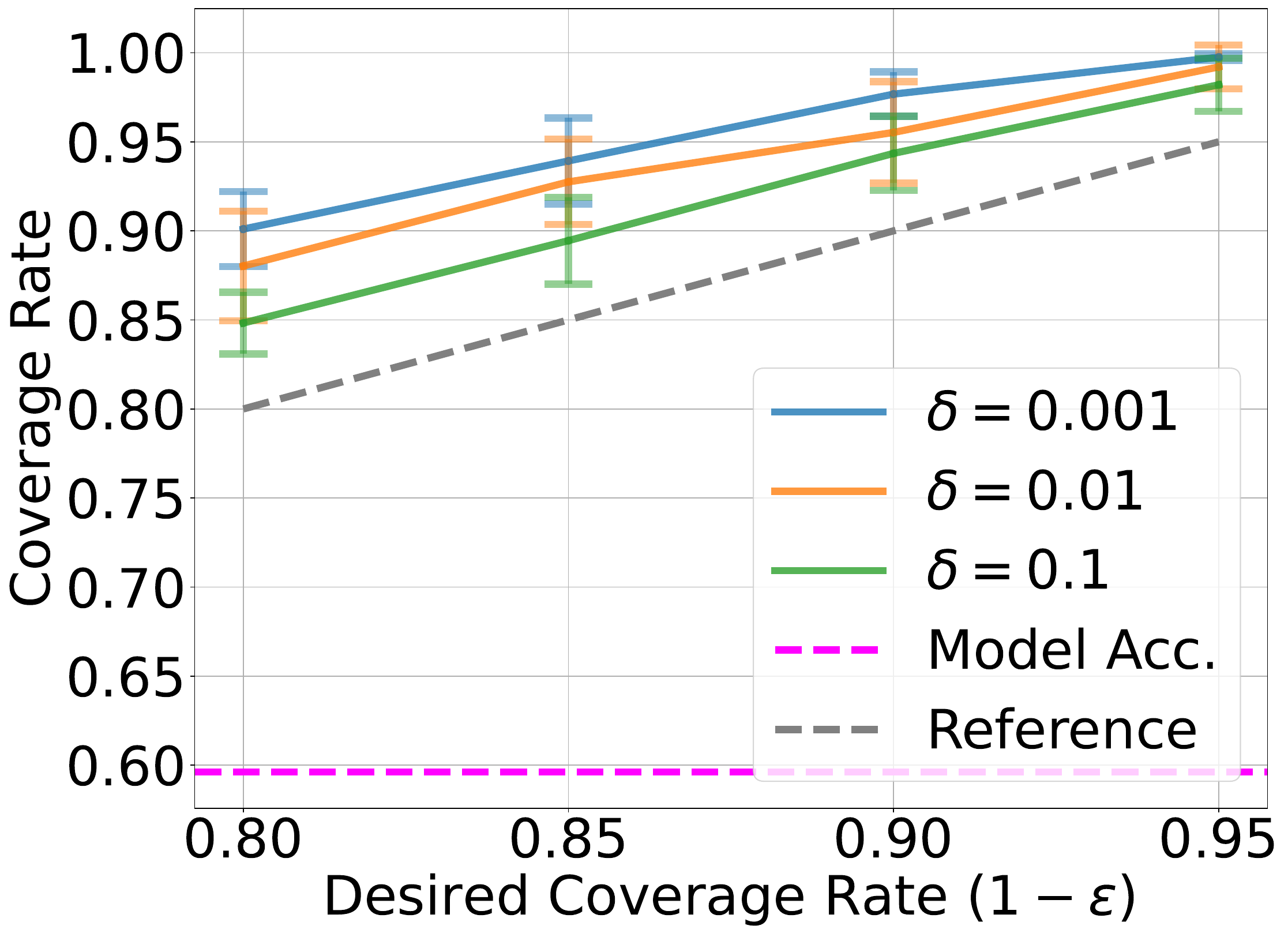}
\caption{PAC ($m=4$)}
\label{fig:goemotions_baseline_coverage_c}
\end{subfigure}
\caption{Coverage rates for the \textbf{GoEmotions task using the \textbf{baseline}} with $n=200$, for (a) marginal guarantee, (b) PAC guarantee with fixed $\delta$ and varying $m$, and (c) PAC guarantee with fixed $m$ and varying $\delta$.}
\label{fig:goemotions_baseline_coverage}
\end{figure}


\begin{figure}[H]
\centering
\begin{subfigure}[h]{0.3\textwidth}
\centering
\includegraphics[width=\textwidth]{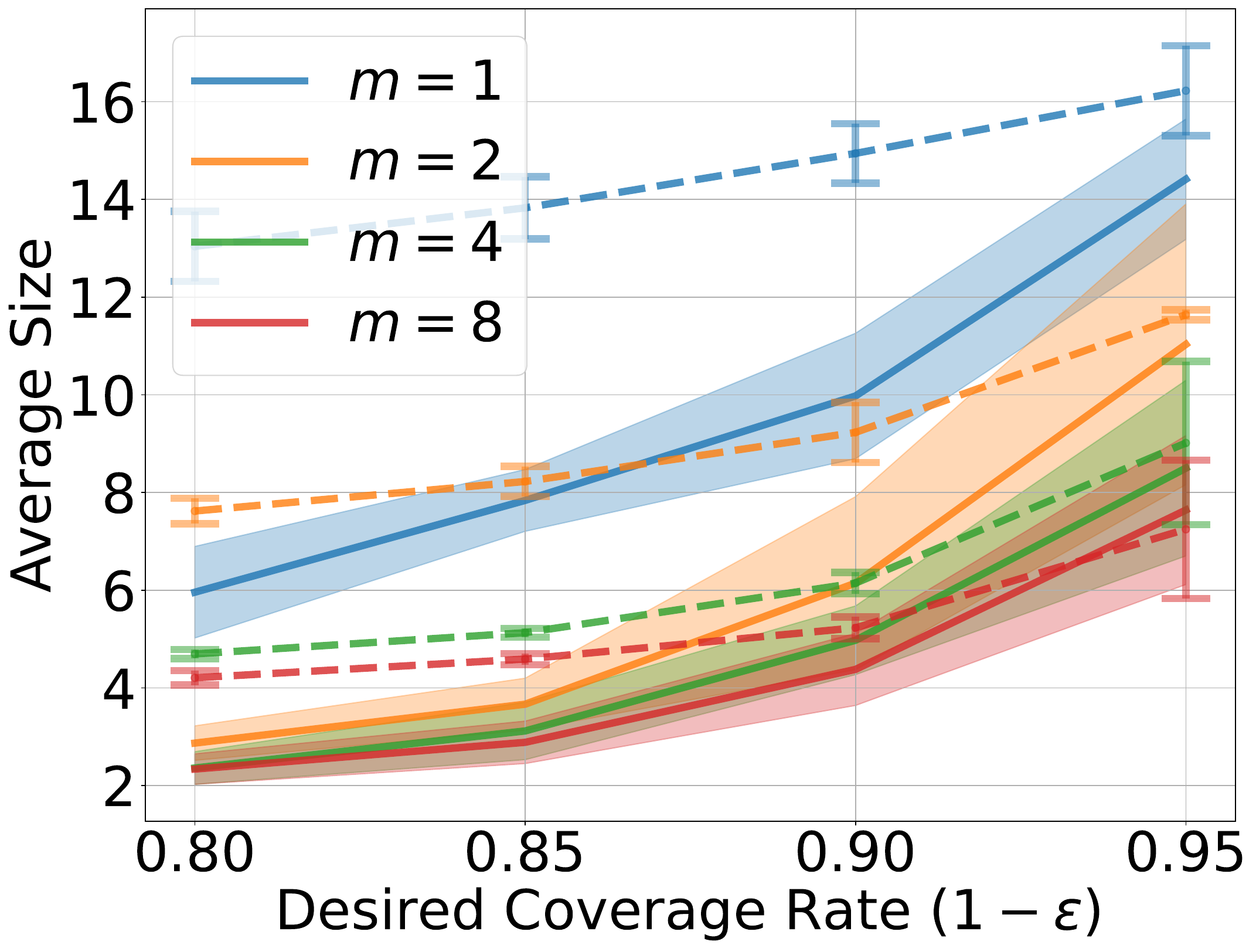}
\caption{Marginal}
\label{fig:goemotions_size_a}
\end{subfigure}
\begin{subfigure}[h]{0.3\textwidth}
\centering
\includegraphics[width=\textwidth]{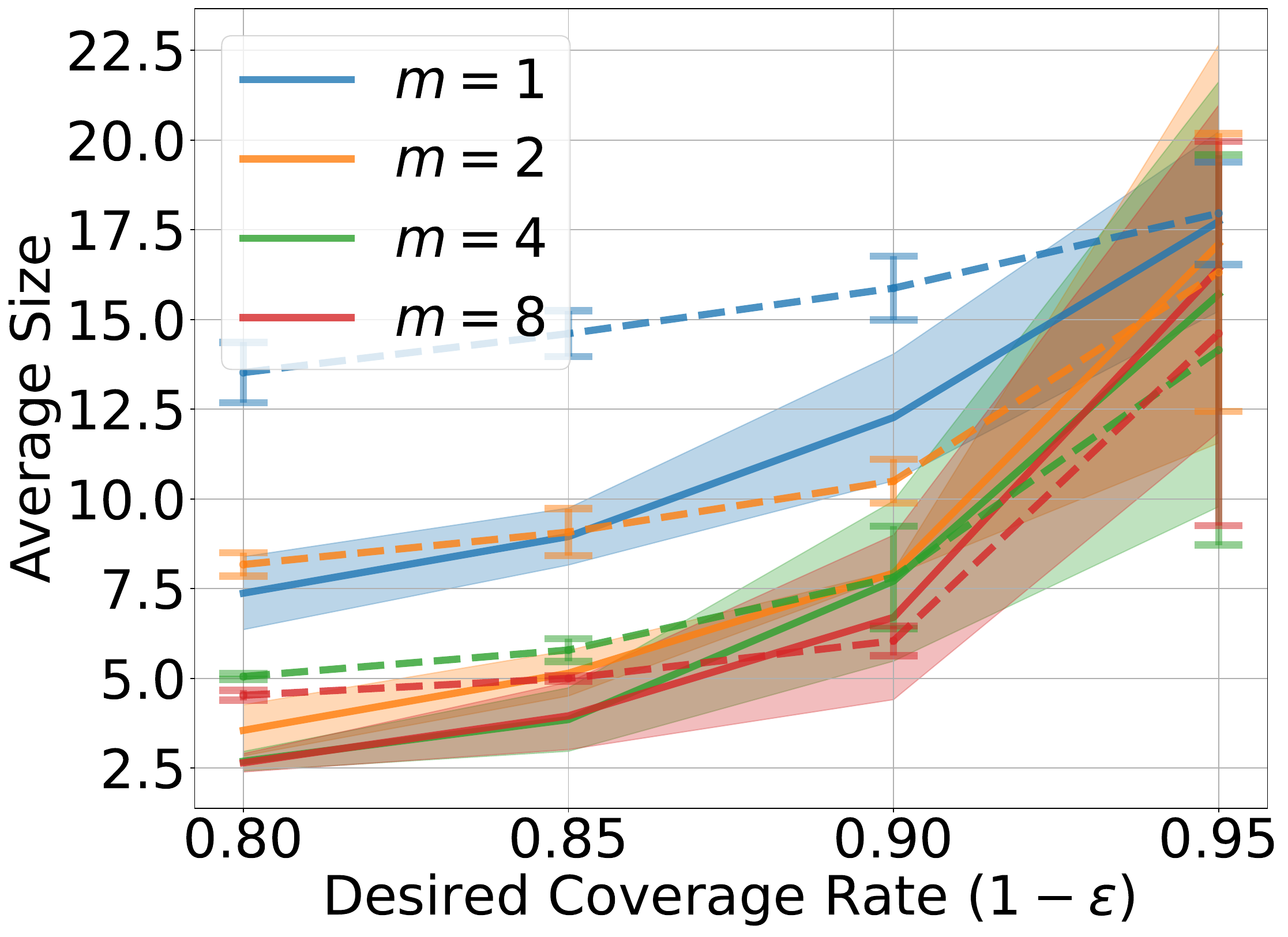}
\caption{PAC ($\delta=0.1$)}
\label{fig:goemotions_size_b}
\end{subfigure}
\begin{subfigure}[h]{0.3\textwidth}
\centering
\includegraphics[width=\textwidth]{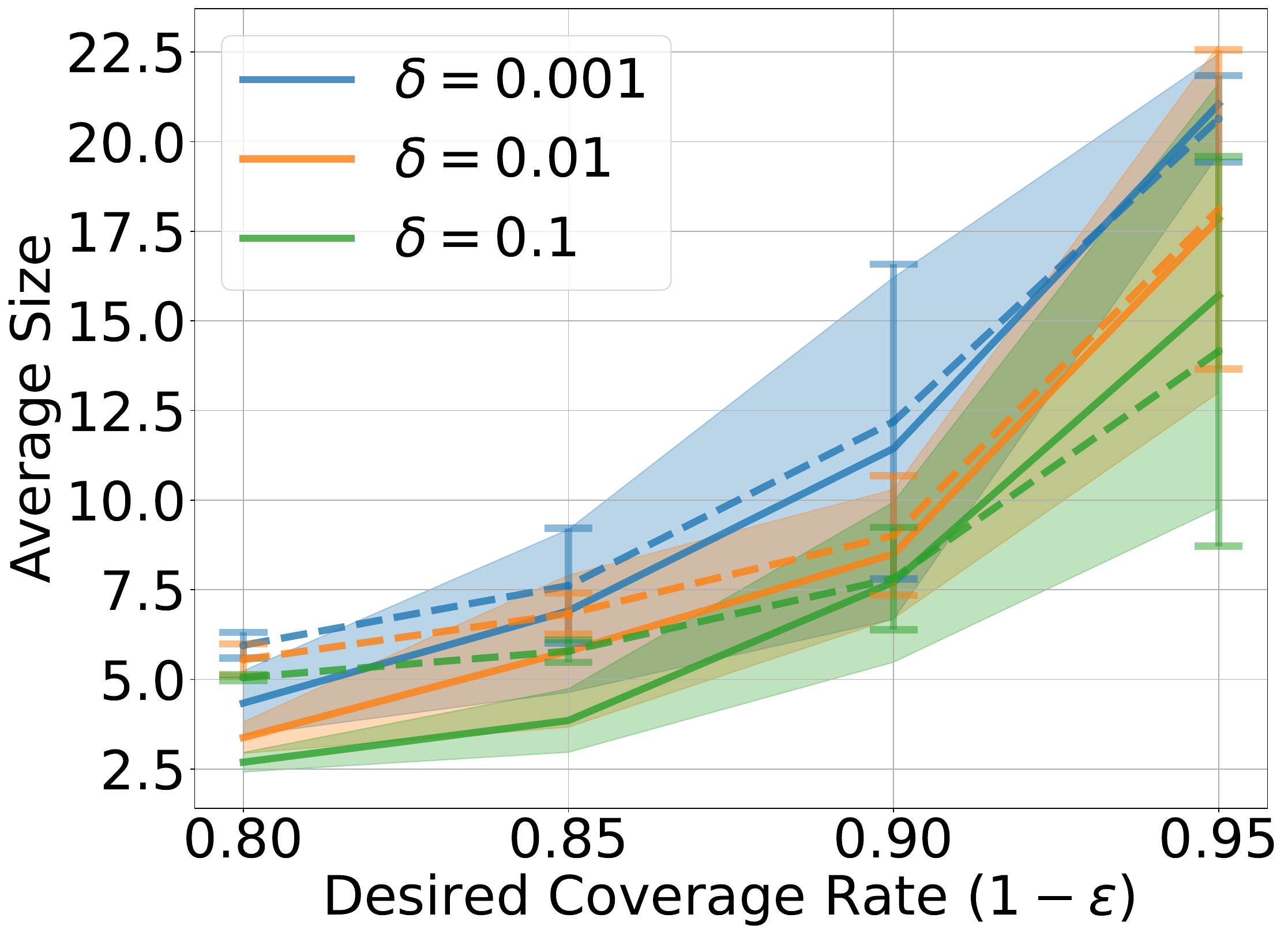}
\caption{PAC ($m=4$)}
\label{fig:goemotions_size_c}
\end{subfigure}
\caption{Prediction set sizes for the \textbf{GoEmotions} task ($n=200$), with the baseline represented by dashed lines, for (a) marginal guarantee, (b) PAC guarantee with fixed $\delta$ and varying $m$, and (c) PAC guarantee with fixed $m$ and varying $\delta$.}
\label{fig:goemotions_size}
\end{figure}

\end{document}